\def\eqref#1{equation~\ref{#1}}
\def\1{\bm{1}}
\DeclareMathAlphabet{\mathsfit}{\encodingdefault}{\sfdefault}{m}{sl}
\SetMathAlphabet{\mathsfit}{bold}{\encodingdefault}{\sfdefault}{bx}{n}
\newtheorem{theorem}{Theorem}
\newtheorem{lemma}[theorem]{Lemma}
\newtheorem{definition}{Definition}
\newcommand{\cmark}{\ding{51}} 
\newcommand{\xmark}{\ding{55}}
\title{Towards Foundation Models for Cryo-ET Subtomogram Analysis}
\author{Runmin Jiang\textsuperscript{1}\thanks{Equal contribution} \quad
  Wanyue Feng\textsuperscript{1}\footnotemark[1] \quad
  Yuntian Yang\textsuperscript{1,2} \quad \\[0.55em]
  \textbf{Shriya Pingulkar\textsuperscript{5}} \quad
  \textbf{Hong Wang\textsuperscript{3}} \quad 
  \textbf{Xi Xiao\textsuperscript{3,4}} \quad 
  \textbf{Xiaoyu Cao\textsuperscript{1}} \quad 
  \textbf{Genpei Zhang\textsuperscript{1}} \quad \\[0.55em]
  \textbf{Xiao Wang\textsuperscript{4}} \quad  
  \textbf{Xiaolong Wu\textsuperscript{1}} \quad 
  \textbf{Tianyang Wang\textsuperscript{3}} \quad
  \textbf{Yang Liu\textsuperscript{1}} \quad
  \textbf{Xingjian Li\textsuperscript{1}} \quad
  \textbf{Min Xu\textsuperscript{1}\thanks{Corresponding author}} \\ \\
  \textsuperscript{1}Carnegie Mellon University \quad
  \textsuperscript{2}Harvard University \quad
  \textsuperscript{3}University of Alabama at Birmingham \quad \\
  \textsuperscript{4}Oak Ridge National Laboratory \quad 
  \textsuperscript{5}K. J. Somaiya College of Engineering
}
\begin{document}

\maketitle

\begin{abstract}

Cryo-electron tomography (cryo-ET) enables in situ visualization of macromolecular structures, where subtomogram analysis tasks such as classification, alignment, and averaging are critical for structural determination. However, effective analysis is hindered by scarce annotations, severe noise, and poor generalization. To address these challenges, we take the first step towards foundation models for cryo-ET subtomograms. First, we introduce CryoEngine, a large-scale synthetic data generator that produces over 904k subtomograms from 452 particle classes for pretraining. Second, we design an Adaptive Phase Tokenization-enhanced Vision Transformer (APT-ViT), which incorporates adaptive phase tokenization as an equivariance-enhancing module that improves robustness to both geometric and semantic variations. Third, we introduce a Noise-Resilient Contrastive Learning (NRCL) strategy to stabilize representation learning under severe noise conditions. Evaluations across 24 synthetic and real datasets demonstrate state-of-the-art (SOTA) performance on all three major subtomogram tasks and strong generalization to unseen datasets, advancing scalable and robust subtomogram analysis in cryo-ET.

\end{abstract}
\section{Introduction}
\label{sec:intro}
Cryo-electron tomography (cryo-ET) is a powerful imaging technique that enables high-resolution visualization of macromolecular structures in their native cellular contexts, and thus plays a critical role in advancing structural and cellular biology \citep{Doerr2017,Ni2021emClarity}. A typical cryo-ET analysis pipeline begins with particle picking \citep{moebel2021deep,liu2024deepetpicker}, where regions of interest are identified from 3D tomograms, followed by subtomogram-level analysises that extract structural and functional insights from these volumes \citep{kim2023computational}. This paper focuses on subtomogram-level analysis, which involves three major tasks: (1) classification, to separate macromolecules into structurally or functionally distinct categories \citep{JimNet}; (2) alignment, to register subtomograms by estimating rotations and translations into a common frame \citep{jiang2025boe}; (3) averaging, to integrate aligned subtomograms for recovering high-resolution structures while suppressing noise and missing wedge artifacts \citep{zeng2020gum}. These tasks are essential for resolving molecular structures at high resolution \citep{chen2019complete,hou2023structure} and for elucidating biological processes such as bacterial effector secretion \citep{chang2014correlated,chang2017architecture} and mammalian neural function \citep{davies2011macromolecular,guo2018situ}.

However, cryo-ET subtomogram analysis remains highly challenging. The difficulties can be attributed to four main factors: (1) the scarcity of high-quality annotated datasets, which limits effective model training; (2) the extremely low signal-to-noise ratios of subtomograms (approximately 0.01–0.1), further complicated by cytoplasmic background and low electron doses \citep{danev2010zernike}; (3) the random orientations and displacements of macromolecular structures, which introduce substantial geometric variability \citep{jiang2025boe}; and (4) structural heterogeneity, as diverse complexes exhibit vastly different shapes, unlike the relatively consistent structures in medical imaging. 

Beyond these general difficulties, each of the three subtomogram analysis tasks faces additional specific challenges. 
For classification, the primary challenge lies in distinguishing subtle structural variations under extremely low signal-to-noise conditions. Traditional template-matching approaches require predefined references and suffer from bias toward known structures \citep{zhan2025aitom,castano2012dynamo}. While recent deep learning methods \citep{JimNet} and unsupervised approaches \citep{Zeng2023HighthroughputCS} have shown improvements, they remain highly sensitive to noise and require carefully designed training strategies. 
For alignment and averaging, the key challenge stems from their interdependence: weak geometric feature extraction causes alignment errors that directly degrade averaging quality. Traditional iterative approaches \citep{xu2012high,chen2013fast} suffer from sensitivity to initialization and convergence to local optima. Recent data-driven methods using deep CNNs \citep{zeng2020gum,jiang2025boe} have improved both speed and accuracy, but still struggle with large prediction errors in noisy environments. While specialized model like BOE-ViT \citep{jiang2025boe} have introduced equivariant designs for geometric handling, these approaches remain task-specific and have not demonstrated joint optimization across subtomogram tasks.

Foundation models have been highly effective in advancing biomedical imaging \citep{zhou2023foundation,wu2023generalist,Chen2024} and structural biology \citep{Zhou2025CryoFM,shen2024draco,yan2024comprehensive}. They offer several advantages for subtomogram analysis: (1) learning generalizable representations from large-scale pretraining, thereby reducing dependence on scarce annotated datasets; (2) demonstrating robustness to noise and distribution shifts across domains; and (3) enabling multi-task and transfer learning that can exploit interdependencies among classification, alignment, and averaging. However, foundation models in cryo-ET remain underexplored, primarily due to the lack of large-scale annotated datasets for pretraining, as well as the need for advanced architectures to capture complex 3D geometric transformations and robust noise-handling strategies.

To address these challenges, we present the first foundation model for cryo-ET subtomogram analysis, comprising three key components to tackle the identified limitations. 
First, to overcome the lack of large-scale annotated datasets, we develop \textbf{CryoEngine}, a biophysically informed synthetic data engine that generates diverse samples across structural and postural variations, producing 904k subtomograms from 452 particle classes to provide the scale required for foundation model pretraining.
Second, to handle complex 3D geometric transformations, we propose \textbf{Adaptive Phase Tokenization-enhanced Vision Transformer} (APT-ViT), which integrates learnable phase selection with spherical steerable convolutions to improve equivariance to both translations and rotations in the SE(3) group, thereby enhancing performance beyond standard ViTs, which is critical for cryo-ET tasks~\citep{jiang2025boe}. 
Third, to develop robust noise-handling mechanism, we introduce a \textbf{Noise-Resilient Contrastive Learning} (NRCL) strategy, which ensures robust representation in latent space under the severe noise conditions characteristic of cryo-ET data. Extensive experiments on classification, alignment, and averaging tasks across 24 synthetic and real cryo-ET datasets demonstrate that our integrated approach achieves SOTA performance with strong generalization and multi-task capabilities.

Our main contributions are as follows:
\begin{itemize}[leftmargin=8pt]
\item We develop \textbf{CryoEngine}, a cryo-ET subtomogram simulation data generation framework grounded in biophysical principles at both the imaging and structural levels. It produces a large-scale dataset of 904k volumes from 452 distinct particle classes, enabling diverse category and pose coverage necessary for foundation model pretraining.

\item We present the \textbf{first foundation model} for cryo-ET subtomogram analysis. As the backbone, we introduce \textbf{APT-ViT}, which extends polyphase decomposition to the SE(3) group and incorporates adaptive phase tokenization with a learnable selection network to enhance shift equivariance, while spherical steerable convolutions provide rotation equivariance. In addition, we propose a novel \textbf{NRCL} strategy, which leverages noise-aware sampling to stabilize representation learning under high-noise conditions.

\item Our framework achieves \textbf{SOTA} performance on classification, alignment, and averaging tasks across 24 out-of-distribution synthetic and real cryo-ET datasets, highlighting its strong generalization and effectiveness as a foundation model for subtomogram analysis.

\end{itemize}

\vspace{-10pt}
\section{Methodology}
\label{sec:method}
\vspace{-6pt}
\noindent \textbf{Overview.} 
We present the first foundation model for cryo-ET subtomogram analysis, built upon three key components: a biophysically informed data synthesis engine (CryoEngine, Sec.~\ref{sec:data}), an enhanced-equivariance backbone (APT-ViT, Sec.~\ref{sec:equi}), and a noise-robust training strategy (NRCL, Sec.~\ref{sec:noise}). These modules respectively enable large-scale pretraining, strengthen the ViT backbone with an equivariant design, and improve resilience to severe noise. The overall pipeline is illustrated in Fig.~\ref{fig:overview}, and the pretrained encoder can serve as a foundation model for diverse downstream subtomogram tasks.

\begin{figure}[!t]
    \centering
    \includegraphics[width=1.0\linewidth]{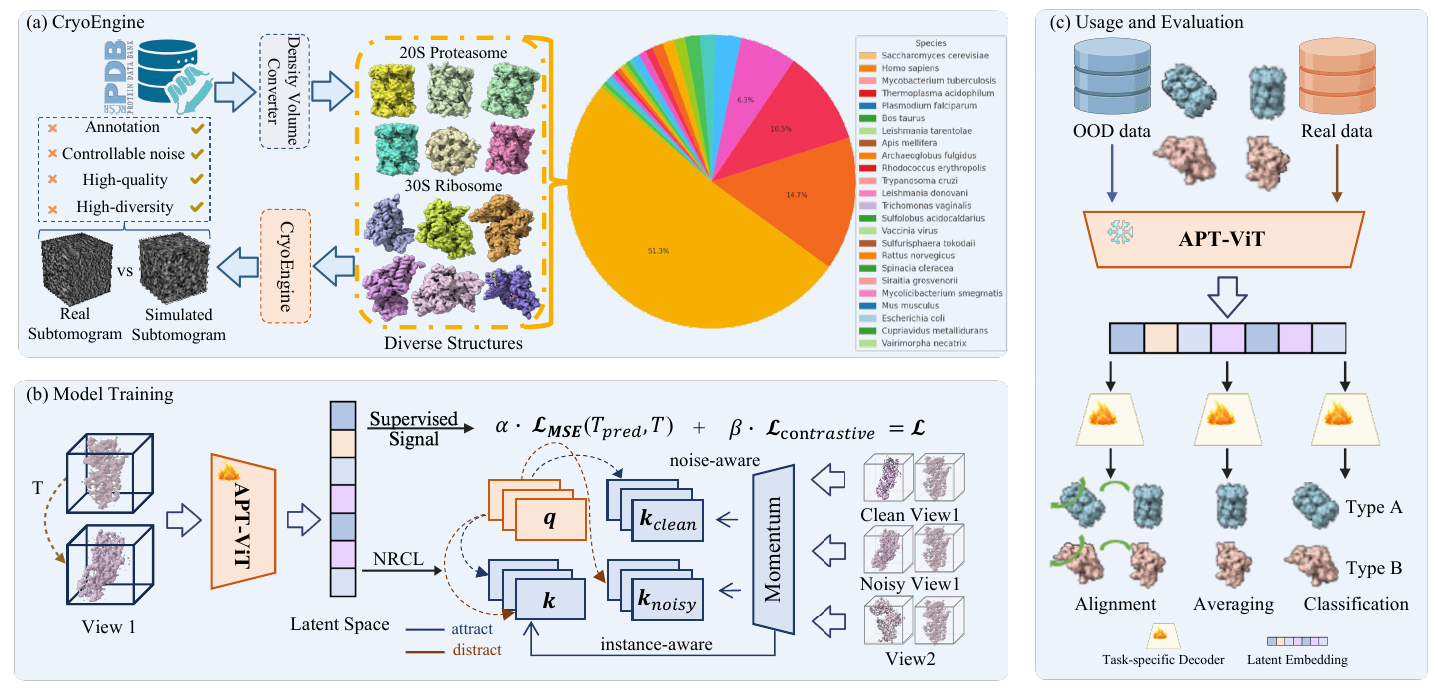}
    \caption{Foundation model overview. (a) CryoEngine. Atomic structures from the RCSB Protein Data Bank are retrieved and converted to density volumes, processed via CryoEngine, and synthesized subtomograms with ground truth and multi-SNR versions. The composition and visualization of the structural data are shown in the diagram. (b) Model training. The input is a subtomogram pair, one derived from the other via a rigid SE(3) transform $T$. Following a MoCo-style paradigm~\citep{MocoV3}, the APT-ViT backbone (Sec.~\ref{sec:equi}) encodes them into a latent embedding $z$, which is passed into two branches: one supervised by $T$, the other optimized with NRCL (Sec.~\ref{sec:noise}). The total loss combines the two branches. (c) Usage and evaluation. The trained encoder serves as a frozen foundation model for downstream tasks, evaluated on both OOD and real data in Sec.~\ref{sec:exp}.}
    \label{fig:overview}
\end{figure}

\subsection{CryoEngine}
\label{sec:data}
\vspace{-2pt}
To address the scarcity, low SNR, structural heterogeneity and lack of ground truth of real training data, we develop CryoEngine, a synthetic data engine that systematically simulates large batches of diverse subtomograms. Our engine employs a multi-stage pipeline by replicating each stage of cryo-ET imaging. The engine integrates structural fidelity, pose diversity, and imaging realism in a unified framework tailored for representation learning. As illustrated in Fig.~\ref{fig:data}, atomic structures are first converted into density volumes and then spatially and rotationally distributed using a density- and orientation-controlled strategy that maximizes utilization while ensuring each extracted subvolume contains a single, isolated particle with uniformly sampled pose coverage across SO(3) for alignment training. The simulated particles are projected into tilt series with realistic microscope geometry and reconstructed via weighted back-projection. From the reconstructed volumes, 32³-voxel subtomograms were extracted with position-orientation ground truth metadata preservation and followed by calibrated noise generation. Full simulation specifications are provided in Appendix~\ref{sec:dataset-composition}.

Built on CryoEngine, our synthetic dataset contains two well-characterized complexes, the 20S proteasome core particle and the 30S ribosomal subunit, yielding 452 distinct particle classes in total, and each class yields 2k subtomograms spanning dense and diverse rotational states, replicated across different SNR levels. The dataset captures a wide range of biophysical variability, from large, symmetric assemblies to compact, asymmetric folds, and reflects the compositional diversity observed in situ. Details and visualizations about the 904k subtomograms can be found in Appendix~\ref{sec:dataset-composition}.

\begin{figure}[H]
    \centering
    \includegraphics[width=1\linewidth]{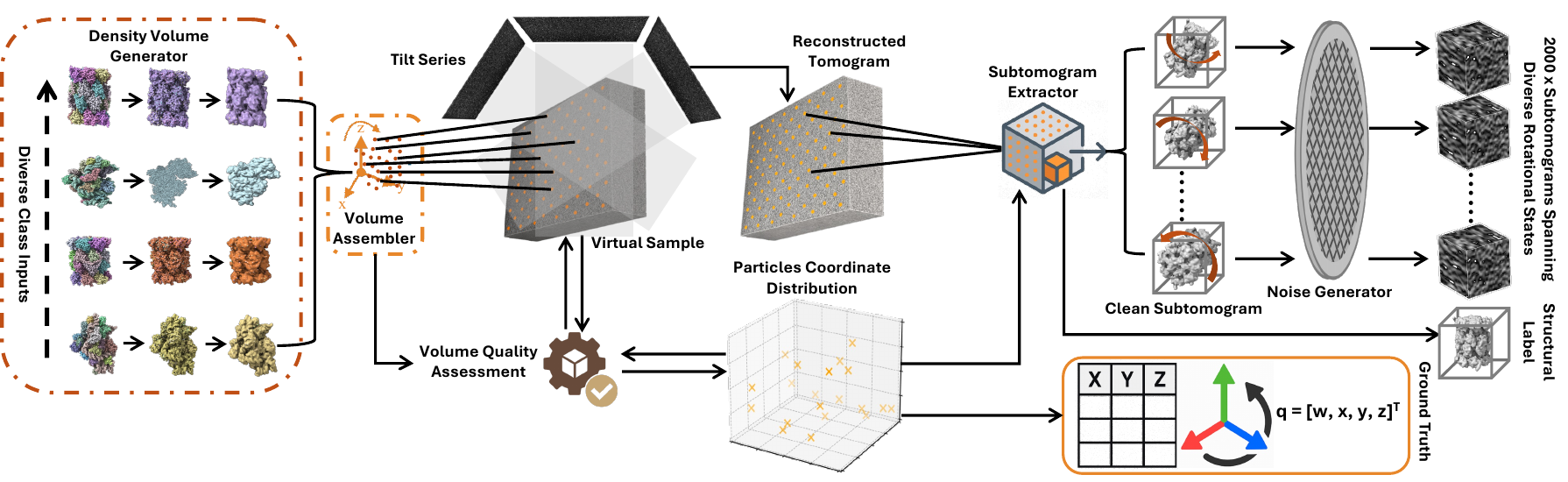}
    \caption {CryoEngine architecture. Diverse atomic structures are converted into density maps, embedded into a virtual sample, projected into a tilt series, and reconstructed into a tomogram. \(32^3\) subtomograms are then extracted, augmented with calibrated noise, and paired with ground-truth.}
    \label{fig:data}
\end{figure}

\vspace{-12pt}

\subsection{APT-ViT}
\label{sec:equi}
\vspace{-2pt}
\noindent \textbf{Overall Architecture.} This work introduces APT-ViT, which serves as the backbone of the foundation model for cryo-ET subtomogram analysis, as illustrated in Fig.~\ref{fig:equi}. The core innovation of APT-ViT lies in its novel \textit{adaptive phase tokenization} mechanism that enhances SE(3) equivariance properties of ViTs. The architecture integrates APT into a standard ViT backbone with task-specific output heads, with the complete APT workflow provided in Algorithm \ref{alg:adaptive_phase_tokenization}.

\begin{figure}[htbp]
    \centering
    \includegraphics[width=1.0\linewidth]{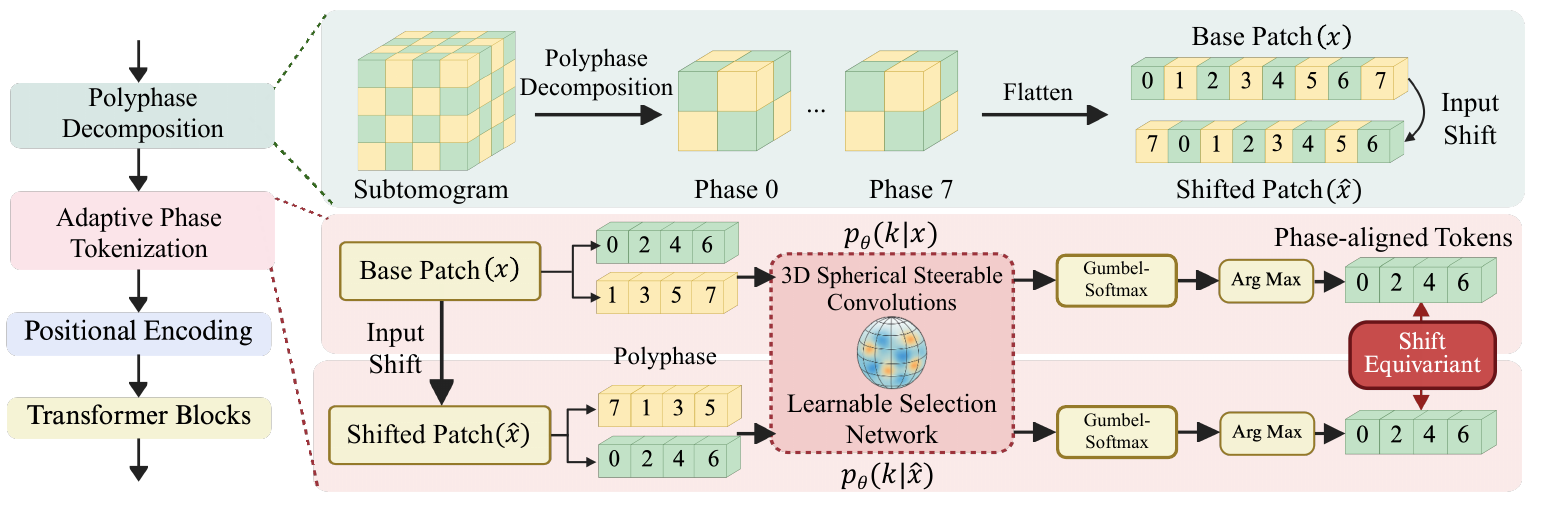}
    \caption{Overview of APT-ViT architecture. Input subtomograms undergo polyphase decomposition to generate multiple phase components with different spatial offsets. A learnable selection network based on 3D spherical steerable convolutions evaluates all phase components and produces selection probabilities to select the optimal phase-aligned tokens. The selected tokens are then processed by transformer blocks to produce refined embeddings for downstream subtomogram analysis tasks.}
    \label{fig:equi}
\end{figure}

\noindent \textbf{Polyphase Decomposition in 3D Space.}
Inspired by recent work on shift-equivariant ViTs for 2D images~\citep{ding2023reviving,rojas2024making} and equivariant CNN techniques~\citep{chaman2021truly,rojas2022learnable}, we reformulate volume tokenization using polyphase decomposition and extend it to 3D space to enable adaptive phase selection for equivariant token generation. 
For an input volume $\mathbf{X} \in \mathbb{R}^{B \times C \times D \times H \times W}$ and a patch size $\mathbf{s} = (s_D, s_H, s_W)$, the polyphase decomposition operator $\Psi$ generates a set of phase components:
\begin{equation}
\Psi(\mathbf{X})_{(p,q,r)} = \left\{ \mathbf{X}_{:, :, i \cdot s_D + p, j \cdot s_H + q, k \cdot s_W + r} \mid i, j, k \in \mathbb{Z}_{\geq 0} \right\}
\label{eq:1}
\end{equation}
where $(p,q,r)$ is the phase offset, with $p \in \{0, \dots, s_D-1\}$, and similarly for $q$ and $r$. This operation partitions the input into $s_D \times s_H \times s_W$ sets of non-overlapping patches, each corresponding to a different spatial offset.

\begin{algorithm2e}[t]
\caption{Adaptive Phase Tokenization}
\label{alg:adaptive_phase_tokenization}
\SetAlgoLined
\KwIn{Subtomogram volume $\mathbf{X} \in \mathbb{R}^{B \times C \times D \times H \times W}$, patch size $\mathbf{s} = (s_D, s_H, s_W)$, APT parameters $\theta$, temperature $t$, mode flag ($\text{training/inference}$)}
\KwOut{Optimally phased tokens $\mathbf{Z} \in \mathbb{R}^{B \times C \times \lfloor D/s_D \rfloor \times \lfloor H/s_H \rfloor \times \lfloor W/s_W \rfloor}$}
Decompose $\mathbf{X}$ into polyphase components $\{\Psi(\mathbf{X})_{(p,q,r)}\}_{p,q,r=0}^{s_D-1,s_H-1,s_W-1}$\;
\For{$(p,q,r) \in \{0,...,s_D-1\} \times \{0,...,s_H-1\} \times \{0,...,s_W-1\}$}{
   Compute maximum spherical harmonic degree $J_{\max}(r) \leftarrow \lfloor\frac{\pi r}{\Delta x}\rfloor$\;
   Construct steerable filter $\tilde{f}_\theta(\mathbf{x}) \leftarrow \sum_{J=0}^{J_{\max}} \sum_{m=-J}^{J} R_J(||\mathbf{x}||) \cdot Y_J^m\left(\frac{\mathbf{x}}{||\mathbf{x}||}\right) \cdot w_{J,m}$\;
   Extract features $h_{(p,q,r)} \leftarrow \tilde{f}_\theta(\Psi(\mathbf{X})_{(p,q,r)})$\;
   Apply global pooling $\ell_{(p,q,r)} \leftarrow \frac{1}{|V|} \sum_{v \in V} h_{(p,q,r)}[v]$\;
}
Compute selection probabilities $p_{(p,q,r)} \leftarrow \frac{\exp(\ell_{(p,q,r)})}{\sum_{(p',q',r')} \exp(\ell_{(p',q',r')})}$\;
\If{$\text{training}$}{ \tcp{Training mode}
   Sample relaxed probabilities $\tilde{p}_{(p,q,r)} \sim \text{Gumbel-Softmax}(p{(p,q,r)}, t)$\;
   Select optimal phase offset $(p^*, q^*, r^*) \leftarrow \arg\max{(p,q,r)} \tilde{p}_{(p,q,r)}$\;
}
\Else{ \tcp{Inference mode}
   Deterministically select phase offset $(p^*, q^*, r^*) \leftarrow \arg\max{(p,q,r)} p{(p,q,r)}$\;
}
Extract tokens from selected polyphase component $\mathbf{Z} \leftarrow \Psi(\mathbf{X})_{(p^*,q^*,r^*)}$\;
\end{algorithm2e}

\noindent \textbf{Adaptive Phase Tokenization.}
Instead of using a fixed phase, which is sensitive to input shifts, our APT module learns to select the optimal phase component that provides a consistent reference frame under SE(3) transformations. The output of the module is the optimally selected phase:
\begin{equation}
\text{APT}(\mathbf{X}) = \Psi(\mathbf{X})_{(p^*,q^*,r^*)}
\label{eq:2}
\end{equation}
The optimal phase $(p^*,q^*,r^*)$ is determined by maximizing a selection probability $p_\theta(k|\mathbf{X})$, which is modeled by a learnable selection network $f_\theta$:
\begin{equation}
p_\theta(k=(p,q,r)|\mathbf{X}) \triangleq \frac{\exp[f_\theta(\Psi(\mathbf{X})_{(p,q,r)})]}{\sum_{(p',q',r')} \exp[f_\theta(\Psi(\mathbf{X})_{(p',q',r')})]}
\label{eq:3}
\end{equation}

To ensure the selection process itself is equivariant, we design $f_\theta$ using rotation-equivariant spherical steerable convolutions \citep{weiler20183d}, followed by global average pooling:
\begin{equation}
f_\theta(\Psi(\mathbf{X})_{(p,q,r)}) = \frac{1}{|V|} \sum_{v \in V} \tilde{f}_\theta(\Psi(\mathbf{X})_{(p,q,r)})[v]
\label{eq:4}
\end{equation}
where $\tilde{f}_\theta$ is a spherical steerable convolutional filter and $V$ is the set of spatial locations. The filter is defined as:
\begin{equation}
\tilde{f}_\theta(\mathbf{x}) = \sum_{J=0}^{J_{\max}} \sum_{m=-J}^{J} R_J(||\mathbf{x}||) \cdot Y_J^m\left(\frac{\mathbf{x}}{||\mathbf{x}||}\right) \cdot w_{J,m}
\label{eq:5}
\end{equation}
Here, $Y_J^m$ are the spherical harmonics, $R_J$ are learnable radial functions, and $w_{J,m}$ are learnable weights. This construction guarantees that the feature extraction is rotation-equivariant.

While the selection network outputs probabilities over phase components, the $\arg\max$ for choosing the optimal phase is non-differentiable. We employ Gumbel-Softmax \citep{jang2016categorical} for differentiable categorical sampling:
\begin{equation}
\tilde{p}_{(p,q,r)} \sim \text{Gumbel-Softmax}(p_\theta(k=(p,q,r)|\mathbf{X}), t)
\label{eq:6}
\end{equation}
where $t$ is the temperature parameter. During forward pass, we apply $\arg\max$ to obtain discrete phase indices:
\begin{equation}
(p^*, q^*, r^*) = \underset{(p,q,r)}{\arg\max} \, \tilde{p}_{(p,q,r)}
\label{eq:7}
\end{equation}

This enables gradient flow via Gumbel-Softmax during backpropagation while maintaining discrete selection. The selected polyphase component $\Psi(\mathbf{X})_{(p^*,q^*,r^*)}$ forms the token representation $\mathbf{Z}$, which passes through the patch embedding layer and shift-equivariant positional encoding~\citep{jiang2025boe} before entering the ViT blocks. Theoretical analysis of the equivariance properties of the core APT mechanism is provided in Appendix~\ref{app:the}.

\vspace{-3pt}
\subsection{Noise-resilient Contrastive Learning Strategy}
\label{sec:noise}
\vspace{-2pt}
In this work, we propose NRCL, a contrastive learning framework tailored for cryo-ET subtomograms. A key feature of our approach is a \textit{noise-aware sampling} strategy that constructs positive and negative pairs. This strategy follows the contrastive principle of pulling similar embeddings closer while pushing dissimilar ones apart in the latent space. Building on this design, we further introduce a contrastive loss that enhances robustness against severe noise, which can be formulated as:
\begin{equation}
\mathcal{L}_{\text{contrastive}}=\mathcal{L}_{\text{instance}}+\mathcal{L}_{\text{noise}}
\label{eq:contrastive_total}
\end{equation}
  
We define the inputs as pairs of subtomograms $\mathbf{I}=(X_1,X)$ where $X_1=TX$ represents a transformed version of the original subtomogram $X \in \mathbb{R}^{B \times C \times D \times H \times W}$, with $T$ denoting spatial transformations applied to the 3D volume. The whole training paradigm follows the MoCo v3-style, with one online base encoder and one momentum encoder which delays the update for a robust reference. For the workflow of the whole strategy, please see Appendix~\ref{sec:con_pseudo}.

\noindent \textbf{Instance Discrimination.}
In instance discrimination, similar to the common practice in contrastive learning where positive samples are constructed using different augmentations of the same instance\citep{MocoV3,NEURIPS2020_f3ada80d, chen2020simclr}, we include $\mathbf{I^+}=(X_2,X)$ as a positive sample where $X_2=T'X$. The negative samples $\mathbf{I^-}$ are the other samples in the same mini-batch.

Inspired by RINCE\citep{chuang2022robust}, we introduce a symmetric exponential loss to alleviate the effect of inappropriate negative samples in the mini-batch of size B and stabilize the training. Let $f$ denotes the encoder, $\tau$ denotes the temperature and $c$ denotes the parameter used to control the weight of positive and negative samples, the equation is:
\begin{equation}
\mathcal{L}_{\text{sym}} = -\frac{e^{c \cdot s^+}}{c} + \frac{1}{c} \log(e^{c \cdot s^+} + e^{c \cdot s^-}), \text{ }\text{where } s^\pm = \frac{z^\top z^\pm}{\tau}, z=f(\mathbf{I}) , z^+=f(\mathbf{I^+}),\text{and }z^-=f(\mathbf{I^-})
\label{eq:sym}
\end{equation}
To further enhance the instance discrimination capability and robustness to noise, we adopt sinkhorn wasserstein distance as an additional term. The Sinkhorn-Wasserstein distance is defined with a cost matrix $\mathbf{C}$ based on squared Euclidean pairwise distances and uniform marginal distributions.
\begin{equation}
\mathcal{L}_{\text{wass}} = \text{Sinkhorn}_\varepsilon(z, z^+)=\min_{\mathbf{P}} \langle \mathbf{C}^p, \mathbf{P} \rangle,\text{ where }\sum_{j} \mathbf{P}_{ij} = \frac{1}{B},\sum_{i} \mathbf{P}_{ij} =\frac{1}{B}
\label{eq:wass}
\end{equation}
Let $\lambda_W$ denotes the weight of $\mathcal{L}_{\text{wass}}$, the complete equation for instance discrimination is:
\begin{equation}
\mathcal{L}_{\text{instance}}=\mathcal{L}_{\text{sym}}+\lambda_W \cdot \mathcal{L}_{\text{wass}}
\label{eq:Instance_total}
\end{equation}

\noindent \textbf{Noise-aware Sampling.} As in Fig.~\ref{fig:overview}b, the noise-aware sampling would define the negative sample as an extremely noisy version of the input, while the positive sample be a clean version. Since CryoEngine is able to generate subtomograms at controllable noise levels, the negative and positive sample pairs could be readily obtained by changing the parameter for the noise generator in Fig.~\ref{fig:data}. This makes it feasible to define positive and negative pairs not only through the commonly practiced instance discrimination, but also by explicitly leveraging noise perturbations. The intuition of NRCL is to drag the latent embedding of the input closer to that of the clean ground truth subtomograms while pushing it away from the noisy ones. Since each input corresponds to one clean positive and one noisy negative sample under full control, so balancing is not required. Thus, we only adopt the classic InfoNCE loss. Note that $z^+$ and $z^-$ are computed from positive and negative samples generated via noise-aware sampling here.
\begin{equation}
\mathcal{L}_{\text{noise}}= \mathcal{L}_{\text{InfoNCE}}= -\log \frac{\exp(z^\top z^+/\tau)}{\exp(z^\top z^+/\tau) + \exp(z^\top z^-/\tau)}
\label{eq:infonce}
\end{equation}

\section{Experiments}
\label{sec:exp}
\vspace{-3pt}
\subsection{Experimental Setup}
\vspace{-2pt}
\noindent \textbf{CryoEngine-generated Data for Pre-training.}
As stated in Sec.~\ref{sec:data}, the dataset for pretraining is constructed by CryoEngine. A total of 904k pairs of subtomograms in 452 different PDB IDs are used for the pretrain process. The subtomograms have a resolution of 10Å, each with a size of 32×32×32 and the target structure is guaranteed to be located at the center of the volume. The selected structures are mainly proteasomes and ribosomes, which are among the most commonly analyzed categories in cryo-ET imaging by biologists. We want to specify that all of the test datasets in downstream analysis are OOD data, where \textbf{none} of them has intersection with the cryoEngine generated data. All the training-based baseline models undergo the \textbf{same} finetune procedure as our method in each downstream task.\\

\vspace{-13pt}
\noindent \textbf{Pre-training Details.} We explore our APT-ViT-based model performance pre-trained with the noise-resilient contrastive learning(NRCL) strategy stated in Sec.~\ref{sec:method}. The encoder is an APT-ViT composed of two embedding modules equipped with the APT design to process each subtomogram independently in the input pair, a simple module for embedding fusion, and four transformer blocks with embedding dimension 120. Training is conducted for 200 epochs using a batch size of 2048. The learning rate follows the square-root scaling rule. The weight decay is fixed at $1\times10^{-4}$. For the contrastive learning framework, we use a temperature parameter of 0.1 and a momentum coefficient of 0.99 for the momentum encoder.

\subsection{Classification}
\vspace{-2pt}
\noindent \textbf{Finetune Implementation.} We attach a lightweight classification head, which is a 3-layer MLP, to the frozen encoder for the classification task.
The test dataset is a 5-class cryo-ET benchmark dataset \citep{JimNet}. For each test dataset corresponding to a specific SNR level, we fine-tune the head for 40 epochs using a combination of SNR 100 samples and 10\% of the samples from the target low-SNR data, which is split into training, validation, and test sets using a 1:1:8 ratio. More experiment settings and introduction of datasets and baselines are provided in Appendix \ref{sec:cls}.\\

\vspace{-13pt}
\noindent \textbf{Results.} 
To evaluate downstream classification, we compare against a diverse set of classical and SOTA foundation models, covering both supervised and self-supervised paradigms, and including ViT-based as well as non-ViT architectures. As shown in Table~\ref{tab:classification_main}, our method consistently outperforms all baselines across SNR levels, with particularly large gains at SNR 0.05 and 0.03 (around 50\%), which are typical conditions in cryo-ET. Moreover, we provide additional experimental results across five datasets with varying SNRs in Tables~\ref{tab:classification_5LQW}-\ref{tab:classification_6A5L} in Appendix~\ref{sec:recall}. These results highlight the complementary roles of CryoEngine, APT-ViT and NRCL in driving the overall improvement.

\begin{table}[ht]
\centering
\setlength{\tabcolsep}{3pt}
\caption{Classification accuracy$\uparrow$ (\%) for overall datasets across different SNR levels. All encoders of ours and the baselines are kept \textbf{frozen}. Best results are bold, second best underlined.}
\label{tab:classification_main}
\adjustbox{max width=0.9\textwidth}{
\begin{tabular}{l|c|c|cccc}
\toprule
\textbf{Pre-train Data} & \textbf{Method} & \textbf{Model} 
& \textbf{SNR 0.1} & \textbf{SNR 0.05} & \textbf{SNR 0.03} & \textbf{SNR 0.01} \\ 
\midrule
\multirow{8}{*}{ImageNet} 
& \multirow{6}{*}{Supervised} & ConvNeXt v1 & 32.52 & 27.85 & 26.07 & 22.08\\
&  & ConvNeXt v2 & 20.37 & 20.00 & 20.03 & 20.10\\
&  & ViT-B       & 26.64 & 23.14 & 21.90 & 20.52\\
&  & PVT v2      & 24.35 & 21.05 & 20.92 & 20.00\\
&  & SwinViT-S   & 51.76 & 33.12 & 27.22 & 22.00\\
&  & SwinViT-B   & 51.22 & 35.18 & 27.98 & 20.06\\
\cmidrule(lr){2-7}
& \multirow{2}{*}{Self-supervised} & Moco v3  & 41.00 & 29.84 & 25.28 & 21.24\\
&  & MAE         & 44.70 & 31.14 & 27.82 & \underline{23.76}\\
\midrule
LVD-142M & Self-supervised & DINO v2 & 32.41 & 25.27 & 23.34 & 21.22\\
\midrule
\multirow{3}{*}{CryoEngine (\textbf{Ours})} 
& Supervised & ViT-B & 31.38 & 23.75 & 24.15 & 19.85\\
& MAE        & ViT-B &  \underline{56.93} & \underline{41.13} & \underline{30.65} & 22.40\\
& NRCL(\textbf{Ours}) & APT-ViT (\textbf{Ours}) 
& \textbf{67.42} & \textbf{53.13} & \textbf{40.10} & \textbf{27.50}\\
\bottomrule
\end{tabular}
}
\end{table}

\subsection{Alignment}
\vspace{-2pt}
\noindent \textbf{Finetune Implementation.}
The alignment fine-tuning phase employs reduced augmentation ranges.

This refined augmentation scope allows the model to specialize on subtle misalignments that are more representative of real-world scenarios where large transformations are less common. For each fine-tuning epoch, we apply 5 augmentations per sample with a batch size 4. The transformation head is a 3-layer MLP. The dataset used for alignment testing is the same as the one used for classification. For detailed implementation and introduction of baselines, see Appendix~\ref{sec:align}.\\

\vspace{-13pt}
\noindent \textbf{Results.}
We benchmark against traditional algorithms, CNN-based models, ViT-based methods, and equivariant networks. To validate the effectiveness of our equivariant design, we include three representative point-cloud–based models—SE(3)-Transformer~\citep{Se3Transformer}, ConDor~\citep{condor}, and Equi-Pose~\citep{li2021leveraging}—as well as the equivariance-enhanced ViT baseline, BOE-ViT~\citep{jiang2025boe}. As shown in Table~\ref{tab:alignment_main}, APT-ViT achieves the lowest alignment errors across all SNRs, with strong robustness to noise. Moreover, additional results across diverse macromolecular structures with varying SNRs are provided in Appendix~\ref{sec:5eq} (Tables~\ref{tab:align-1I6V}-\ref{tab:align-5mpa}). At the extremely low SNR of 0.01, our method substantially reduces both rotation and translation errors compared to traditional and equivariant baselines, and achieves over 30\% lower translation error than BOE-ViT, underscoring the superiority of the APT-ViT design.

\vspace{-12pt}
\begin{figure}[htbp]
    \centering
    \begin{subfigure}[t]{0.61\textwidth}
    \includegraphics[width=\textwidth]{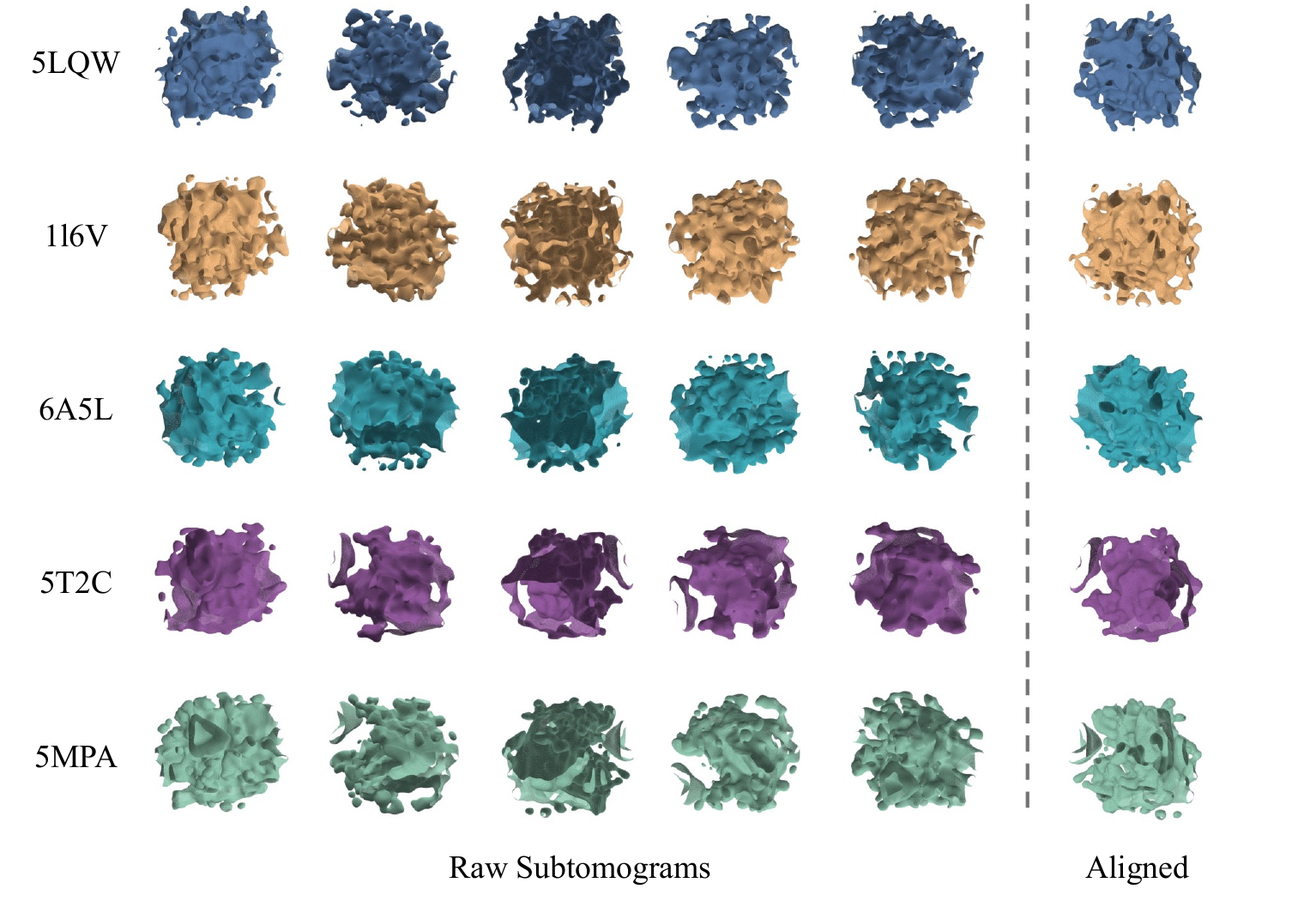}
    \caption{Visualization of the aligned particles.}
    \label{fig:alignment_effect1}
    \end{subfigure}
    \begin{subfigure}[t]{0.37\textwidth}
    \includegraphics[width=\textwidth]{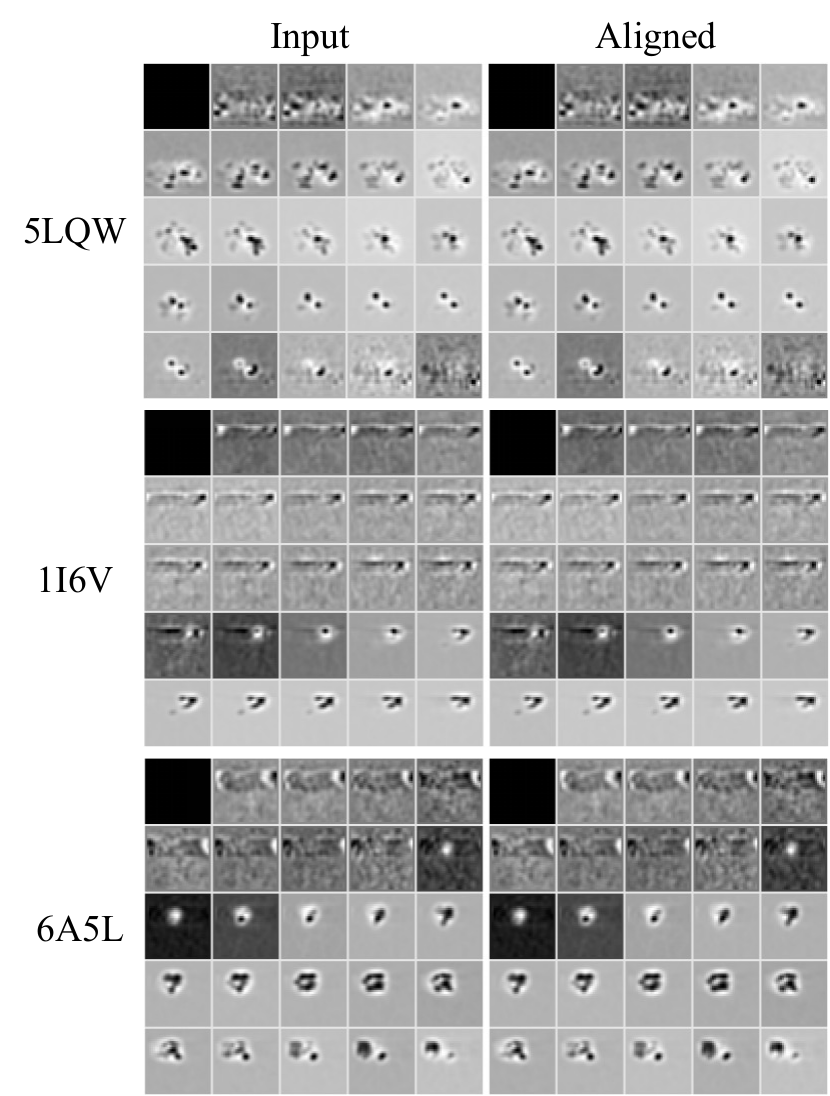}
    \caption{2D visualization of subtomograms.}
    \label{fig:alignment_effect2}
    \end{subfigure}
    \caption{Visualization of alignment results, showing improved structural consistency in both 3D particles (a) and 2D subtomogram slices (b).}
    \label{fig:align}
\end{figure}
\vspace{-10pt}
\begin{table}[ht]
\centering
\setlength{\tabcolsep}{3pt}
\caption{Subtomogram alignment accuracy at different SNR levels. Values are mean$\downarrow \pm$ std of rotation and translation errors. Best results are bold, second best underlined.
}
\adjustbox{max width=\textwidth}{
\begin{tabular}{l *{4}{cc}}
\toprule
\multirow{2}{*}{\textbf{Method}} 
& \multicolumn{2}{c}{\textbf{SNR 0.1}} & \multicolumn{2}{c}{\textbf{SNR 0.05}} 
& \multicolumn{2}{c}{\textbf{SNR 0.03}} & \multicolumn{2}{c}{\textbf{SNR 0.01}} \\
\cmidrule(lr){2-3}\cmidrule(lr){4-5}\cmidrule(lr){6-7}\cmidrule(lr){8-9}
& \textbf{Rotation} & \textbf{Translation} & \textbf{Rotation} & \textbf{Translation} & \textbf{Rotation} & \textbf{Translation} & \textbf{Rotation} & \textbf{Translation} \\
\midrule
H-T align      & 1.22$\pm$1.07 & 4.76$\pm$4.56 & 1.93$\pm$0.98 & 7.26$\pm$4.77 & 2.22$\pm$0.77 & 8.86$\pm$4.72 & 2.38$\pm$0.57 & 11.33$\pm$5.02 \\
F-A align      & 1.34$\pm$1.13 & 5.39$\pm$4.90 & 1.95$\pm$0.98 & 7.54$\pm$4.94 & 2.22$\pm$0.77 & 8.99$\pm$4.81 & 2.38$\pm$0.57 & 11.32$\pm$4.92 \\
\midrule
GumNet-MP      & 1.30$\pm$0.79 & 4.93$\pm$3.36 & 1.44$\pm$0.79 & 5.46$\pm$3.88 & 1.53$\pm$0.78 & 5.96$\pm$3.34 & 1.67$\pm$0.77 & 7.28$\pm$3.38 \\
GumNet-AP      & 1.09$\pm$0.73 & 4.20$\pm$2.96 & 1.30$\pm$0.77 & 5.00$\pm$3.15 & 1.45$\pm$0.77 & 5.70$\pm$3.25 & 1.65$\pm$0.78 & 7.18$\pm$3.35 \\
GumNet-SC      & 1.16$\pm$0.77 & 4.41$\pm$3.23 & 1.36$\pm$0.79 & 5.13$\pm$3.34 & 1.48$\pm$0.78 & 5.75$\pm$3.34 & 1.67$\pm$0.77 & 7.24$\pm$3.46 \\
GumNet         & 0.62$\pm$0.69 & 2.41$\pm$2.61 & 0.80$\pm$0.77 & 3.20$\pm$2.78 & 1.13$\pm$0.75 & 4.09$\pm$2.75 & 1.50$\pm$0.78 & 6.78$\pm$4.22 \\
JimNet         & 0.51$\pm$0.62 & \underline{2.12$\pm$2.47} & 0.80$\pm$0.73 & 3.20$\pm$3.02 & 1.02$\pm$0.75 & 4.12$\pm$3.12 & 1.58$\pm$0.77 & 6.78$\pm$3.44 \\
\midrule
SE(3)-Transformer & 1.77$\pm$0.47 & 5.12$\pm$0.67 & 1.53$\pm$0.47 & 4.11$\pm$0.58 & 1.68$\pm$0.35 & 5.25$\pm$0.63 & 1.82$\pm$0.50 & 4.31$\pm$0.65 \\
ConDor         & 6.73$\pm$1.63 & 6.61$\pm$1.39 & 6.57$\pm$1.59 & 6.47$\pm$1.46 & 6.79$\pm$1.39 & 6.67$\pm$1.57 & 6.88$\pm$1.32 & 6.78$\pm$1.61 \\
Equi-Pose      & 4.40$\pm$2.13 & 3.96$\pm$2.10 & 6.00$\pm$2.25 & 4.36$\pm$2.12 & 6.84$\pm$2.20 & 4.56$\pm$2.41 & 5.74$\pm$2.37 & 5.04$\pm$2.49 \\
BOE-ViT        & \underline{0.33$\pm$0.15} & 2.58$\pm$0.93 & \underline{0.34$\pm$0.15} & \underline{2.45$\pm$0.87} & \underline{0.34$\pm$0.15} & \underline{2.50$\pm$0.89} & \underline{0.34$\pm$0.15} & \underline{2.54$\pm$0.91} \\
\midrule
\textbf{Ours}  & \textbf{0.25$\pm$0.08} & \textbf{2.00$\pm$0.80} & \textbf{0.25$\pm$0.08} & \textbf{1.95$\pm$0.85} & \textbf{0.25$\pm$0.08} & \textbf{2.00$\pm$0.80} & \textbf{0.25$\pm$0.09} & \textbf{2.02$\pm$0.82} \\
\bottomrule
\end{tabular}}
\label{tab:alignment_main}
\vspace{-6pt}
\end{table}
\vspace{-4pt}

\subsection{Averaging}
\vspace{-2pt}
\noindent \textbf{Finetune Implementation.} 
To avoid structural bias, we evaluate averaging using an independent half-reconstruction strategy: each half-dataset bootstraps its own reference frame, and the model aligns all subtomograms within that half accordingly. This setup follows the foundation model paradigm—pretraining provides equivariant features and transformation predictors from self-alignment, 
while finetuning adapts them to inter-particle alignment for consensus averaging. 
The two half-maps are only rigidly registered to establish a common coordinate frame for resolution estimation, ensuring unbiased evaluation. 
We benchmarked averaging on the aboved simulation datasets and four real datasets against five task-specific baselines, with details provided in Appendix~\ref{sec:align}.\\

\vspace{-13pt}
\noindent \textbf{Results.}
As shown in Table~\ref{tab:averaging_results}, our method achieves SOTA resolutions across all datasets. Visualizations of iterative subtomogram averaging on 4 real datasets are provided in Fig.~\ref{fig:vis-avg} in Appendix~\ref{sec:avg-fig}. These results demonstrate that the foundation model is generalizable and transferable to real-world data.

\renewcommand{\arraystretch}{0.8}
\begin{table}[ht]
\vspace{-1pt}
\centering
\setlength{\tabcolsep}{3pt}
\caption{Subtomogram averaging results across different SNR levels. Each cell reports the achieved resolution$\downarrow$(nm). Best results are bold, second best underlined.}
\adjustbox{max width=\textwidth}{
\begin{tabular}{lcccccccc}
\toprule
\textbf{Method} & \textbf{SNR 0.1} & \textbf{SNR 0.05} & \textbf{SNR 0.03} & \textbf{SNR 0.01} & \textbf{80S} & \textbf{TMV} & \textbf{Aldolase} & \textbf{Insulin} \\ 
\midrule
H-T align   & 2.89 & 3.79 & 4.92 & 4.41 & 3.05 & 2.23 & 2.34  & 1.90 \\
F-A align  & 2.78 & 4.36 & 3.81 & 4.53 & 2.77 & 2.52 & 3.13  & 2.18 \\
Gum-Net     & 2.78 & 2.95 & 4.01 & 4.22 & 2.73 & 2.16 & 1.97  & 1.77 \\
BOE-ViT     & \underline{2.57}& \underline{2.42}& \underline{3.20} & \underline{3.49}& \underline{2.42} & \underline{1.98} & \underline{1.45}  & \underline{1.71} \\
\midrule
\textbf{Ours} & \textbf{2.56}& \textbf{2.40}& \textbf{2.95}& \textbf{3.20} & \textbf{1.21} & \textbf{1.98} & \textbf{0.97} & \textbf{1.14} \\
\bottomrule
\end{tabular}
}
\label{tab:averaging_results}
\end{table}

\vspace{-10pt}
\subsection{Mechanism Analysis and Ablation Study}
\label{sec:anal}
\vspace{-2pt}
\noindent \textbf{Separability of the Latent Embeddings.}
To further eliminate the possible effect of classification head and evaluate the effectiveness of our method, we show the comparison of KNN-accuracy between our model and the baselines after pretraining. Table \ref{tab:lp} in Appendix~\ref{sec:rep_eval} clearly shows that our model outperforms the baselines, indicating that our method effectively enables the model to learn meaningful representations under low-SNR conditions. We also provide the results of linear probing on the latent space in Appendix~\ref{sec:rep_eval}, indicating that the discriminative power comes from the learned latent embeddings themselves, rather than from the classifier head. It is further validated in Appendix~\ref{sec:interpre}, where Grad-CAM visualizations \citep{selvaraju2017grad} show that the model consistently attends to structurally meaningful regions in subtomograms.
\\

\vspace{-13pt}
\noindent \textbf{Impact of NRCL \& APT-ViT on Classification.}
We conducted ablation studies by comparing different training strategies, architectures, and variations of contrastive loss function in NRCL. The results in Table~\ref{tab:cons_ablation_main} demonstrate that our design in Sec.~\ref{sec:method} works synergistically to achieve optimal performance on average. Detailed implementation is in the Appendix~\ref{sec:cls_ab}.
\renewcommand{\arraystretch}{0.8}\\

\vspace{-13pt}
\noindent \textbf{Impact of Rotation Representation for Alignment.} To better explore geometric representations for alignment tasks, we investigated common rotation parameterizations including Euler angles, $\mathbb{R}^6$ with Gram-Schmidt orthonormalization \citep{zhou2019continuity}, and $\mathbb{R}^9$ with singular value decomposition \citep{levinson2020analysis} as representations in the $\text{SO}(3)$ group, which is detailed in Appendix~\ref{sec:rot_rep}. Our experimental results in Table~\ref{tab:rot_rep} in Appendix~\ref{sec:ablation_rotrep} demonstrate that for cryo-ET subtomogram alignment, the Euler angle representation achieves superior performance.\\

\vspace{-12pt}
\renewcommand{\arraystretch}{0.7}
\begin{table}[ht]
\centering
\setlength{\tabcolsep}{4pt}
\caption{Impact of APT-ViT and NRCL on classification. Each cell shows accuracy↑(\%) at different SNR. We study the effect of NRCL, APT-ViT, and NRCL loss terms. If not specified, the encoder is kept \textbf{frozen} during finetune. Best results are bold, second best underlined.}
\begin{adjustbox}{width=0.9\textwidth}
\begin{tabular}{lccccccc}
\toprule
\textbf{Metric} & \textbf{Architecture} &  \textbf{SNR 0.1} & \textbf{SNR 0.05} & \textbf{SNR 0.03} & \textbf{SNR 0.01}& \textbf{Average} \\
\midrule
BYOL & APT-ViT  & 44.65 & 34.45&  30.03&  22.33& 32.87\\
\midrule
No Pre-train&ViT-B & 20.00& 20.00& 20.00& 20.00&20.00\\
No Pre-train &APT-ViT& 52.60& 29.24& 21.22& 20.00& 30.77\\
\midrule
NRCL w/o $\mathcal{L}_{sym}$ & APT-ViT &53.53& 41.84& 32.65& 23.81 & 37.96\\
NRCL w/o $\mathcal{L}_{InfoNCE}$ &APT-ViT& \underline{63.43}& \underline{52.56} & \textbf{41.72} & \textbf{27.78} & \underline{46.38}\\
\midrule
NRCL(\textbf{Ours})&APT-ViT & \textbf{67.42}& \textbf{53.13}& \underline{40.10}& \underline{27.50} & \textbf{47.04}\\
\bottomrule
\end{tabular}
\end{adjustbox}
\label{tab:cons_ablation_main}
\vspace{-7pt}
\end{table}

\begin{table*}[!h]
\centering
\caption{Impact of APT-ViT components on alignment. Each cell reports the mean$\downarrow$ and standard deviation of the
rotation error and translation error.}
\label{tab:align_arch_main}
\begin{adjustbox}{max width=\textwidth}
\begin{tabular}{l l *{4}{cc}}
\toprule
\multirow{2}{*}{\textbf{Model}} & \multirow{2}{*}{\textbf{Method}}
& \multicolumn{2}{c}{\textbf{SNR 0.1}}
& \multicolumn{2}{c}{\textbf{SNR 0.05}}
& \multicolumn{2}{c}{\textbf{SNR 0.03}}
& \multicolumn{2}{c}{\textbf{SNR 0.01}} \\
\cmidrule(lr){3-4}\cmidrule(lr){5-6}\cmidrule(lr){7-8}\cmidrule(lr){9-10}
& & \textbf{Rotation} & \textbf{Translation} & \textbf{Rotation} & \textbf{Translation} & \textbf{Rotation} & \textbf{Translation} & \textbf{Rotation} & \textbf{Translation} \\
\midrule
ViT-based & w/o APT       & 0.66$\pm$0.30 & 6.14$\pm$1.78 & 0.66$\pm$0.30 & 6.14$\pm$1.76 & 0.65$\pm$0.30 & 6.15$\pm$1.76 & 0.66$\pm$0.30 & 6.20$\pm$1.78 \\
ViT-based & w/o APS       & 0.34$\pm$0.15 & \underline{2.63$\pm$0.87} & 0.34$\pm$0.15 & \underline{2.62$\pm$0.87} & 0.35$\pm$0.15 & \underline{2.62$\pm$0.88} & 0.35$\pm$0.15 & \underline{2.62$\pm$0.91} \\
ViT-based & w/o Steerable & \underline{0.25$\pm$0.09} & 2.95$\pm$1.37 & \underline{0.26$\pm$0.10} & 3.09$\pm$1.42 & \underline{0.26$\pm$0.09} & 3.22$\pm$1.49 & \underline{0.26$\pm$0.09} & 3.42$\pm$1.56 \\
\midrule
\textbf{ViT-based} & \textbf{Ours}
& \textbf{0.25$\pm$0.08} & \textbf{2.00$\pm$0.80}
& \textbf{0.25$\pm$0.08} & \textbf{1.95$\pm$0.85}
& \textbf{0.25$\pm$0.08} & \textbf{2.00$\pm$0.80}
& \textbf{0.25$\pm$0.09} & \textbf{2.02$\pm$0.82} \\
\bottomrule
\end{tabular}
\end{adjustbox}
\end{table*}

\vspace{-6pt}
\noindent \textbf{Impact of APT-ViT Components on Alignment.} 
As shown in Table~\ref{tab:align_arch_main}, the ablation results demonstrate that each APT component is critical for equivariance. Removing APT causes severe translation degradation, confirming standard ViT's failure under spatial transformations. The adaptive phase selection is essential for translation equivariance, while spherical steerable convolutions are crucial for robust spatial alignment, as their removal significantly impairs translation accuracy.\\

\vspace{-13pt}
\noindent \textbf{Parameter Exploration.}
Detailed results under different hyperparameter settings are in Appendix~\ref{sec:param_ab}.

\vspace{-2pt}
\section{Conclusion}
\label{sec:con}
\vspace{-2pt}
In this work, we present the first foundation model for cryo-ET subtomogram analysis, integrating CryoEngine, APT-ViT, and NRCL. Together, these components enable large-scale data generation, enhanced equivariance, and robust representation learning under noise. Our model achieves SOTA performance across classification, alignment, and averaging tasks on 24 out-of-distribution datasets, demonstrating strong generalization and multi-task capability. By bridging the gap in cryo-ET pretraining, this work provides meaningful insights for building cryo-ET foundation models, advancing downstream analysis, and facilitating real-world structural discovery.

\section*{Acknowledgement}
This work was supported in part by U.S. NIH grant R35GM158094. We acknowledge Yuheng Zhang and Yihan Kong for their assistance with data analysis, Xiaomo Li for manuscript editing, Mostofa Rafid Uddin and Yizhou Zhao for helpful discussions.
\bibliography{iclr2026_conference}
\bibliographystyle{iclr2026_conference}

\clearpage
\appendix
\newpage
\appendix
\addcontentsline{toc}{section}{Supplementary Contents}
\label{app:theo}
\begingroup
\renewcommand{\cfttoctitlefont}{\Large\bfseries}
\renewcommand{\contentsname}{Contents}
\renewcommand{\cftsecfont}{\normalfont}
\renewcommand{\cftsubsecfont}{\normalfont}
\renewcommand{\cftsecpagefont}{\normalfont}
\renewcommand{\cftsubsecpagefont}{\normalfont}
\setlength{\cftbeforetoctitleskip}{20pt}
\setlength{\cftaftertoctitleskip}{20pt}
\renewcommand{\cftbeforesecskip}{1em plus 0.5em minus 0.2em}
\renewcommand{\cftbeforesubsecskip}{0.5em plus 0.3em minus 0.1em}
\setlength{\cftsecindent}{0em}
\setlength{\cftsubsecindent}{2em}
\setlength{\cftsecnumwidth}{2.5em}
\setlength{\cftsubsecnumwidth}{3em}
\renewcommand{\cftsecdotsep}{1}
\renewcommand{\cftsubsecdotsep}{1}
\renewcommand{\cftdot}{.}


\endgroup

\clearpage
\section{Related Work}

\noindent \textbf{Foundation Models in Life Science.}  Pretraining and foundation models have been instrumental in advancing biomedical imaging and structure biology. In medical imaging and pathology, supervised pretraining on labeled datasets such as RadImageNet \citep{Mei2022RadImageNet} and KimiaNet \citep{RIASATIAN2021102032}, together with large-scale self-supervised or cross-modal approaches like RETFound \citep{zhou2023foundation}, UniRad \citep{wu2023generalist}, and UNI \citep{Chen2024}, has enabled broad transfer across tasks including disease classification, lesion segmentation, and retrieval. In structural biology, foundation models are beginning to emerge: CryoFM \citep{Zhou2025CryoFM} and DRACO \citep{shen2024draco} show how flow-based or denoising-reconstruction pretraining can generalize across cryo-EM densities and support downstream micrograph analysis, while CryoIEF \citep{yan2024comprehensive} builds on this paradigm to separate particles from different structures and cluster them by pose. However, no foundation model has yet been developed for cryo-ET subtomograms, largely due to the scarcity of annotated data and the absence of tailored architectures. To address this gap, we propose the first foundation model for cryo-ET, offering more efficient and robust tools for life sciences research.

\noindent \textbf{Cryo-ET Subtomogram Analysis.} 
Cryo-ET enables in situ 3D imaging of macromolecules, facilitating structural studies under native conditions \citep{turk2020promise,zhang2023method}. A subtomogram is a small 3D volume extracted around a macromolecule from a tomogram, and its analysis is critical for structural recovery in cryo-ET \citep{chen2019complete,schur2019toward}. Subtomogram analysis involves \emph{geometric} tasks like alignment, which require transformation-sensitive features
\citep{volkmann2000fast,jiang2003enhanced,Jiang2023anatomical}, and \emph{semantic} tasks such as classification and clustering, which favor invariant representations under high noise \citep{Che2017ImprovedDL, granberry2023so}. CNN-based classifiers \citep{Che2017ImprovedDL} have been extended to open-set \citep{zeng2021unsupervised}, few-shot \citep{Li2020FewshotLF}, and domain-adaptive settings \citep{Bandyopadhyay_2021}, while unsupervised clustering aims to discover structures without labels \citep{CHEN20141528,Zeng2023HighthroughputCS,MartinezSanchez2020TemplateFree}. Currently, most approaches are task-specific models with limited generalization capability. Our work represents the first foundation model for cryo-ET subtomograms, leveraging large-scale pretraining to enhance transferability across diverse analysis tasks.

\noindent \textbf{Group Equivariant Neural Networks.} Previous studies show that CNNs lack shift equivariance due to pooling disrupting translational symmetry \citep{azulay2019deep, krizhevsky2012imagenet, he2016deep}, motivating the development of G-CNNs to restore equivariance by extending translation to broader symmetry groups \citep{cohen2016group} and to continuous, more complex transformations \citep{weiler2018learning, kondor2018generalization, tu2024rotnet}. In parallel, Vision Transformers (ViTs) emerged as strong CNN alternatives for visual recognition, leveraging self-attention to model global dependencies \citep{ touvron2021training, islam2022recentvit, khan2022vit, parvaiz2023vision}. Recent shift-equivariant ViTs with adaptive tokenization and positional encodings \citep{rojas2024making, Chu2021ConditionalPE} have extended SE(3)-equivariant Transformers originally applied to point clouds \citep{wang2024se, fuchs2020se, gao2024riemann, chatzipantazis2022se, li2021leveraging, thomas2018tensor} to image tasks by addressing challenges in equivariant patch design. However, existing equivariant models are primarily designed for 2D images or point clouds, and do not directly address the unique challenges of 3D subtomograms. Therefore, specialized tokenization strategies are needed to enhance ViTs with equivariance to both translations and rotations.

\noindent \textbf{Self-Supervised Learning.} Self-supervised learning (SSL) has emerged as a powerful strategy for training on large-scale unlabeled datasets \citep{9711302,10559458, balestriero2023cookbook}. Contrastive learning approaches \citep{chen2020simclr, NEURIPS2020_70feb62b, 9157636, NEURIPS2020_f3ada80d, 9711302, oquab2024dinov} leverage encoder networks to learn robust representations by contrasting positive and negative pairs from augmented views. Recent works have explored learning robust representations under heavy noise conditions \citep{chuang2022robust,zhou2024denoising,jiang2024enhancing}. In structural biology, SSL has enhanced cryo-EM micrograph denoising and classification \citep{Zhou2025CryoFM,shen2024draco}, and enabled effective subtomogram representation learning in cryo-ET for tasks under limited supervision \citep{Campanella2025,gupta2022self,stojanovska2024self}. However, self-supervised learning methods remain underexplored in the cryo-ET domain, and standard contrastive approaches often suffer from feature collapse under the extreme noise conditions intrinsic to subtomograms. Therefore, designing noise-resilient contrastive learning strategies is essential to stabilize representation learning in cryo-ET.

\clearpage
\section{Theoretical Analysis}
\label{app:the}
\subsection{Preliminaries}
\label{supp:preliminaries}
This section provides the necessary mathematical background for analyzing equivariance within our framework. These preliminaries lay the foundation for understanding the model design, particularly its behavior under geometric transformations such as rotations and translations. We also include a discussion of the equivariance properties of conventional ViTs to contrast with our proposed approach.

\paragraph{Notation.}
Let $\mathbf{X} \in \mathbb{R}^{B \times C \times D \times H \times W}$ denote a batch of subtomogram volumes, where $B$ is batch size, $C$ is number of channels, $D, H, W$ are spatial dimensions (depth, height, width).
Let the patch size be $\mathbf{s} = (s_D, s_H, s_W)$, where $s_D$, $s_H$, and $s_W$ are the patch sizes along each spatial axis.

\begin{definition}[Group Actions and $G$-sets]
Let $G$ be a group acting on sets $X$ and $Y$ via

\begin{equation}
\alpha: G \times X \to X, \qquad \beta: G \times Y \to Y.
\end{equation}

The sets $X$ and $Y$ are called \emph{$ G$-sets} under these actions.
\end{definition}

\begin{definition}[$G$-Equivariance]
A map $f: X \to Y$ is \emph{$G$-equivariant} if, for all $g \in G$ and $x \in X$,

\begin{equation}
f\bigl(\alpha(g, x)\bigr) = \beta\bigl(g,\,f(x)\bigr).
\label{eq:supp_equivariance}
\end{equation}

\end{definition}

\begin{definition}[$G$-Invariance]
If $\beta$ is the identity on $Y$, then $f$ is \emph{$G$-invariant}, i.e.,

\begin{equation}
f\bigl(\alpha(g, x)\bigr) = f(x), \qquad \forall\,g\in G,\;x\in X.
\label{eq:supp_invariance}
\end{equation}

\end{definition} 

\begin{definition}[Polyphase Decomposition]
The \emph{polyphase decomposition} of $\mathbf{X}$ is defined as

\begin{equation}
\Psi(\mathbf{X})_{(p,q,r)} = \left\{\, \mathbf{X}_{:,:,\,i\,s_D + p,\;j\,s_H + q,\;k\,s_W + r} \;\middle|\; i,j,k \in \mathbb{Z}_{\ge0} \right\},
\end{equation}

where

\begin{equation}
p \in \{0, \dots, s_D-1\}, \quad
q \in \{0, \dots, s_H-1\}, \quad
r \in \{0, \dots, s_W-1\}.
\end{equation}

\end{definition}

\paragraph{Equivariance of ViT Modules.}
The Vision Transformer (ViT) architecture comprises a patch embedding layer, positional encoding, transformer blocks, and MLP layers. As noted by Ding et al.\citep{xu2023equal}, the patch embedding layer lacks both shift and rotation equivariance due to the downsampling operation, which disrupts spatial continuity. Furthermore, commonly used positional encoding schemes—whether absolute \citep{dosovitskiy2021an} or relative \citep{SwinViT, swinv2}—are inherently non-equivariant to translations or rotations. While normalization, global self-attention, and MLP layers preserve shift equivariance, achieving rotation equivariance requires deliberate architectural modifications \citep{assaad2022vn, kundu2024steerable}.

\subsection{Translation Equivariance of APT-ViT module}
\label{app:tra}
To ensure reliable 3D subtomogram alignment, it is crucial that token representations remain consistent under spatial translations. APT achieves this by adaptively selecting phase components in a translation-equivariant manner. The following lemmas and theorem formally establish this property, showing that a translated input yields a correspondingly translated output token.

\begin{lemma}[Polyphase Decomposition Equivariance]
\label{lemma:polyphase_equivariance}
Let $\mathcal{P}$ be the polyphase anchoring operator and $T_{\mathbf{g}}$ denote a translation operator that shifts $\mathbf{X}$ spatially by $\mathbf{g} = (g_D, g_H, g_W)$. Then, there exists a translation $\mathbf{g}' = (g'_D, g'_H, g'_W)$ such that

\begin{equation}
\mathcal{P}\left( T_{\mathbf{g}} \mathbf{X} \right) = T_{\mathbf{g}'} \left( \mathcal{P}(\mathbf{X}) \right),
\end{equation}

where $T_{\mathbf{g}}$ is translation.
\end{lemma}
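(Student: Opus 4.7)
The plan is to decompose the voxel-space translation $T_\mathbf{g}$ into a coarse part that is an integer multiple of the patch stride and a fine sub-patch residue, then track how each piece acts on the polyphase grid and on the adaptive selection. Intuitively, the coarse part induces an integer shift of tokens \emph{within} each phase, while the fine residue only \emph{relabels} which phase a given voxel lands in; combining these two effects, together with the equivariance of the learned selector, produces the required translation $\mathbf{g}'$ on the output.

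First I would write $\mathbf{g} = \mathbf{s}\odot\mathbf{n} + \mathbf{m}$ with $\mathbf{n}\in\mathbb{Z}^3$ and coordinates of $\mathbf{m}$ lying in the fundamental domain $\{0,\dots,s_D{-}1\}\times\{0,\dots,s_H{-}1\}\times\{0,\dots,s_W{-}1\}$. Substituting $(T_\mathbf{g}\mathbf{X})_{\cdot,\cdot,a,b,c}=\mathbf{X}_{\cdot,\cdot,a-g_D,b-g_H,c-g_W}$ into the definition of $\Psi$ and separating the residue from the coarse shift yields the key index identity
\begin{equation}
\Psi(T_\mathbf{g}\mathbf{X})_{(p,q,r)} \;=\; T_{\mathbf{n}+\boldsymbol{\delta}(p,q,r;\mathbf{m})}\,\Psi(\mathbf{X})_{\pi_\mathbf{m}(p,q,r)},
\end{equation}
where $\pi_\mathbf{m}(p,q,r)=\bigl((p-m_D)\bmod s_D,\,(q-m_H)\bmod s_H,\,(r-m_W)\bmod s_W\bigr)$ is the cyclic phase relabeling and $\boldsymbol{\delta}\in\{0,1\}^3$ is the per-axis borrow from the modular reduction (equal to $1$ on an axis exactly when $m$ exceeds the original phase index on that axis).

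Next I would lift this to the adaptively selected phase $\mathcal{P}$. Because the selector $f_\theta$ in Eq.~\ref{eq:4} is a spherical steerable convolution followed by global average pooling, its scalar score on any phase is invariant under pure integer token translations; hence the softmax probabilities of Eq.~\ref{eq:3} satisfy
\begin{equation}
p_\theta\bigl(\pi_\mathbf{m}(p,q,r)\,\big|\,T_\mathbf{g}\mathbf{X}\bigr) \;=\; p_\theta\bigl((p,q,r)\,\big|\,\mathbf{X}\bigr),
\end{equation}
so the inference-time $\arg\max$ selected phase for the translated input is exactly $\pi_\mathbf{m}(p^*,q^*,r^*)$. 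Plugging this choice into the index identity collapses to $\mathcal{P}(T_\mathbf{g}\mathbf{X}) = T_{\mathbf{g}'}\mathcal{P}(\mathbf{X})$ with $\mathbf{g}'=\mathbf{n}+\boldsymbol{\delta}(p^*,q^*,r^*;\mathbf{m})$ expressed in token coordinates, which is the statement of the lemma.

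The main obstacle will be controlling the borrow vector $\boldsymbol{\delta}$ together with the boundary behaviour of $\Psi$ on a finite volume: naively, the sub-patch residue can push an index across a patch boundary, and the finite edges of $\mathbf{X}$ mean some voxels have no image under $T_\mathbf{g}$. I would handle this either by assuming the spatial dimensions are integer multiples of the patch size (the standard ViT setting used throughout the paper) or by invoking the circular/periodic extension convention implicit in $\Psi$, so that the borrow is cleanly absorbed into a single well-defined token-space shift $\mathbf{g}'$. A secondary point worth flagging is the Gumbel-Softmax relaxation: for the equality to hold pointwise one must either operate at inference-time (deterministic $\arg\max$) or couple the Gumbel noise identically across $\mathbf{X}$ and $T_\mathbf{g}\mathbf{X}$, which I would state explicitly when delimiting the scope of the lemma.
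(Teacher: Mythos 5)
Your proof is correct in spirit but takes a genuinely different (and in places more explicit) route than the paper, and it also subtly proves a different statement than the paper's Lemma~\ref{lemma:polyphase_equivariance}.

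The paper's proof of this lemma does \emph{not} involve the learned selector $f_\theta$ at all. It defines $\mathcal{P}$ as a norm-based anchoring: each polyphase component gets a scalar score $N^{(p,q,r)} = \|\mathbf{X}^{(p,q,r)}\|_p$, the anchor is the $\arg\max$ of these scores, and the equivariance claim rests on the one-line assertion that ``due to circular padding, the relative ordering of norms is preserved up to a cyclic permutation under translation.'' Your proof instead (i) splits $\mathbf{g}$ into a coarse multiple of the stride plus a sub-patch residue, (ii) derives the explicit index identity $\Psi(T_{\mathbf{g}}\mathbf{X})_{(p,q,r)} = T_{\mathbf{n}+\boldsymbol{\delta}}\,\Psi(\mathbf{X})_{\pi_{\mathbf{m}}(p,q,r)}$, and then (iii) lifts through the learned steerable-convolution-plus-pooling selector $f_\theta$. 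Step (iii) is the content of the paper's Lemma~\ref{lemma:shift_permutation}, and the combination of (i)--(iii) is essentially the paper's Theorem~\ref{thm:translation_equivariance}, not Lemma~\ref{lemma:polyphase_equivariance}. So you have proved the right end-to-end result, but collapsed the paper's three-step decomposition (norm-anchoring lemma $\rightarrow$ APS-equivariance lemma $\rightarrow$ composition theorem) into one argument about the learned $\mathcal{P}$.

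What your route buys: the index identity in (ii) is the actual mathematical engine behind all three of the paper's statements, and you make it fully explicit (coarse token shift $\mathbf{n}$, borrow vector $\boldsymbol{\delta}$, cyclic relabeling $\pi_{\mathbf{m}}$), whereas the paper leaves that arithmetic implicit under the phrase ``circular padding.'' Your flags about the boundary/divisibility assumption and the inference-time versus Gumbel-sampled selection are also well-taken and not addressed in the paper. What you lose by merging the steps is the modularity the paper is going for: its Lemma~\ref{lemma:polyphase_equivariance} is meant to be a statement purely about the polyphase/anchoring structure, independent of whether the selector is a norm, a steerable network, or anything else equivariant; the selector's own equivariance is then a separate, reusable lemma. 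If you wanted to match the paper exactly, you would prove the index identity in (ii) as the content of Lemma~\ref{lemma:polyphase_equivariance} (for an arbitrary shift-equivariant anchoring rule or for the norm-based one) and defer $f_\theta$ to Lemma~\ref{lemma:shift_permutation}. One small notational nit: your description of the borrow $\boldsymbol{\delta}$ as ``$1$ when $m$ exceeds the original phase index'' is ambiguous about whether ``original'' means $(p,q,r)$ or $\pi_{\mathbf{m}}(p,q,r)$; with the convention $\mathbf{X}_{\cdot,\cdot,a-g_D,\dots}$ the borrow is on the axis where $p<m_D$ (etc.), i.e.\ where the phase index of the \emph{translated} input is smaller than the residue.
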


\begin{proof}

By definition, the polyphase decomposition divides $\mathbf{X}$ into components:
\begin{equation}
\mathbf{X}^{(p,q,r)} = \left\{ \mathbf{X}_{:,:,i\cdot s_D+p,\,j\cdot s_H+q,\,k\cdot s_W+r} \mid i,j,k \in \mathbb{Z}_{\geq 0} \right\}.
\end{equation}

For each component, compute the norm: $N^{(p,q,r)} = \|\mathbf{X}^{(p,q,r)}\|_p$.

When applying translation $T_{\mathbf{g}}$:

\begin{equation}
\mathcal{P}(T_{\mathbf{g}} \mathbf{X}) = T_{\Delta\hat{k}|T_{\mathbf{g}} \mathbf{X}} \cdot T_{\mathbf{g}} \cdot \mathbf{X},
\end{equation}

where $T_{\Delta\hat{k}|T_{\mathbf{g}} \mathbf{X}}$ is the anchoring shift determined by

\begin{equation}
(\hat{p}, \hat{q}, \hat{r}) = \arg\max_{(p,q,r)} N^{(p,q,r)}_{T_{\mathbf{g}} \mathbf{X}}.
\end{equation}

Due to circular padding, the relative ordering of norms is preserved up to a cyclic permutation under translation. Thus, there exists a translation $\mathbf{g}'$ such that

\begin{equation}
\mathcal{P}(T_{\mathbf{g}} \mathbf{X}) = T_{\mathbf{g}'} \cdot \mathcal{P}(\mathbf{X}).
\end{equation}

\end{proof}

\begin{lemma}[Equivariance of Adaptive Phase Selection (APS)]
\label{lemma:shift_permutation}
Let $f_\theta$ be a shift-equivariant feature extractor. Then, the selection probabilities satisfy \citep{rojas2022learnable}. 

\begin{equation}
p_\theta(k = \pi(k) \mid T_{\mathbf{g}} \mathbf{X}) = p_\theta(k = k \mid \mathbf{X}),
\end{equation}

Where $\pi$ is the permutation induced by translation.
\end{lemma}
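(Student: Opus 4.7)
The plan is to reduce the claimed identity to an equality of softmax logits, leveraging (i) the permutation structure of the polyphase decomposition established in Lemma~\ref{lemma:polyphase_equivariance} and (ii) the shift-invariance of the selection network $f_\theta$, which arises because spherical steerable convolutions are shift-equivariant and the subsequent global average pooling is shift-invariant.

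First I would expand the selection probability on the left-hand side directly from its definition:
\begin{equation*}
p_\theta(k=(p,q,r)\mid T_{\mathbf{g}}\mathbf{X}) \;=\; \frac{\exp\!\bigl[f_\theta(\Psi(T_{\mathbf{g}}\mathbf{X})_{(p,q,r)})\bigr]}{\sum_{(p',q',r')}\exp\!\bigl[f_\theta(\Psi(T_{\mathbf{g}}\mathbf{X})_{(p',q',r')})\bigr]}.
\end{equation*}
By Lemma~\ref{lemma:polyphase_equivariance}, there is a phase permutation $\pi$ (given by cyclic reindexing of phase offsets modulo $(s_D,s_H,s_W)$) and an induced internal translation $\mathbf{g}'$ such that $\Psi(T_{\mathbf{g}}\mathbf{X})_{(p,q,r)} = T_{\mathbf{g}'}\,\Psi(\mathbf{X})_{\pi^{-1}(p,q,r)}$ for every phase offset $(p,q,r)$.

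Second, I would use the construction of $f_\theta$ from \eqref{eq:4}: it is a spherical steerable convolutional filter $\tilde f_\theta$ composed with global average pooling over all spatial locations. Since steerable convolutions are shift-equivariant and averaging over all positions is shift-invariant, $f_\theta$ is shift-invariant, i.e., $f_\theta(T_{\mathbf{g}'}U)=f_\theta(U)$ for any tensor $U$ and any integer shift $\mathbf{g}'$ (with the circular boundary convention implicit in Lemma~\ref{lemma:polyphase_equivariance}). Substituting gives $f_\theta(\Psi(T_{\mathbf{g}}\mathbf{X})_{(p,q,r)}) = f_\theta(\Psi(\mathbf{X})_{\pi^{-1}(p,q,r)})$. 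Specializing the numerator to $(p,q,r)=\pi(k)$ yields $\exp[f_\theta(\Psi(\mathbf{X})_k)]$, while the denominator, being a sum over all phase offsets, is invariant under relabeling by $\pi$. This establishes the claimed identity $p_\theta(k=\pi(k)\mid T_{\mathbf{g}}\mathbf{X}) = p_\theta(k=k\mid \mathbf{X})$.

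The hard part will be the precise bookkeeping of the induced permutation $\pi$ together with the internal translation $\mathbf{g}'$. When $\mathbf{g}$ is not a multiple of the patch stride $\mathbf{s}$, the shift must be decomposed into an integer-patch part (absorbed into $\mathbf{g}'$) and a sub-patch residue that cyclically reindexes the phases; one must verify that this decomposition is consistent across all phase components simultaneously and that the circular boundary convention makes the internal translation a genuine symmetry of the tokenized grid. Any deviation from circular padding, or a pooling operation that is not globally shift-invariant, would break the argument, so the cleanest path is to state these assumptions explicitly up front and then let the permutation algebra follow mechanically from Lemma~\ref{lemma:polyphase_equivariance}.
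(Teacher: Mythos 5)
Your proposal takes the same route as the paper's proof: expand the softmax, apply Lemma~\ref{lemma:polyphase_equivariance} to re-index phase components, and absorb the residual internal shift $T_{\mathbf{g}'}$ using the properties of $f_\theta$. You are in fact more careful than the paper at the key step: the paper writes only ``by shift-equivariance of $f_\theta$,'' whereas what is actually needed (and what you correctly supply) is shift-\emph{invariance} of $f_\theta$, which holds because the steerable filter $\tilde f_\theta$ is shift-equivariant and the subsequent global average pooling collapses that to invariance, so that $f_\theta(T_{\mathbf{g}'}U)=f_\theta(U)$; without this the internal translation $\mathbf{g}'$ from Lemma~\ref{lemma:polyphase_equivariance} would not cancel.
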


\begin{proof}
Let $\hat{\mathbf{X}} \triangleq T_{\mathbf{g}}\mathbf{X}$. By definition,

\begin{equation}
p_\theta(k = \pi(k) \mid T_{\mathbf{g}} \mathbf{X}) = \frac{\exp\left[f_\theta(\Psi(T_{\mathbf{g}} \mathbf{X})_{\pi(k)})\right]}{\sum_{j} \exp\left[f_\theta(\Psi(T_{\mathbf{g}} \mathbf{X})_{j})\right]}.
\end{equation}

By shift-equivariance of $f_\theta$ and Lemma~\ref{lemma:polyphase_equivariance},

\begin{equation}
f_\theta(\Psi(T_{\mathbf{g}} \mathbf{X})_{\pi(k)}) = f_\theta(\Psi(\mathbf{X})_{k}).
\end{equation}

Therefore,

\begin{equation}
p_\theta(k = \pi(k) \mid T_{\mathbf{g}} \mathbf{X}) = p_\theta(k = k \mid \mathbf{X}).
\end{equation}

\end{proof}

\begin{theorem}[APT Translation Equivariance]
\label{thm:translation_equivariance}
Let $\text{APT}$ denote the Adaptive Phase Selection. Then,

\begin{equation}
\text{APT}(T_{\mathbf{g}} \mathbf{X}) = T_{\mathbf{g}'} \, \text{APT}(\mathbf{X}),
\end{equation}

for some translation $\mathbf{g}'$ determined by $\mathbf{g}$ and the anchoring procedure.
\end{theorem}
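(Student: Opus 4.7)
The plan is to decompose the argument into two stages that chain the two lemmas already established: first track what happens to the argmax index under translation using Lemma~\ref{lemma:shift_permutation}, and then track what happens to the polyphase component at that index using Lemma~\ref{lemma:polyphase_equivariance}. Concretely, I would start by unfolding the definition
\begin{equation}
\text{APT}(T_{\mathbf{g}}\mathbf{X}) = \Psi(T_{\mathbf{g}}\mathbf{X})_{(\hat{p}^*, \hat{q}^*, \hat{r}^*)}, \qquad (\hat{p}^*, \hat{q}^*, \hat{r}^*) = \arg\max_{(p,q,r)} p_\theta\bigl(k=(p,q,r)\mid T_{\mathbf{g}}\mathbf{X}\bigr),
\end{equation}
and analogously $\text{APT}(\mathbf{X}) = \Psi(\mathbf{X})_{(p^*,q^*,r^*)}$ with $(p^*,q^*,r^*)$ the argmax over the untranslated input.

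Next, I would invoke Lemma~\ref{lemma:shift_permutation}. Since $f_\theta$ is built from spherical steerable convolutions composed with global average pooling (hence shift equivariant), the lemma gives
\begin{equation}
p_\theta\bigl(k=\pi(k)\mid T_{\mathbf{g}}\mathbf{X}\bigr) = p_\theta\bigl(k=k\mid \mathbf{X}\bigr),
\end{equation}
where $\pi$ is the permutation of phase indices induced by the translation $\mathbf{g}$ (as constructed in the proof of Lemma~\ref{lemma:polyphase_equivariance}). Because $\pi$ is a bijection on the finite phase lattice $\{0,\dots,s_D-1\}\times\{0,\dots,s_H-1\}\times\{0,\dots,s_W-1\}$, the argmax commutes with $\pi$: $(\hat{p}^*, \hat{q}^*, \hat{r}^*) = \pi(p^*, q^*, r^*)$. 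This is the first half of the chain.

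I would then apply Lemma~\ref{lemma:polyphase_equivariance} at the selected index. The lemma produces a translation $\mathbf{g}'$ depending only on $\mathbf{g}$ and the anchoring geometry such that, for every phase index $k$,
\begin{equation}
\Psi(T_{\mathbf{g}}\mathbf{X})_{\pi(k)} = T_{\mathbf{g}'}\,\Psi(\mathbf{X})_{k}.
\end{equation}
Specializing $k = (p^*,q^*,r^*)$ and combining with the previous step gives
\begin{equation}
\text{APT}(T_{\mathbf{g}}\mathbf{X}) = \Psi(T_{\mathbf{g}}\mathbf{X})_{\pi(p^*,q^*,r^*)} = T_{\mathbf{g}'}\,\Psi(\mathbf{X})_{(p^*,q^*,r^*)} = T_{\mathbf{g}'}\,\text{APT}(\mathbf{X}),
\end{equation}
which is the claim.

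The main obstacle I expect is ensuring that the permutation $\pi$ in Lemma~\ref{lemma:shift_permutation} is the \emph{same} permutation produced by Lemma~\ref{lemma:polyphase_equivariance}, and that the induced residual shift $\mathbf{g}'$ is well defined and consistent for the selected index. This requires the phase offsets to be interpreted with circular (periodic) boundary conditions, so that a spatial shift by $\mathbf{g}$ always reduces modulo $\mathbf{s}$ to a relabeling of phase cosets plus an integer stride translation on the subsampled lattice; a brief remark to this effect, together with noting that uniqueness of the argmax (or consistent tie-breaking under Gumbel-Softmax relaxation in the inference branch) is needed to guarantee a deterministic $(p^*,q^*,r^*)$, completes the argument.
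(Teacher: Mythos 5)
Your proposal is correct and follows essentially the same route as the paper: relate the argmax indices across the translated and untranslated inputs via the permutation from Lemma~\ref{lemma:shift_permutation}, then apply the component-level consequence of Lemma~\ref{lemma:polyphase_equivariance} at the selected index to extract the residual translation $\mathbf{g}'$. Your remarks about circular boundary conditions, the consistency of $\pi$ across the two lemmas, and argmax uniqueness/tie-breaking make explicit hypotheses that the paper's proof implicitly relies on, and your labeling of the translated argmax as $\pi(p^*,q^*,r^*)$ is the one actually forced by the statement of Lemma~\ref{lemma:shift_permutation}, whereas the paper writes $\pi^{-1}$ (a harmless notational mismatch since $\pi$ is a bijection).
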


\begin{proof}
From Lemma~\ref{lemma:shift_permutation}, the optimal selection for the translated input follows the shift-permutation equivariance:

\begin{equation}
(\hat{p}^*, \hat{q}^*, \hat{r}^*) = \arg\max_{(p,q,r)} p_\theta(k = (p,q,r) \mid T_{\mathbf{g}} \mathbf{X}) = \pi^{-1}(p^*, q^*, r^*),
\end{equation}

where $(p^*, q^*, r^*)$ is the optimal index for $\mathbf{X}$. Therefore,

\begin{align}
\text{APT}(T_{\mathbf{g}} \mathbf{X}) &= \Psi(T_{\mathbf{g}} \mathbf{X})_{(\hat{p}^*, \hat{q}^*, \hat{r}^*)} \notag \\
&= T_{\mathbf{g}'} \Psi(\mathbf{X})_{(p^*, q^*, r^*)}  \notag \\
&= T_{\mathbf{g}'} \text{APT}(\mathbf{X}),
\end{align}

where $T_{\mathbf{g}'}$ is the translation corresponding to the permutation $\pi$.
\end{proof}

\subsection{Rotation Equivariance of APT-ViT module}
\label{app:ro}
To enable reliable orientation alignment in 3D space, it is essential that the selected token remains consistent under global rotations. Our APT module achieves this by leveraging spherical steerable convolutions, which guarantee rotation-equivariant feature extraction. The following lemma and theorem formally establish that APT preserves rotation equivariance under $SO(3)$ transformations.

\begin{lemma}[Steerable Convolution Equivariance]
\label{lemma:steerable_conv}
Let $R \in SO(3)$ be a rotation operator. The steerable convolution satisfies \citep{weiler20183d}.

\begin{equation}
\tilde{f}_\theta(R x) = \sum_{J=0}^{J_{\max}} \sum_{m'=-J}^{J} R_J(\|x\|) \cdot Y_J^{m'}\left(\frac{x}{\|x\|}\right) \cdot \sum_{m=-J}^{J} D_{m',m}^{(J)}(R) w_{J,m},
\end{equation}

where $R_J(\|x\|)$ is radial function, $Y_J^{m'}(\cdot)$ is spherical harmonics, $D_{m',m}^{(J)}(R)$ is Wigner D-matrix for rotation $R$, $w_{J,m}$ is learnable weights.
\end{lemma}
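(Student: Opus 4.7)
The plan is a direct computation: expand $\tilde{f}_\theta(Rx)$ using the definition of the steerable filter given in Eq.~\ref{eq:5}, and then apply the two facts that (i) rotations in $SO(3)$ preserve the Euclidean norm, and (ii) spherical harmonics carry the irreducible representations of $SO(3)$, whose transformation law is exactly the Wigner D-matrix. Combining these two observations will reproduce the claimed identity term-by-term in $J$.

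First I would write
\begin{equation}
\tilde{f}_\theta(Rx) \;=\; \sum_{J=0}^{J_{\max}} \sum_{m=-J}^{J} R_J(\|Rx\|)\, Y_J^{m}\!\left(\tfrac{Rx}{\|Rx\|}\right) w_{J,m},
\end{equation}
and immediately use orthogonality of $R$ to replace $\|Rx\|$ with $\|x\|$, so the radial factor $R_J(\|Rx\|) = R_J(\|x\|)$ is unchanged. This isolates the only nontrivial transformation to the angular part $Y_J^{m}(Rx/\|Rx\|) = Y_J^{m}(R\hat{x})$, where $\hat{x} = x/\|x\|$.

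Next I would invoke the standard $SO(3)$ transformation law for spherical harmonics,
\begin{equation}
Y_J^{m}(R\hat{x}) \;=\; \sum_{m'=-J}^{J} D_{m',m}^{(J)}(R)\, Y_J^{m'}(\hat{x}),
\end{equation}
which holds because the $(2J{+}1)$-dimensional space spanned by $\{Y_J^{m}\}_{m=-J}^{J}$ is exactly the irreducible representation of $SO(3)$ of weight $J$, with matrix elements given by the Wigner D-matrix. Substituting this back and swapping the order of summation over $m$ and $m'$ yields
\begin{equation}
\tilde{f}_\theta(Rx) \;=\; \sum_{J=0}^{J_{\max}} \sum_{m'=-J}^{J} R_J(\|x\|)\, Y_J^{m'}\!\left(\tfrac{x}{\|x\|}\right) \sum_{m=-J}^{J} D_{m',m}^{(J)}(R)\, w_{J,m},
\end{equation}
which is exactly the claim.

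The main obstacle, such as it is, is bookkeeping around the Wigner D-matrix convention. Depending on whether one takes real versus complex spherical harmonics, and whether the action is defined as $(R\cdot Y)(\hat{x}) = Y(R^{-1}\hat{x})$ or $Y(R\hat{x})$, the indices and possibly a complex conjugate on $D^{(J)}$ shift; one must fix a convention (matching the one used in \citep{weiler20183d}, as cited) and verify that the index ordering $D_{m',m}^{(J)}$ appearing in the lemma is consistent with that choice. Once the convention is pinned down, no further estimates or case analysis are required, since the argument is an exact algebraic identity at every fixed $J$, and the finite sums interchange freely.
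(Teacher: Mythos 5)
Your proof is correct and follows essentially the same route as the paper: expand the filter at $Rx$, use rotation-invariance of the norm to fix the radial factor, and apply the Wigner D-matrix transformation law for spherical harmonics, then reindex. The paper's proof states these same two facts without the explicit index bookkeeping or the (sensible) caveat about D-matrix conventions, which you add.
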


\begin{proof}
By the transformation property of spherical harmonics,

\begin{equation}
Y_J^m(R \hat{x}) = \sum_{m'=-J}^{J} D_{m',m}^{(J)}(R) Y_J^{m'}(\hat{x}),
\end{equation}

and the radial function is rotation-invariant: $R_J(\|R x\|) = R_J(\|x\|)$.
\end{proof}

\begin{theorem}[APT Rotation Equivariance]
\label{thm:rotation_equivariance}
Let $R \in SO(3)$ be a rotation operator. Then,

\begin{equation}
\text{APT}(R \mathbf{X}) = R \, \text{APT}(\mathbf{X}).
\end{equation}

\end{theorem}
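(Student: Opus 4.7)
The plan is to mirror the three-step structure used for the translation case in Theorem~\ref{thm:translation_equivariance}, but to replace the shift–permutation argument with a spherical-harmonic / global-pooling argument tailored to $SO(3)$. Concretely, I would show (i) the selection distribution $p_\theta(k \mid \mathbf{X})$ is rotation-invariant, so the same phase index is chosen for $\mathbf{X}$ and $R\mathbf{X}$; and (ii) the polyphase extraction at a fixed index commutes with the rotation action, so rotating then extracting is the same as extracting then rotating. Combining (i) and (ii) immediately yields the theorem.

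For step (i), I would invoke Lemma~\ref{lemma:steerable_conv}: the steerable filter $\tilde f_\theta$ satisfies $\tilde f_\theta(R\mathbf{X}) = \rho(R)\,\tilde f_\theta(\mathbf{X})$, where $\rho(R)$ denotes the block-diagonal Wigner-$D$ representation acting on the channel index $(J,m)$. Global average pooling over the spatial set $V$ projects onto the trivial representation (equivalently, isolates the $J{=}0$ coefficient, which is fixed by every Wigner block), so
\begin{equation}
f_\theta\!\left(\Psi(R\mathbf{X})_{(p,q,r)}\right) = f_\theta\!\left(\Psi(\mathbf{X})_{(p,q,r)}\right)
\end{equation}
for every phase index $(p,q,r)$. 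Plugging this into Eq.~\ref{eq:3} gives $p_\theta(k \mid R\mathbf{X}) = p_\theta(k \mid \mathbf{X})$, so $\arg\max$ (or its Gumbel-Softmax relaxation) selects the same $(p^*,q^*,r^*)$ for both inputs. For step (ii), I would then write
\begin{equation}
\text{APT}(R\mathbf{X}) \;=\; \Psi(R\mathbf{X})_{(p^*,q^*,r^*)} \;=\; R\,\Psi(\mathbf{X})_{(p^*,q^*,r^*)} \;=\; R\,\text{APT}(\mathbf{X}),
\end{equation}
where the middle equality uses the commutation of the polyphase extraction with the $SO(3)$-action.

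The main obstacle will be justifying that middle equality, because unlike translations, a generic rotation $R\in SO(3)$ does not map the integer voxel lattice onto itself, and therefore does not literally permute the discrete polyphase components. I would handle this by adopting the standard steerable-CNN convention that treats $\mathbf{X}$ as samples of a continuous field on $\mathbb{R}^3$ and $\Psi(\cdot)_{(p,q,r)}$ as a fixed sub-lattice restriction; under this convention the rotation action is defined on the continuous field (with any required interpolation absorbed into the resampling operator), so sub-lattice restriction commutes with rotation up to the same resampling that already defines $R\mathbf{X}$. If one prefers a strictly discrete statement, the same argument goes through for the subgroup of octahedral rotations that preserve the cubic grid, which is sufficient for alignment training since the remaining continuous rotation degrees of freedom are absorbed by the regression head downstream.
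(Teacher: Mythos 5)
Your proof follows the same two-step structure as the paper's: (i) show the selection probabilities are rotation-invariant via the steerable-convolution lemma, so the same phase index is chosen for $\mathbf{X}$ and $R\mathbf{X}$; (ii) commute the fixed-index polyphase extraction with the rotation to conclude. So this is not a different route. What you do that the paper does not, however, is patch two soft spots in its own argument. First, the paper says $f_\theta$ is ``rotation-equivariant by Lemma~\ref{lemma:steerable_conv}'' and then immediately concludes that $p_\theta(k\mid R\mathbf{X})=p_\theta(k\mid\mathbf{X})$; equivariance alone does not give that, since $\rho(R)$ would still act nontrivially on the pooled output unless the output lives in the trivial representation. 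Your remark that the global average pooling isolates the $J{=}0$ coefficient, which is fixed by every Wigner block, is exactly the missing justification for why the pooled scalar is rotation-\emph{invariant} and hence why the softmax is unchanged. Second, the paper writes $\Psi(R\mathbf{X})_{(p^*,q^*,r^*)} = R\,\Psi(\mathbf{X})_{(p^*,q^*,r^*)}$ with no comment, but as you observe, a generic $R\in SO(3)$ does not preserve the cubic voxel lattice, so it does not map one sublattice restriction to another; the equality only holds exactly for the octahedral subgroup, or in the continuous-field sense where $R\mathbf{X}$ is itself defined by resampling and the sublattice restriction is taken after that resampling. Either caveat should appear in the proof; without one, Theorem~\ref{thm:rotation_equivariance} as stated is stronger than what the discrete construction supports. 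In short, you reproduce the paper's argument and also identify the two places where it is loose, which is exactly what a careful reader of this proof should do.
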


\begin{proof}
The feature extraction function with steerable convolutions and global pooling,

\begin{equation}
f_\theta(\Psi(\mathbf{X})_{(p,q,r)}) = \frac{1}{V} \sum_{v \in V} \tilde{f}_\theta(\Psi(\mathbf{X})_{(p,q,r)})[v],
\end{equation}

is rotation-equivariant by Lemma~\ref{lemma:steerable_conv}. Thus, the selection probabilities satisfy

\begin{equation}
p_\theta(k = (p,q,r) \mid R \mathbf{X}) = p_\theta(k = (p,q,r) \mid \mathbf{X}).
\end{equation}

Therefore, the same polyphase component is selected, and

\begin{align}
\text{APT}(R \mathbf{X}) &= \Psi(R \mathbf{X})_{(p^*, q^*, r^*)}  \notag \\
&= R \Psi(\mathbf{X})_{(p^*, q^*, r^*)}  \notag \\
&= R \, \text{APT}(\mathbf{X}).
\end{align}

\end{proof}

\subsection{Rotation Representations in \texorpdfstring{$SO(3)$}{SO(3)}}  
\label{sec:rot_rep}

In the Hitchhiker's Guide to $SO(3)$~\citep{geist2024learning}, a variety of rotation representations are systematically categorized based on their mathematical properties and implications for learning. Inspired by this framework, we explore three representative parameterizations for 3D subtomogram alignment: Euler angles, $\mathbb{R}^6$ with Gram-Schmidt orthonormalization (GSO), and $\mathbb{R}^9$ with singular value decomposition (SVD). While each representation offers distinct trade-offs in terms of continuity, redundancy, and interpretability, we find that Euler angles surprisingly perform best for our alignment task. Detailed results are provided in Appendix~\ref{sec:ablation_rotrep}, and an overview of their properties is summarized in Table~\ref{tab:representations}.

\begin{table}[h]
\caption{Comparison of selected $SO(3)$ rotation representations used in 3D subtomogram alignment.}
\label{tab:representations}
\centering
\footnotesize
\begin{adjustbox}{width=\textwidth}
\begin{tabular}{lccccc}
\toprule
\textbf{Representation} & \textbf{Notation} & \textbf{Dim} & \textbf{$g(R)$ continuous} & \textbf{Uses Angles} & \textbf{Double Cover} \\
\midrule
Euler angles & Euler & 3 & \xmark & \cmark & \cmark \\
$\mathbb{R}^6$ + Gram-Schmidt orthonormalization & $\mathbb{R}^6 + GSO$  & 6 & \cmark & \xmark & \xmark \\
$\mathbb{R}^9$ + Singular Value Decomposition & $\mathbb{R}^9 + SVD$  & 9 & \cmark & \xmark & \xmark \\
\bottomrule
\end{tabular}
\end{adjustbox}
\normalsize
\end{table}

\paragraph{Euler Angles.}
One classical method for representing 3D rotations is to use three angular parameters $(\alpha, \beta, \gamma) \in [-\pi, \pi)^3$, commonly known as Euler angles. A rotation matrix can then be constructed by composing a series of axis-specific rotations:
\begin{equation}
R(\alpha,\beta,\gamma) = R_3(\gamma)\,R_2(\beta)\,R_1(\alpha),
\end{equation}
where $R_i$ applies rotation about the $i$-th axis. Despite its intuitive formulation, this representation introduces issues in practical learning systems. The angle domain exhibits discontinuities due to periodic boundaries, and different sets of angles may correspond to the same rotation. For instance, both $[0, \pi/2, 0]$ and $[-\pi/2, \pi/2, -\pi/2]$ produce identical transformations in $SO(3)$ . These ambiguities and the lack of continuity in the mapping from rotation matrices to angles make Euler angles suboptimal for learning tasks, as documented in prior works \citep{huynh2009metrics, zhou2019continuity, bregier2021deep, pepe2022learning}.

\paragraph{$\mathbb{R}^6$ + Gram-Schmidt Orthonormalization (GSO).}
An alternative parameterization leverages two unconstrained 3D vectors $(\nu_1, \nu_2) \in \mathbb{R}^{3 \times 2}$ to implicitly define a rotation. Using the Gram-Schmidt process, these vectors are orthonormalized to recover a valid rotation matrix in $SO(3)$ \citep{zhou2019continuity}. The procedure involves normalizing $\nu_1$ to obtain a unit vector $\mathbf{v}_1$, removing its projection from $\nu_2$ to form $\mathbf{v}_2^\perp$, normalizing that to get $\mathbf{v}_2$, and finally constructing $\mathbf{v}_3$ as the cross product $\mathbf{v}_1 \times \mathbf{v}_2$. The resulting matrix

\begin{equation}
R = [\mathbf{v}_1, \mathbf{v}_2, \mathbf{v}_3]
\end{equation}

is guaranteed to be orthonormal and lie within $SO(3)$. This formulation is continuous and overcomes angular discontinuities, and generalizes naturally to higher-dimensional rotation groups \citep{macdonald2010linear}.

\paragraph{$\mathbb{R}^9$ + Singular Value Decomposition (SVD).}
A third formulation represents a rotation via an unconstrained $3 \times 3$ real matrix $M \in \mathbb{R}^{3 \times 3}$, which is projected onto the space of valid rotations using singular value decomposition (SVD). Decomposing $M$ as $M = U \Sigma V^\top$
with $U, V \in \mathbb{R}^{3 \times 3}$ orthogonal and $\Sigma$ diagonal, the projection onto $SO(3)$ is defined as
\begin{equation}
f(M) = U \cdot \mathrm{diag}(1, 1, \det(UV^\top)) \cdot V^\top.
\end{equation}
This ensures that the resulting matrix has unit determinant and is the closest rotation (in Frobenius norm) to $M$ \citep{levinson2020analysis}. The forward and inverse mappings are well-defined and continuous, making this representation attractive for learning.

\clearpage
\section{CryoEngine}
\subsection{Workflow}
\label{sec:simulation-pipeline}
\textbf{Structure Modeling.} We curated a library that includes a total of 240 distinct 20S proteasome structures and 212 distinct 30S ribosome structures, capturing a wide range of compositional heterogeneity. All structures are sourced from experimentally validated atomic models archived in the RCSB Protein Data Bank (PDB). Each atomic model is embedded in a cubic grid with a fixed voxel size of 10 Å, the edges of which are extended by a solvent margin that scales with the van der Waals envelopes of its outermost atoms. Electron scattering is approximated by placing a three-dimensional isotropic Gaussian at every atomic center. The kernel width is element-specific, so heavier atoms contribute broader, higher-amplitude densities. Summing these Gaussians yields a continuous Coulombic potential that is subsequently low-pass filtered with a Gaussian whose standard deviation corresponds to half the target resolution (\verb|~|30 Å), thereby matching the spatial bandwidth of cryo-ET reconstructions. The map is then linearly normalised to unit maximum and voxels below 0.5\% of the peak are suppressed, producing a clean, moderate-resolution electron-density volume in MRC format that serves as ground-truth input for all downstream simulation stages.

\textbf{Placement Strategy.} Particle centers are generated by Poisson-disk sampling within the interior of a $500\times 500\times 200$-voxel simulation box. To guarantee that every $32^{3}$-voxel subtomogram contains a single macromolecule, we adopt a Poisson-disk sampling strategy in the 3D space.  Each accepted center becomes a hard exclusion sphere of radius $R_{\text{ex}} = \tfrac{1}{2}L_{\text{box}} + \Delta$, where $L_{\text{box}}{=}32$ and $\Delta$ is a three-voxel safety margin that absorbs later tilt series padding and centring jitter.  Subsequent candidates falling inside any existing sphere or crossing the volume boundary are rejected, otherwise they are added to the accepted set and the process continues until the target occupancy is reached.  The resulting point pattern is thus collision-free by construction, ensuring that no two particles overlap and that each subtomogram box is entirely filled by a single density.  For each accepted center we draw an orientation from a uniform distribution over $SO(3)$ using the Shoemake quaternion algorithm, thereby furnishing dense, unbiased pose coverage for alignment training. The 3D coordinates and quaternions are stored as ground-truth metadata for every instance, yielding a collection of large virtual samples in which each protein appears at a known position and orientation.

\textbf{Tilt Series Simulation.} For every virtual samples we simulate the cryo-ET image acquisition process by generating a tilt series of 2D projections from each 3D volume. The 3D density is first re-oriented so that the nominal tilt axis coincides with the detector $y$-axis, after which a sequence of line-integral projections is computed at evenly spaced angles in $2^{\circ}$ steps. CryoEngine uses a $-60^\circ$ to $+60^\circ$ tilt range by default to reproduce cryo-ET missing wedge, whereas for the pretraining set we used $-90^\circ$ to $+90^\circ$ to decouple noise modeling from anisotropic information loss. At each tilt angle, the volume is rotated by the requested angle via cubic B-spline interpolation, and the rotated density is then integrated along the electron-beam ($z$) direction with an oversampling factor of two to suppress aliasing; the oversampled image is finally decimated to the detector pixel size (10 Å). Each projection is further perturbed by a random sub-pixel in-plane shift to mimic stage drift in practice. CryoEngine supports tilt series noise injection, whereas for the benchmark subtomograms we inject calibrated Gaussian noise later at the subtomogram level to ensure an exact voxel-wise SNR for comparative experiments by preventing any SNR degradation during the reconstruction process. The result of this stage is a set of raw tilt series images for each volume, aligned input for the downstream reconstruction module.

\textbf{Alignment and Reconstruction.} The collection of simulated tilt images is then processed to correct the imposed translations. Each projection is Fourier transformed and phase correlated with a reference initialised by the 0° view and updated iteratively as the average of the currently aligned images; the correlation peak is fitted quadratically to estimate translation vectors with fractional-pixel precision. These shifts are applied cumulatively so that every projection shares a common coordinate frame, and the resulting parameters $(\Delta x_i,\Delta y_i)$ are stored together with their tilt angles $\theta_i$. A global refinement then determines a single tilt axis orientation and vertical offset by minimising the mean-squared reprojection error, producing a geometrically consistent stack.

Then we employ filtered weighted back-projection for 3-D reconstruction.  Each aligned projection is multiplied in Fourier space by a Hann-tapered ramp filter and rescaled by $w(\theta)=|\cos\theta|$ to compensate for the non-uniform angular sampling.  The filtered images are then back-projected along their respective beam directions into a $500\times 500\times 200$ voxel volume, with trilinear interpolation used to accumulate contributions on the Cartesian grid.

\textbf{Subtomogram Extraction.} From each reconstructed volume, we extract a smaller cubic subtomogram around the location of the macromolecule. We used a box size of $32^3$ voxels for each subtomogram. The center of this crop is based on the known coordinates of the inserted structure. To make the dataset more realistic, we add a minor random offset to the cropping center along each axis. This simulates the practical scenario of particle picking in which the centering of a particle in a subtomogram is not perfect. Before cropping, a coordinate-based quality check confirms that the entire $32^{3}$ cube stays inside the tomogram and that no other recorded particle centre lies within 17 voxels of the proposed centre, candidates failing either test are skipped. After extraction, each subtomogram is a $32\times32\times32$ voxel volume that ideally contains the particle roughly centered amidst some surrounding context. We carry forward the ground-truth metadata for each subtomogram, including its class label, the exact orientation that was applied, and the precise position offset within the subtomogram. Additionally, CryoEngine also outputs a binary mask for each subtomogram that labels the voxels belonging to the particle versus background.

\textbf{Noise Augmentation.} To emulate the severe photon-limited conditions of cryo-ET, each clean $32^{3}$ subtomogram is replicated at four calibrated noise levels in addition to a nearly noise-free reference.  Let $v_{\text{sig}}=\operatorname{Var}(S)$ denote the voxel-wise variance of the signal component of a clean subtomogram.  For a desired signal-to-noise ratio $\mathrm{SNR}_{\text{tar}}$ (defined as $v_{\text{sig}} / \sigma^{2}$), we draw zero-mean Gaussian noise $N\sim\mathcal{N}(0,\sigma^{2})$ with $\sigma^{2}=v_{\text{sig}}/\mathrm{SNR}_{\text{tar}}$ and add it voxel-wise: $V_{\text{noisy}} = S + N$.  Target values $\mathrm{SNR}_{\text{tar}}\in\{100,\,0.10,\,0.05,\,0.03,\,0.01\}$ produce five versions of every subtomogram that span the full experimental range from almost deterministic to extremely noise-dominated.  Because the same $v_{\text{sig}}$ is used to calibrate $\sigma^{2}$ for each individual volume, the synthetically generated noise preserves relative contrast differences across structures while enforcing the intended global SNR. These augmented versions are used to evaluate and pretrain our models under different noise regimes, ensuring that our learned representations are robust to the severity of cryo-ET noise.

\subsection{Dataset Composition}
\label{sec:dataset-composition}
We simulated over 452 distinct structural classes, with 2,000 subtomograms per class. To illustrate our synthetic dataset, we visualized 84 examples of the 30S ribosome (Fig.~\ref{fig:ribosomes-1}, Fig.~\ref{fig:ribosomes-2}, and Fig.~\ref{fig:ribosomes-3}) and 56 examples of the 20S proteasome (Fig.~\ref{fig:proteasomes-1} and Fig.~\ref{fig:proteasomes-2}). For each selected class, we show three elements: (1) the input atomic model used to initiate simulation; (2) the structure density map after element-specific Gaussian scattering and low-pass filtering; (3) different slices of a synthetic noise-free subtomogram. These visualizations demonstrate the compositional heterogeneity and the high-fidelity resemblance to experimental subtomograms of our synthetic datasets. Additionally, we generated subtomograms at various calibrated noise levels. 
\begin{figure}[htbp]
    \centering
    \includegraphics[width=0.9\linewidth]{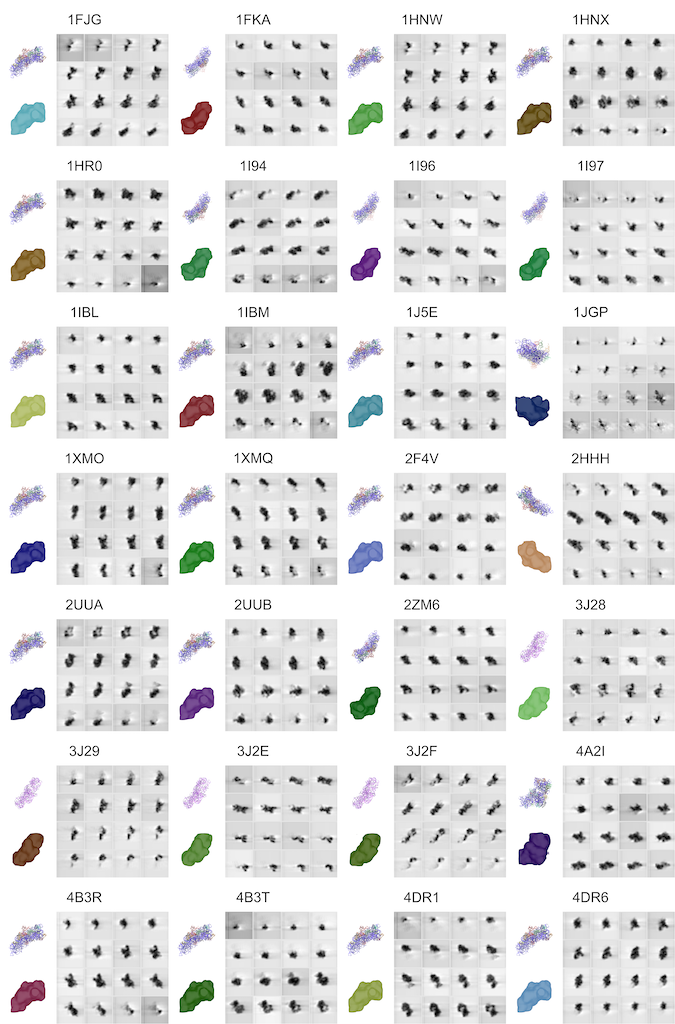}
    \caption {Diverse 30S ribosomes and corresponding noise-free subtomograms.}
    \label{fig:ribosomes-1}
\end{figure}
\begin{figure}[htbp]
    \centering
    \includegraphics[width=0.9\linewidth]{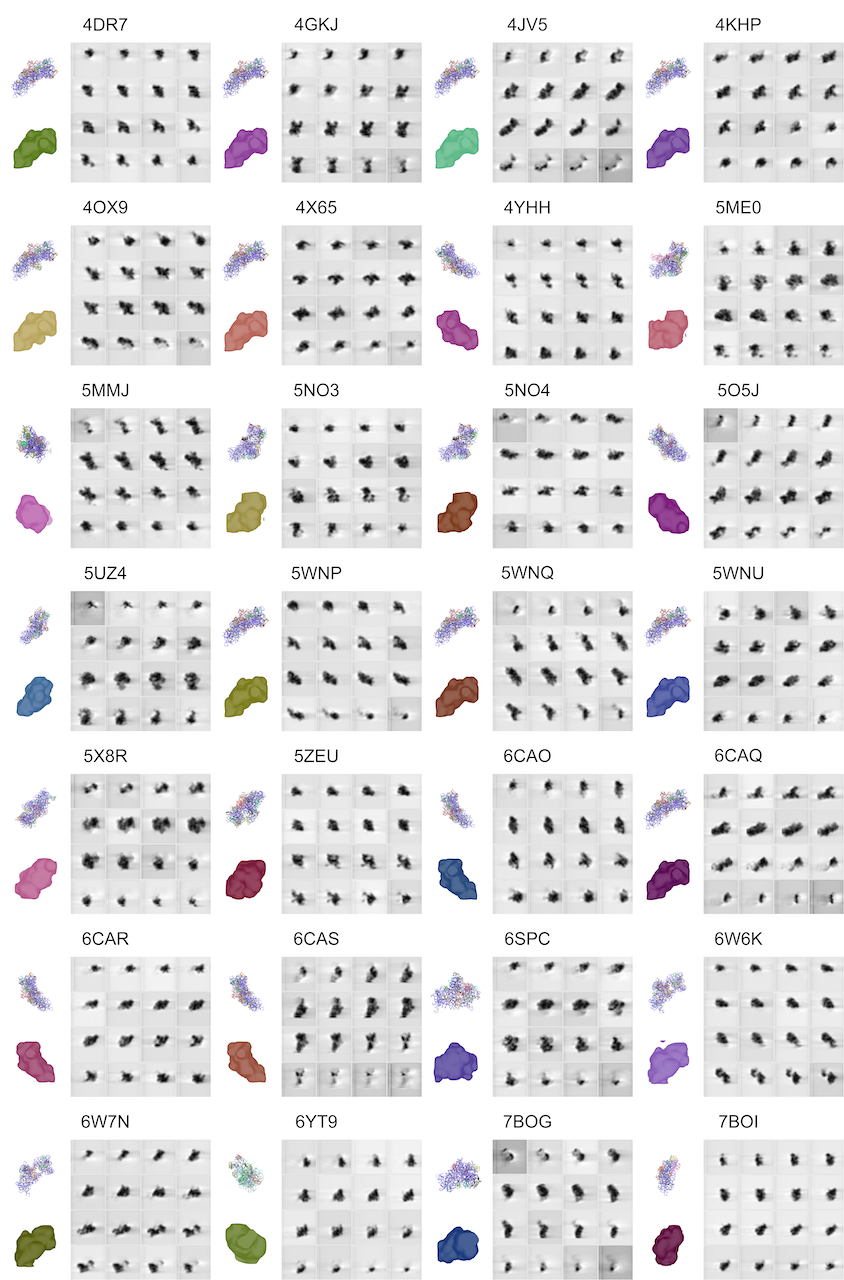}
    \caption {Diverse 30S ribosomes and corresponding noise-free subtomograms.}
    \label{fig:ribosomes-2}
\end{figure}
\begin{figure}[htbp]
    \centering
    \includegraphics[width=0.9\linewidth]{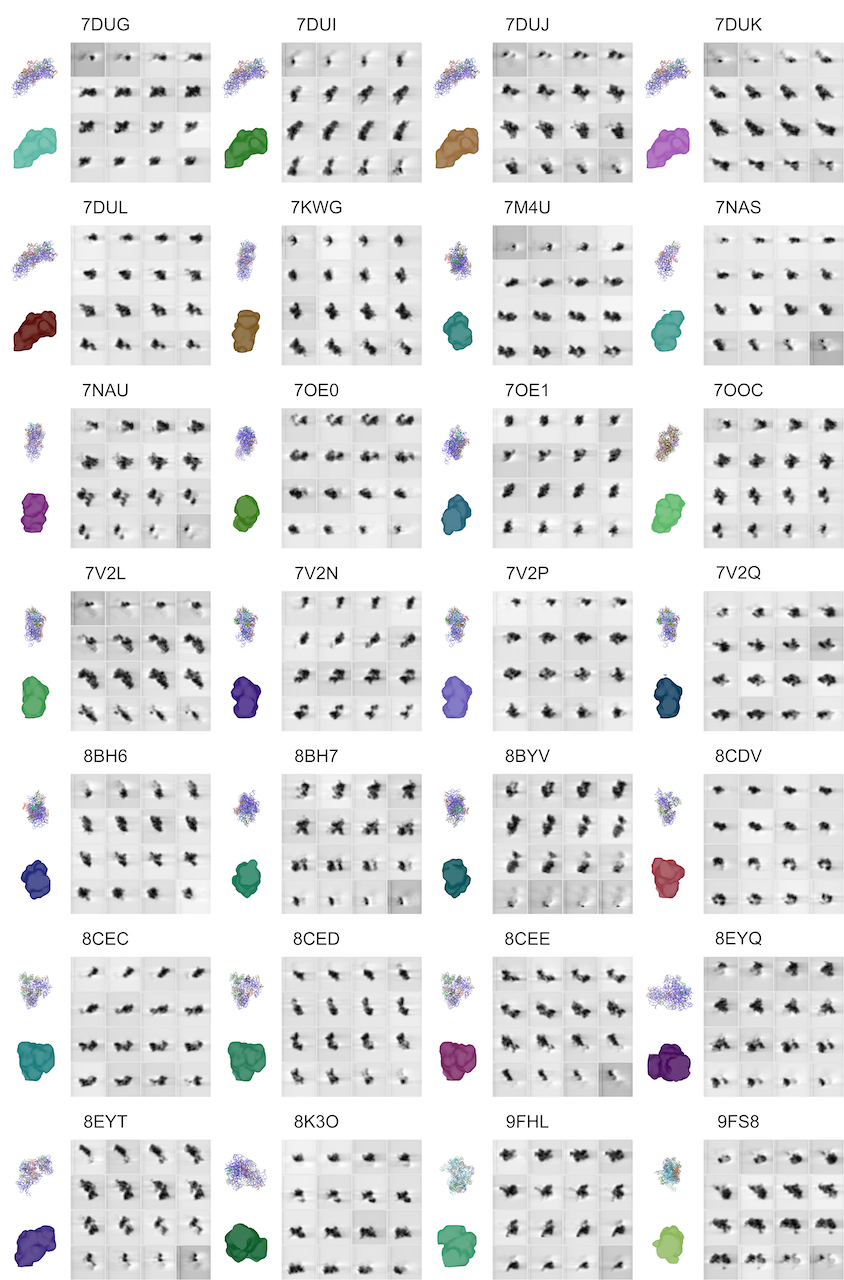}
    \caption {Diverse 30S ribosomes and corresponding noise-free subtomograms.}
    \label{fig:ribosomes-3}
\end{figure}
\begin{figure}[htbp]
    \centering
    \includegraphics[width=0.9\linewidth]{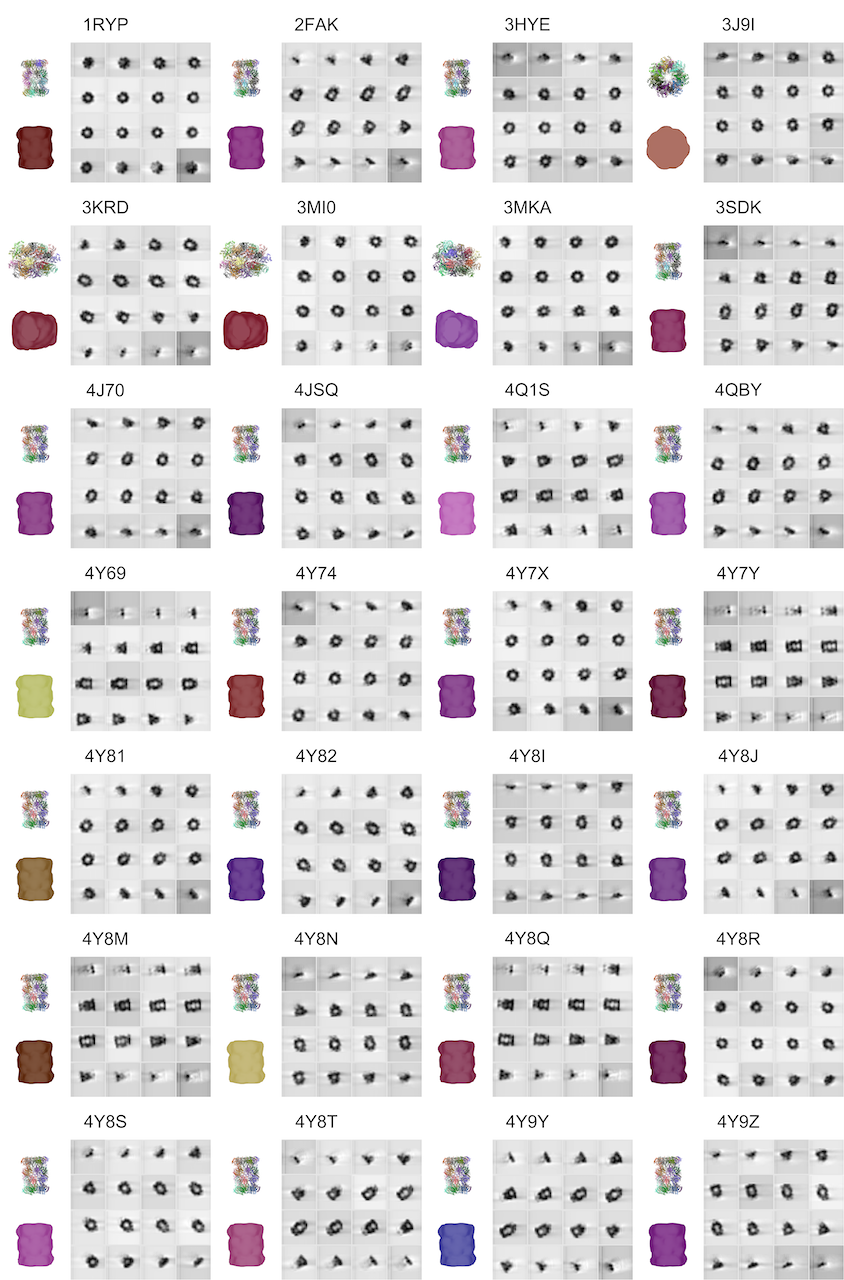}
    \caption {Diverse 20S proteasomes and corresponding noise-free subtomograms.}
    \label{fig:proteasomes-1}
\end{figure}
\begin{figure}[htbp]
    \centering
    \includegraphics[width=0.9\linewidth]{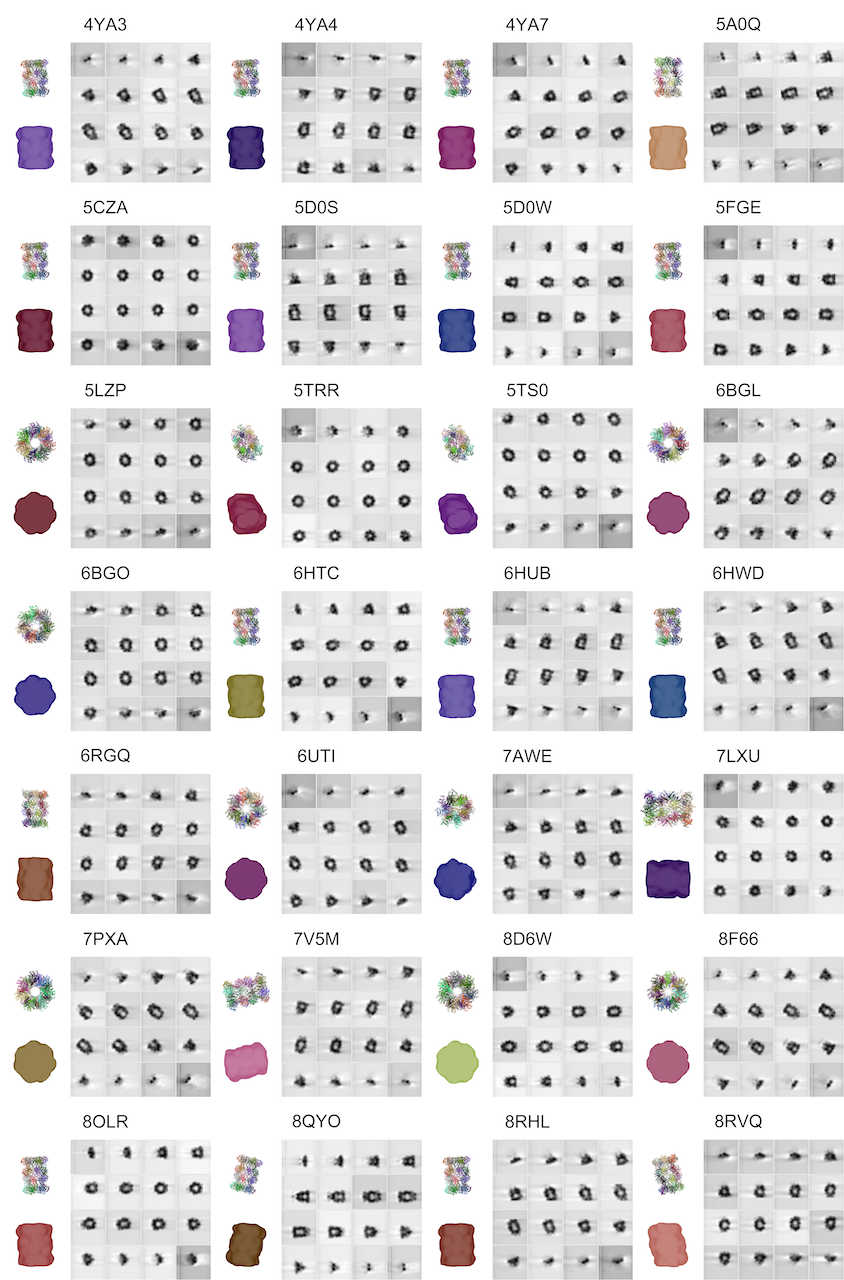}
    \caption {Diverse 20S proteasomes and corresponding noise-free subtomograms.}
    \label{fig:proteasomes-2}
\end{figure}

\clearpage
\section{Noise-resilient Contrastive Learning Strategy}
\label{sec:con_pseudo}
Algorithm~\ref{alg:contrastive} summarizes the workflow of noise-resilient contrastive learning (NRCL). 
The procedure begins by generating two augmented views of each subtomogram, together with corresponding clean and noise-amplified variants. 
A base encoder and a momentum encoder then extract latent representations from these inputs. 
The learning objective combines three parts: (1) an instance-level contrastive loss that enforces consistency between different augmented views, 
(2) an additional term (e.g., Wasserstein-based) that encourages base and momentum representations to remain aligned at the distribution level, 
and (3) a noise-aware contrast that treats clean versions as positives and noise-amplified versions as negatives. 
Together, these objectives yield representations that are geometrically consistent while being robust to cryo-ET noise.

\begin{algorithm2e}[h]
\caption{Workflow of NRCL}
\label{alg:contrastive}
\SetAlgoLined
\KwIn{Subtomogram $X$, clean version $X_{\text{clean}}$, noisy version $X_{\text{noisy}}$, temperature $\tau$, base encoder+projector+predictor $f_q$, momentum encoder+projector $f_k$, weight $\lambda_w$}
\KwOut{Contrastive loss $\mathcal{L}$ for $X, X_{\text{clean}}, X_{\text{noisy}}$}
\For{each batch in loader}{
    Randomly sample transformation parameters $T$ and $T'$\;
    $X_1 \leftarrow TX$\text{; } $X_{1,\text{noisy}} \leftarrow TX_{\text{noisy}}$\text{; } $X_{1,\text{clean}} \leftarrow TX_{\text{clean}}$\text{; } 
    $X_2 \leftarrow T'X$\;
    \tcp{forward through encoders}
    $q_1, q_2 \leftarrow f_q(X_1,X), f_q(X_2,X)$\;
    $k_1, k_2 \leftarrow f_k(X_1,X), f_k(X_2,X)$\;
    $k_{\text{clean}}, k_{\text{noisy}} \leftarrow f_k(X_{1,\text{clean}},X), f_k(X_{1,\text{noisy}},X)$\;
    \tcp{compute instance-level loss}
    $\mathcal{L}_{\text{instance}} \leftarrow L_{\text{sym}}(q_1,k_2,\tau)+ \lambda_w L_{\text{wass}}(q_1,k_2)  + L_{\text{sym}}(q_2,k_1,\tau) + \lambda_w L_{\text{wass}}(q_2,k_1)$\;
    \tcp{compute noise-aware loss}
    $\mathcal{L}_{\text{noise}} \leftarrow L_{\text{InfoNCE}}(q_1, k_{\text{clean}}, k_{\text{noisy}},\tau)$\;
    \tcp{compute total loss}
    $\mathcal{L} \leftarrow \mathcal{L}_{\text{instance}} + \mathcal{L}_{\text{noise}}$\;
}
\end{algorithm2e}

\clearpage
\section{Classification}
\label{sec:cls}

\subsection{Experiment Settings}
\paragraph{Datasets.}

\begin{itemize}[leftmargin=2em]
\item \textit{\fontseries{l}\selectfont CryoEngine.} As described in Sec.~\ref{sec:data}, CryoEngine is used to help with representation learning. The composition and statistics is stated in Appendix~\ref{sec:dataset-composition}.
\item \textit{\fontseries{l}\selectfont Benchmark simulation dataset.} A realistically simulated cryo-ET benchmark dataset at five different SNR levels(100, 0.1, 0.05, 0.03, 0.01)\citep{zeng2020gum}. It contains five representative heterogeneous structures(spliceosome(5LQW), RNA polymeraserifampicin complex(1I6V), RNA polymerase II elongation complex(6A5L), ribosome(5T2C), and capped proteasome(5MPA)). Each structure consists of 1000 images at each SNR level, with subtomograms having a size of 32³.
\end{itemize}

\paragraph{Training Implementation.}
We employ NRCL(see Sec.~\ref{sec:noise}) to encourage the model to learn robust representations. For each sample, the noisy view is generated through a combination of 3D spatial augmentations (random rotations and translations), additive Gaussian noise, random solarization (with probability 0.2), and brightness adjustment (within $\pm 20\%$ of the mean intensity). These perturbations are designed to simulate noise characteristics commonly observed in real tomographic tilt series acquisition. Training is conducted for 200 epochs using a batch size of 2048. The learning rate follows the square-root scaling rule which is
$\text{learning rate} = \sqrt{\text{batch size}/{512}} \times 10^{-5}$ .
The Weight decay is fixed at $1 \times 10^{-4}$. For the contrastive learning framework, we adopt a temperature parameter of 0.1 and a momentum coefficient of 0.99 for the momentum encoder.\\
In the finetune phase, for each benchmark test dataset corresponding to a specific SNR level, we fine-tune the classification head for 40 epochs using a combination of all SNR-100 samples and 10\% of the samples from the target low-SNR level. The target low data is split into training, validation, and test sets using a 1:1:8 ratio. The finetuning uses a learning rate of $1 \times 10^{-6}$ with batch size 16.

\subsection{Introduction of Baselines}
Here, we provide an introduction to the baseline cryo-ET classification methods. We include both supervised methods and unsupervised methods using different learning strategies. They are state-of-the-art methods including:
\label{sec:baseline_cls}
\begin{itemize}
    \item \textbf{ConvNeXt v1} \citep{ConvNextV1}: A modified CNN-based model that integrates elements inspired by transformers. While preserving the efficiency and locality of traditional CNNs, it enhances the capacity for global feature modeling and improves training scalability.
    \item \textbf{ConvNeXt v2} \citep{ConvNextV2}: A variant of ConvNeXt v1, which introduces global response normalization (GRN), improved depthwise convolution scaling, and an advanced training recipe. These additions further boost the model’s representational power and convergence speed
    \item \textbf{ViT} \citep{dosovitskiy2021an}: A basic transformer-based model for vision tasks.
    \item \textbf{PVT} \citep{PVT}: A hierarchical transformer with a pyramidal structure, improving its ability to model features at multiple scales.
    \item \textbf{SwinViT} \citep{SwinViT}: A variant of ViT which incorporates a shifted window mechanism to perform efficient self-attention within local regions, thereby achieving both high performance and computational efficiency.
    \item \textbf{MoCo v3}\citep{MocoV3} A contrastive learning framework that leverages a dynamic dictionary and momentum encoder to learn useful representations without labels.
    \item \textbf{MAE}\citep{MAE} A self-supervised learning paradigm which learns by reconstructing missing image patches. It promotes semantic understanding from partial inputs.
    \item \textbf{DINO v2}\citep{oquab2024dinov} A powerful self-supervised vision model that leverages large-scale training of vision transformers to learn high-quality, general-purpose image representations.
\end{itemize}

\subsection{Classification Recall Across Diverse Macromolecular Structures}
\label{sec:recall}
To provide a more detailed evaluation of model performance, we report the per-class recall for all five categories. As shown in Tables~\ref{tab:classification_5LQW}-\ref{tab:classification_6A5L}, due to the poor performance of some baseline models, the classifier completely failed to identify any correct instances in some categories. These cases are marked with a hyphen (-) in the table for clarity. Moreover, certain non-zero recall values may appear deceptively high due to the model has completely overfitted to specific classes, rather than reflecting meaningful discriminative ability. 

\begin{table}[ht]
\centering
\setlength{\tabcolsep}{3pt}
\caption{Classification recall (\%) for PDB ID 5LQW across different SNR levels.}
\begin{tabular}{lccccc}
\toprule
\textbf{Method} & \textbf{SNR 100} & \textbf{SNR 0.1} & \textbf{SNR 0.05} & \textbf{SNR 0.03} & \textbf{SNR 0.01} \\ 
\midrule
ConvNeXt v1 & 100.00& 69.08& 53.96& 50.23& 41.15\\
ConvNeXt v2 & 99.90& 2.34& -& -& 1.06\\
ViT & 99.70& 1.10& 0.40& 0.20& 0.60\\
PVT v2 & 99.60& 23.26& 7.96& 7.05& 5.13\\
SwinViT-B & 99.40& 60.40& 50.50&40.40 & 28.00\\
SwinViT-S & 99.30& 71.10& 39.40& 42.40& 14.90\\
Moco v3 & 99.60& 38.50& 32.70& 24.80& 12.50\\
MAE &98.40 & 47.50& 35.00& 29.00& 37.00\\
DINO v2 &99.20 & 5.90& 2.90& 1.80& 2.20\\
\midrule
\textbf{Ours} &97.90& 53.25& 42.25& 63.38& 39.13\\
\bottomrule
\end{tabular}
\label{tab:classification_5LQW}
\end{table}

\begin{table}[ht]
\centering
\setlength{\tabcolsep}{3pt}
\caption{Classification recall (\%) for PDB ID 1I6V across different SNR levels.}
\begin{tabular}{lccccc}
\toprule
\textbf{Method} & \textbf{SNR 100} & \textbf{SNR 0.1} & \textbf{SNR 0.05} & \textbf{SNR 0.03} & \textbf{SNR 0.01} \\ 
\midrule
ConvNeXt v1 & 99.70& -& -& -& -\\
ConvNeXt v2 & 100.00& -& -& -& -\\
ViT & 99.40& 35.40& 21.90& 17.40& 13.10\\
PVT v2& 98.40& -& -& -& -\\
SwinViT-B & 99.10& 41.90& 21.50& 18.10& 7.10\\
SwinViT-S & 99.30& 43.50& 22.80& 16.00& 25.10 \\
Moco v3 &99.60 & 53.70& 21.60& 8.20& 0.10\\
MAE & 98.90& 49.10& 29.80 & 17.50& 35.50 \\
DINO v2 & 34.37& 52.75& 27.57& 21.18& 15.28\\
\midrule
\textbf{Ours}& 99.50& 69.50& 52.50& 33.63& -\\
\bottomrule
\end{tabular}
\label{tab:classification_1I6V}
\end{table}

\begin{table}[ht]
\centering
\setlength{\tabcolsep}{3pt}
\caption{Classification recall (\%) for PDB ID 5MPA across different SNR levels.}
\begin{tabular}{lccccc}
\toprule
\textbf{Method} & \textbf{SNR 100} & \textbf{SNR 0.1} & \textbf{SNR 0.05} & \textbf{SNR 0.03} & \textbf{SNR 0.01} \\ 
\midrule
ConvNeXt v1 & 99.70& -& -& -& -\\
ConvNeXt v2 & 99.30& -& -& -& -\\
ViT & 94.10& -& -& -& -\\
PVT v2 & 98.00& -& -& -& -\\
SwinViT-B & 98.40& 42.40& 14.00& 4.10&8.30 \\
SwinViT-S & 98.50& 46.90& 20.80& 8.10& 17.60\\
Moco v3 & 98.40& 11.00& 0.50& 0.20& -\\
MAE &96.00 & 25.20& 19.10& 19.10& 7.60\\
DINO v2 & 91.00& 1.10& 0.3& -& -\\
\midrule
\textbf{Ours} & 95.20& 64.38& 40.13& 31.13& 29.75\\
\bottomrule
\end{tabular}
\label{tab:classification_5MPA}
\end{table}

\newpage
\begin{table}[ht]
\centering
\setlength{\tabcolsep}{3pt}
\caption{Classification recall (\%) for PDB ID 5T2C across different SNR levels.}
\begin{tabular}{lccccc}
\toprule
\textbf{Method} & \textbf{SNR 100} & \textbf{SNR 0.1} & \textbf{SNR 0.05} & \textbf{SNR 0.03} & \textbf{SNR 0.01} \\ 
\midrule
ConvNeXt v1 & 100.00& 93.92& 85.04& 79.77& 68.85\\
ConvNeXt v2 & 100.00& 99.54& 100.00& 100.00& 99.97\\
ViT & 100.00& 96.70& 93.40& 91.90& 88.90\\
PVT v2& 99.90& 98.74& 97.03& 97.90& 95.87\\
SwinViT-B & 99.50& 71.60& 62.40& 39.30& 15.10\\
SwinViT-S & 100.00& 70.40& 55.90& 44.60& 33.50\\
Moco v3 & 99.90& 94.90& 91.70& 90.20& 87.70\\
MAE & 99.80& 62.90& 39.30& 50.70& 20.30\\
DINO v2 & 46.70& 97.00& 95.20& 93.70& 88.60\\
\midrule
\textbf{Ours} & 100.00& 90.00& 73.38& 69.25& 42.00\\
\bottomrule
\end{tabular}
\label{tab:classification_5T2C}
\end{table}

\begin{table}[ht]
\centering
\setlength{\tabcolsep}{3pt}
\caption{Classification recall (\%) for PDB ID 6A5L across different SNR levels.}
\begin{tabular}{lccccc}
\toprule
\textbf{Method} & \textbf{SNR 100} & \textbf{SNR 0.1} & \textbf{SNR 0.05} & \textbf{SNR 0.03} & \textbf{SNR 0.01} \\ 
\midrule
ConvNeXt v1 & 99.50& -& -& -& -\\
ConvNeXt v2 & 100.00& -& -& -& -\\
ViT & 96.70& -& -& -& -\\
PVT v2& 99.800& -& -& -& -\\
SwinViT-B & 97.60& 39.80& 27.50& 38.00& 46.10\\
SwinViT-S & 97.60& 26.90& 17.70& 25.00& 18.90\\
Moco v3 & 98.90& 6.90& 2.70& 3.00&5.90 \\
MAE & 95.80& 38.90& 22.80& 32.50& 18.40\\
DINO v2 & 80.90& 5.30& 0.4& 0.3& -\\
\midrule
\textbf{Ours} & 91.60& 53.50& 45.75& 24.25& 26.63\\
\bottomrule
\end{tabular}
\label{tab:classification_6A5L}
\end{table}

\clearpage
\section{Alignment}
\label{sec:align}

\subsection{Experiment Settings}
\paragraph{Datasets.}

A realistically simulated cryo-ET benchmark dataset \citep{zeng2020gum} at four different SNR levels(0.1, 0.05, 0.03, 0.01). It contains five representative heterogeneous structures(spliceosome(PDB ID: 5LQW), RNA polymeraserifampicin complex(PDB ID: 1I6V), RNA polymerase II elongation complex(PDB ID: 6A5L), ribosome(PDB ID: 5T2C), and capped proteasome(PDB ID: 5MPA)). Each structure consists of 1000 images at each SNR level, with subtomograms having a size of 32³. Fig.~\ref{fig:different-snr} presents subtomograms of five complexes under different SNR levels.
\begin{figure}[htbp]
    \centering
    \includegraphics[width=0.9\linewidth]{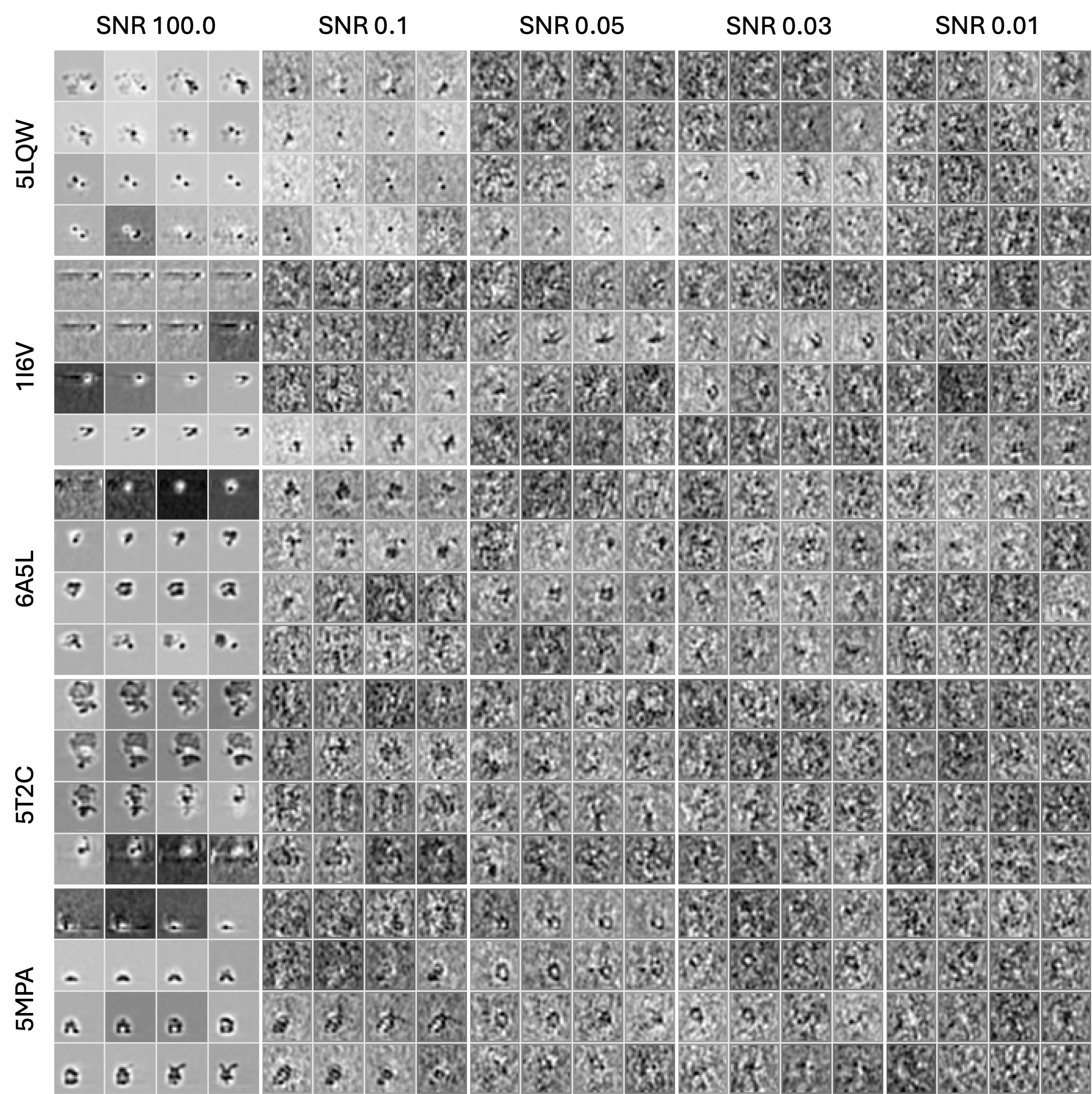}
    \caption {Subtomograms under different SNR levels (100.0, 0.1, 0.05, 0.03, 0.01).}
    \label{fig:different-snr}
\end{figure}

\paragraph{Training startegy.}

To enable the learning of effective features for alignment tasks, the model is trained on a broad distribution of alignment patterns. For each subtomogram, we generate 10 distinct alignment patterns, simulating possible spatial transformations and matching scenarios. These pairwise patterns serve as supervisory signals to guide the model in learning transformation-related representations.

During the finetuning phase, we adopt a more conservative set of alignment patterns to improve the model’s sensitivity to small transformations. Specifically, we apply only rotations within $\pm$15° and translations of up to 20\% of the subtomogram size. The finetuning consists of 10 training epochs.

\paragraph{Alignment Error Metrics}
To quantitatively assess subtomogram alignment performance, we report both rotational and translational deviations between predicted and ground-truth transformations. Given the estimated and ground-truth rotation matrices $\mathbf{R}_{\text{est}}$ and $\mathbf{R}_{\text{gt}}$, the rotation error $e_{\text{rot}}$ (in degrees) is computed as:

\begin{equation}
e_{\text{rot}} = \arccos\left(\frac{\text{tr}(\mathbf{R}_{\text{est}}^\top \mathbf{R}_{\text{gt}}) - 1}{2}\right) \cdot \frac{180}{\pi}
\end{equation}

where $\mathbf{R}_{\text{est}}, \mathbf{R}_{\text{gt}} \in \text{SO}(3)$ are elements of the special orthogonal group, and $\text{tr}(\cdot)$ denotes the matrix trace. This expression yields rotational errors bounded within $[0^\circ, 180^\circ]$. The translation error $e_{\text{trans}}$ (in voxels) is defined as the Euclidean distance between the estimated and ground-truth translation vectors:

\begin{equation}
e_{\text{trans}} = \left\| \mathbf{t}_{\text{est}} - \mathbf{t}_{\text{gt}} \right\|_2
\end{equation}

\subsection{Introduction of Baselines}
Here, we provide an introduction to the baseline cryo-ET alignment methods. For clarity, we categorize them into three groups: traditional algorithms, CNN-based deep learning models, and equivariant Transformer architectures.  
\label{sec:baseline_align}

\textbf{Traditional subtomogram alignment methods.}  
\begin{itemize}
    \item \textbf{H-T align} \citep{xu2012high}: A Fourier-based rotational alignment method specially designed for low SNR and high tilt angle conditions.
    \item \textbf{F\&A align} \citep{chen2013fast}: An efficient reference-free subtomogram alignment algorithm employing spherical harmonics and Wiener-filtered corrections.
\end{itemize}

\textbf{CNN-based deep learning methods for subtomogram alignment.}  
\begin{itemize}
    \item \textbf{Gum-Net} \citep{zeng2020gum}: An unsupervised CNN-based model designed for 3D geometric correspondences and specifically optimized to handle the noise inherent in cryo-ET data. It achieves significant improvements in both accuracy and computational efficiency. The Gum-Net framework includes three architectural variants: Gum-Net MP utilizes max pooling for robust feature extraction; Gum-Net AP applies average pooling to enhance feature aggregation; and Gum-Net SC streamlines the matching process by generating a single correlation map.
    \item \textbf{Jim-Net} \citep{JimNet}: A multi-task unsupervised CNN-based model that simultaneously clusters and aligns subtomograms.
\end{itemize}

\textbf{Equivariant Transformer approaches.}  
\begin{itemize}
    \item \textbf{SE(3)-Transformer} \citep{Se3Transformer}: A family of 3D roto-translation equivariant attention networks that leverage group theory to achieve strict SE(3) equivariance. 
    \item \textbf{ConDor} \citep{condor}: A self-supervised approach for canonicalizing the 3D pose of both full and partial shapes. 
    \item \textbf{Equi-Pose} \citep{li2021leveraging}: A self-supervised learning framework for estimating category-level 6D object poses directly from single 3D point clouds. 
    \item \textbf{BOE-ViT} \citep{jiang2025boe}: A vision transformer-based framework for 3D subtomogram alignment that incorporates equivariant design to boost orientation estimation. Unlike three point cloud-based methods above, BOE-ViT is specifically optimized for cryo-ET subtomograms, leveraging self-supervised learning and equivariance-aware attention mechanisms to enhance robustness.
    
\end{itemize}

\clearpage
\subsection{Performance Across Diverse Macromolecular Structures}
\label{sec:5eq}
We evaluated the alignment performance on five representative macromolecular complexes under varying SNR conditions. As shown in Tables~\ref{tab:align-1I6V}-\ref{tab:align-5mpa}, our model consistently outperforms existing methods in both accuracy and robustness, which includes H-T align, F\&A align, the four variants of Gum-Net (Gum-Net MP, Gum-Net AP, and Gum-Net SC, Gum-Net) and Jimnet. 

Our APT-ViT also demonstrates superior performance compared to equivariant neural networks including SE(3)-Transformer, ConDor, and Equi-Pose, and notably outperforms the current equivariant ViT like BOE-ViT across all SNR levels. This validates the effectiveness of our equivariant architectural improvements for subtomogram alignment tasks.

\begin{table}[ht] \centering \setlength{\tabcolsep}{3pt} \caption{Subtomogram alignment accuracy at different SNR levels. Each table entry shows the mean and standard deviation of both rotation and translation errors. Results are for PDB ID: 1I6V.} 
\begin{adjustbox}{width=\textwidth}
\begin{tabular}{l *{4}{cc}}
\toprule
\multirow{2}{*}{\textbf{Method}}
& \multicolumn{2}{c}{\textbf{SNR 0.1}}
& \multicolumn{2}{c}{\textbf{SNR 0.05}}
& \multicolumn{2}{c}{\textbf{SNR 0.03}}
& \multicolumn{2}{c}{\textbf{SNR 0.01}} \\
\cmidrule(lr){2-3}\cmidrule(lr){4-5}\cmidrule(lr){6-7}\cmidrule(lr){8-9}
& \textbf{Rotation} & \textbf{Translation}
& \textbf{Rotation} & \textbf{Translation}
& \textbf{Rotation} & \textbf{Translation}
& \textbf{Rotation} & \textbf{Translation} \\
\midrule
H-T align        & 1.67$\pm$1.06 & 6.31$\pm$5.01 & 2.09$\pm$0.87 & 7.65$\pm$4.56 & 2.22$\pm$0.74 & 8.10$\pm$4.43 & 2.40$\pm$0.57 & 10.93$\pm$4.97 \\
F-A align        & 1.71$\pm$1.08 & 6.63$\pm$4.96 & 2.06$\pm$0.90 & 7.76$\pm$4.67 & 2.23$\pm$0.74 & 8.48$\pm$4.62 & 2.37$\pm$0.56 & 10.94$\pm$4.98 \\
\midrule
Gum-Net MP       & 1.38$\pm$0.75 & 5.25$\pm$3.53 & 1.50$\pm$0.76 & 5.70$\pm$3.65 & 1.59$\pm$0.76 & 6.08$\pm$3.54 & 1.66$\pm$0.77 & 7.06$\pm$3.39 \\
Gum-Net AP       & 1.25$\pm$0.76 & 4.75$\pm$3.37 & 1.39$\pm$0.76 & 5.35$\pm$3.49 & 1.53$\pm$0.75 & 5.81$\pm$3.46 & 1.65$\pm$0.77 & 7.02$\pm$3.35 \\
Gum-Net SC       & 1.26$\pm$0.77 & 4.83$\pm$3.58 & 1.42$\pm$0.77 & 5.43$\pm$3.62 & 1.53$\pm$0.76 & 5.73$\pm$3.47 & 1.68$\pm$0.76 & 6.96$\pm$3.52 \\
Gum-Net          & 0.75$\pm$0.77 & 2.99$\pm$3.17 & 0.87$\pm$0.76 & 3.49$\pm$3.31 & 1.05$\pm$0.71 & 3.96$\pm$2.77 & 1.42$\pm$0.78 & 5.66$\pm$3.53 \\
Jim-Net          & 0.78$\pm$0.71 & 3.15$\pm$3.13 & 1.03$\pm$0.74 & 4.14$\pm$3.58 & 1.18$\pm$0.73 & 4.68$\pm$3.34 & 1.60$\pm$0.75 & 6.55$\pm$3.43 \\
\midrule
SE(3)-Transformer   & 1.74$\pm$0.11 & 4.17$\pm$0.63 & 1.45$\pm$0.33 & 5.81$\pm$0.88 & 1.16$\pm$0.14 & 3.43$\pm$0.64 & 1.64$\pm$0.59 & 4.92$\pm$0.34 \\
ConDor           & 6.74$\pm$1.82 & 6.59$\pm$1.61 & 6.26$\pm$1.85 & 6.10$\pm$1.89 & 5.86$\pm$1.40 & 5.81$\pm$1.14 & 5.57$\pm$1.85 & 5.50$\pm$1.88 \\
Equi-Pose        & 4.00$\pm$1.51 & 2.08$\pm$1.74 & 6.16$\pm$1.81 & 3.16$\pm$1.92 & 7.58$\pm$2.15 & 5.14$\pm$2.63 & 9.59$\pm$3.05 & 7.24$\pm$3.00 \\
BOE-ViT          & 0.33$\pm$0.16 & 2.41$\pm$0.84 & 0.34$\pm$0.15 & 2.31$\pm$0.81 & 0.34$\pm$0.16 & 2.25$\pm$0.80 & 0.33$\pm$0.15 & 2.26$\pm$0.78 \\
\midrule
\textbf{Ours}    & \textbf{0.24$\pm$0.08} & \textbf{1.99$\pm$0.83}
                 & \textbf{0.25$\pm$0.08} & \textbf{1.89$\pm$0.83}
                 & \textbf{0.25$\pm$0.08} & \textbf{1.96$\pm$0.85}
                 & \textbf{0.25$\pm$0.09} & \textbf{2.04$\pm$0.83} \\
\bottomrule
\end{tabular}
\end{adjustbox}
\label{tab:align-1I6V}
\end{table}

\begin{table}[ht]
\centering
\setlength{\tabcolsep}{3pt}
\caption{Subtomogram alignment accuracy at different SNR levels. Each table entry shows the mean and standard deviation of both rotation and translation errors. Results are for PDB ID: 6A5L.}
\begin{adjustbox}{max width=\textwidth}
\begin{tabular}{l *{4}{cc}}
\toprule
\multirow{2}{*}{\textbf{Method}}
& \multicolumn{2}{c}{\textbf{SNR 0.1}}
& \multicolumn{2}{c}{\textbf{SNR 0.05}}
& \multicolumn{2}{c}{\textbf{SNR 0.03}}
& \multicolumn{2}{c}{\textbf{SNR 0.01}} \\
\cmidrule(lr){2-3}\cmidrule(lr){4-5}\cmidrule(lr){6-7}\cmidrule(lr){8-9}
& \textbf{Rotation} & \textbf{Translation}
& \textbf{Rotation} & \textbf{Translation}
& \textbf{Rotation} & \textbf{Translation}
& \textbf{Rotation} & \textbf{Translation} \\
\midrule
H-T align        & 0.94$\pm$0.95 & 3.75$\pm$4.03 & 1.74$\pm$1.02 & 6.31$\pm$4.60 & 2.21$\pm$0.75 & 8.69$\pm$4.56 & 2.37$\pm$0.55 & 11.58$\pm$5.02 \\
F-A align        & 1.06$\pm$1.06 & 4.31$\pm$4.41 & 1.85$\pm$0.99 & 6.99$\pm$4.85 & 2.18$\pm$0.79 & 8.69$\pm$4.55 & 2.39$\pm$0.58 & 11.31$\pm$4.83 \\
\midrule
Gum-Net MP       & 1.13$\pm$0.74 & 4.27$\pm$3.09 & 1.30$\pm$0.75 & 4.80$\pm$3.11 & 1.45$\pm$0.76 & 5.45$\pm$3.09 & 1.66$\pm$0.77 & 6.99$\pm$3.28 \\
Gum-Net AP       & 0.98$\pm$0.67 & 3.72$\pm$2.74 & 1.20$\pm$0.72 & 4.45$\pm$2.85 & 1.40$\pm$0.74 & 5.29$\pm$3.02 & 1.64$\pm$0.77 & 6.97$\pm$3.33 \\
Gum-Net SC       & 1.07$\pm$0.73 & 4.02$\pm$3.03 & 1.26$\pm$0.76 & 4.56$\pm$3.07 & 1.47$\pm$0.77 & 5.48$\pm$3.14 & 1.65$\pm$0.76 & 6.89$\pm$3.33 \\
Gum-Net          & 0.46$\pm$0.54 & 1.80$\pm$1.90 & 0.71$\pm$0.63 & 2.55$\pm$2.12 & 1.12$\pm$0.73 & 3.93$\pm$2.45 & 1.45$\pm$0.76 & 5.94$\pm$3.32 \\
Jim-Net          & 0.39$\pm$0.52 & \textbf{1.67$\pm$2.01} & 0.64$\pm$0.60 & 2.42$\pm$2.33 & 0.99$\pm$0.72 & 3.71$\pm$2.89 & 1.58$\pm$0.76 & 6.69$\pm$3.38 \\
\midrule
SE(3)-Transformer  & 1.49$\pm$0.57 & 3.87$\pm$0.85 & 1.21$\pm$0.27 & 3.42$\pm$0.93 & 1.20$\pm$0.32 & 3.44$\pm$0.45 & 1.44$\pm$0.21 & 3.93$\pm$0.86 \\
ConDor           & 8.04$\pm$1.42 & 7.86$\pm$1.32 & 7.11$\pm$1.45 & 7.07$\pm$1.08 & 7.12$\pm$1.09 & 6.97$\pm$1.94 & 5.42$\pm$1.56 & 5.31$\pm$1.17 \\
Equi-Pose        & 3.91$\pm$1.92 & 2.78$\pm$1.17 & 5.76$\pm$2.31 & 3.51$\pm$1.88 & 7.31$\pm$2.19 & 5.20$\pm$2.10 & 9.83$\pm$3.07 & 7.12$\pm$3.14 \\
BOE-ViT          & 0.33$\pm$0.15 & 2.30$\pm$0.80 & 0.34$\pm$0.16 & 2.27$\pm$0.81 & 0.35$\pm$0.15 & 2.27$\pm$0.75 & 0.34$\pm$0.15 & 2.24$\pm$0.78 \\
\midrule
\textbf{Ours}    & \textbf{0.24$\pm$0.08} & \textbf{1.95$\pm$0.83}
                 & \textbf{0.25$\pm$0.08} & \textbf{1.97$\pm$0.82}
                 & \textbf{0.25$\pm$0.08} & \textbf{1.95$\pm$0.79}
                 & \textbf{0.25$\pm$0.09} & \textbf{2.00$\pm$0.83} \\
\bottomrule
\end{tabular}
\end{adjustbox}
\label{tab:align-6a5l}
\end{table}

\newpage
\begin{table}[ht]
\centering
\setlength{\tabcolsep}{3pt}
\setlength{\textfloatsep}{1pt}  
\caption{Subtomogram alignment accuracy at different SNR levels. Each table entry shows the mean and standard deviation of both rotation and translation errors. Results are for PDB ID: 5LQW.}
\begin{adjustbox}{max width=\textwidth}
\begin{tabular}{l *{4}{cc}}
\toprule
\multirow{2}{*}{\textbf{Method}}
& \multicolumn{2}{c}{\textbf{SNR 0.1}}
& \multicolumn{2}{c}{\textbf{SNR 0.05}}
& \multicolumn{2}{c}{\textbf{SNR 0.03}}
& \multicolumn{2}{c}{\textbf{SNR 0.01}} \\
\cmidrule(lr){2-3}\cmidrule(lr){4-5}\cmidrule(lr){6-7}\cmidrule(lr){8-9}
& \textbf{Rotation} & \textbf{Translation}
& \textbf{Rotation} & \textbf{Translation}
& \textbf{Rotation} & \textbf{Translation}
& \textbf{Rotation} & \textbf{Translation} \\
\midrule
H-T align        & 0.61$\pm$0.87 & 2.64$\pm$3.55 & 1.62$\pm$1.14 & 6.08$\pm$4.92 & 2.15$\pm$0.88 & 8.49$\pm$4.72 & 2.38$\pm$0.56 & 11.36$\pm$5.13 \\
F-A align        & 0.64$\pm$0.97 & 2.96$\pm$3.99 & 1.68$\pm$1.16 & 6.32$\pm$4.91 & 2.12$\pm$0.89 & 8.39$\pm$4.79 & 2.35$\pm$0.59 & 11.20$\pm$5.00 \\
\midrule
Gum-Net MP       & 1.02$\pm$0.70 & 4.07$\pm$3.16 & 1.25$\pm$0.78 & 4.89$\pm$3.30 & 1.38$\pm$0.75 & 5.41$\pm$3.31 & 1.65$\pm$0.78 & 6.79$\pm$3.08 \\
Gum-Net AP       & 0.87$\pm$0.65 & 3.56$\pm$2.78 & 1.12$\pm$0.74 & 4.45$\pm$3.00 & 1.29$\pm$0.74 & 5.07$\pm$3.09 & 1.60$\pm$0.81 & 6.69$\pm$3.11 \\
Gum-Net SC       & 0.96$\pm$0.71 & 3.83$\pm$3.13 & 1.22$\pm$0.79 & 4.76$\pm$3.28 & 1.38$\pm$0.76 & 5.28$\pm$3.33 & 1.65$\pm$0.78 & 6.82$\pm$3.20 \\
Gum-Net          & 0.47$\pm$0.57 & 1.94$\pm$2.26 & 0.68$\pm$0.64 & 2.61$\pm$2.25 & 0.93$\pm$0.68 & 3.62$\pm$2.32 & 1.38$\pm$0.78 & 5.65$\pm$3.31 \\
Jim-Net          & 0.30$\pm$0.47 & \textbf{1.42$\pm$2.01} & 0.51$\pm$0.58 & 2.30$\pm$2.36 & 0.74$\pm$0.62 & 3.13$\pm$2.63 & 1.50$\pm$0.76 & 6.30$\pm$3.13 \\
\midrule
SE(3)-Transformer  & 1.16$\pm$0.66 & 4.04$\pm$0.32 & 1.62$\pm$0.34 & 4.69$\pm$0.40 & 1.83$\pm$0.51 & 5.18$\pm$0.64 & 1.78$\pm$0.66 & 5.19$\pm$0.79 \\
ConDor           & 7.21$\pm$1.44 & 7.07$\pm$1.07 & 6.95$\pm$1.90 & 6.93$\pm$1.31 & 6.49$\pm$1.57 & 6.30$\pm$1.08 & 6.25$\pm$1.41 & 6.16$\pm$1.87 \\
Equi-Pose        & 4.34$\pm$1.78 & 2.40$\pm$1.28 & 5.88$\pm$1.96 & 3.52$\pm$2.07 & 7.53$\pm$2.37 & 4.73$\pm$2.59 & 10.25$\pm$3.03 & 7.07$\pm$2.83 \\
BOE-ViT          & 0.33$\pm$0.15 & 2.30$\pm$0.83 & 0.34$\pm$0.16 & 2.27$\pm$0.79 & 0.34$\pm$0.15 & 2.24$\pm$0.77 & 0.34$\pm$0.16 & 2.21$\pm$0.77 \\
\midrule
\textbf{Ours}    & \textbf{0.24$\pm$0.08} & 1.96$\pm$0.80
                 & \textbf{0.25$\pm$0.08} & 1.98$\pm$0.82
                 & \textbf{0.25$\pm$0.08} & 1.98$\pm$0.84
                 & \textbf{0.25$\pm$0.08} & 2.05$\pm$0.80 \\
\bottomrule
\end{tabular}
\end{adjustbox}
\label{tab:align-5lqw}
\end{table}

\begin{table}[!ht]
\centering
\setlength{\tabcolsep}{3pt}
\setlength{\textfloatsep}{1pt}  
\captionof{table}{Subtomogram alignment accuracy at different SNR levels. Each table entry shows the mean and standard deviation of both rotation and translation errors. Results are for PDB ID: 5T2C.}
\begin{adjustbox}{max width=\textwidth}
\begin{tabular}{l *{4}{cc}}
\toprule
\multirow{2}{*}{\textbf{Method}}
& \multicolumn{2}{c}{\textbf{SNR 0.1}}
& \multicolumn{2}{c}{\textbf{SNR 0.05}}
& \multicolumn{2}{c}{\textbf{SNR 0.03}}
& \multicolumn{2}{c}{\textbf{SNR 0.01}} \\
\cmidrule(lr){2-3}\cmidrule(lr){4-5}\cmidrule(lr){6-7}\cmidrule(lr){8-9}
& \textbf{Rotation} & \textbf{Translation}
& \textbf{Rotation} & \textbf{Translation}
& \textbf{Rotation} & \textbf{Translation}
& \textbf{Rotation} & \textbf{Translation} \\
\midrule
H-T align        & 1.16$\pm$1.04 & 4.43$\pm$4.21 & 2.13$\pm$0.84 & 8.79$\pm$4.77 & 2.34$\pm$0.61 & 10.59$\pm$4.98 & 2.36$\pm$0.59 & 11.56$\pm$4.91 \\
F-A align        & 1.54$\pm$1.12 & 6.39$\pm$5.19 & 2.17$\pm$0.80 & 9.39$\pm$5.09 & 2.35$\pm$0.58 & 10.81$\pm$4.93 & 2.40$\pm$0.55 & 11.81$\pm$4.89 \\
\midrule
Gum-Net MP       & 1.58$\pm$0.83 & 5.51$\pm$3.07 & 1.71$\pm$0.80 & 6.28$\pm$3.16 & 1.70$\pm$0.80 & 6.72$\pm$3.13 & 1.70$\pm$0.78 & 8.27$\pm$3.58 \\
Gum-Net AP       & 1.30$\pm$0.79 & 4.71$\pm$2.76 & 1.58$\pm$0.80 & 5.94$\pm$3.05 & 1.63$\pm$0.81 & 6.70$\pm$3.20 & 1.68$\pm$0.78 & 8.14$\pm$3.51 \\
Gum-Net SC       & 1.41$\pm$0.79 & 4.90$\pm$2.94 & 1.63$\pm$0.79 & 5.98$\pm$3.11 & 1.66$\pm$0.80 & 6.54$\pm$3.15 & 1.71$\pm$0.77 & 8.35$\pm$3.64 \\
Gum-Net          & 0.73$\pm$0.81 & 2.70$\pm$2.87 & 1.19$\pm$0.84 & 4.23$\pm$3.01 & 1.43$\pm$0.79 & 5.67$\pm$2.96 & 1.76$\pm$0.75 & 10.46$\pm$5.10 \\
Jim-Net          & 0.49$\pm$0.70 & 1.99$\pm$2.43 & 1.09$\pm$0.86 & 4.14$\pm$3.30 & 1.33$\pm$0.83 & 5.19$\pm$3.28 & 1.65$\pm$0.78 & 7.60$\pm$3.62 \\
\midrule
SE(3)-Transformer   & 2.26$\pm$0.70 & 4.31$\pm$0.64 & 1.69$\pm$0.01 & 5.53$\pm$0.45 & 1.97$\pm$0.78 & 5.30$\pm$0.47 & 1.91$\pm$0.08 & 6.35$\pm$0.43 \\
ConDor           & 7.49$\pm$1.51 & 7.46$\pm$1.46 & 7.09$\pm$1.56 & 6.97$\pm$1.86 & 6.83$\pm$1.24 & 6.66$\pm$1.40 & 6.09$\pm$1.17 & 6.05$\pm$1.77 \\
Equi-Pose        & 3.76$\pm$1.19 & 2.28$\pm$1.87 & 6.04$\pm$1.89 & 3.31$\pm$2.16 & 7.24$\pm$2.64 & 4.87$\pm$2.75 & 9.81$\pm$2.92 & 7.18$\pm$2.68 \\
BOE-ViT          & 0.34$\pm$0.15 & 2.28$\pm$0.81 & 0.34$\pm$0.16 & 2.24$\pm$0.77 & 0.34$\pm$0.15 & 2.27$\pm$0.80 & 0.34$\pm$0.15 & 2.30$\pm$0.79 \\
\midrule
\textbf{Ours}    & \textbf{0.24$\pm$0.08} & \textbf{1.94$\pm$0.81}
                 & \textbf{0.25$\pm$0.08} & \textbf{1.96$\pm$0.83}
                 & \textbf{0.24$\pm$0.08} & \textbf{1.96$\pm$0.81}
                 & \textbf{0.25$\pm$0.08} & \textbf{2.01$\pm$0.81} \\
\bottomrule
\end{tabular}
\end{adjustbox}
\label{tab:align-5t2c}
\vspace{1.5em}
\end{table}

\begin{table}[!ht]
\setlength{\tabcolsep}{3pt}
\setlength{\textfloatsep}{1pt}  
\captionof{table}{Subtomogram alignment accuracy at different SNR levels. Each table entry shows the mean and standard deviation of both rotation and translation errors. Results are for PDB ID: 5MPA.}
\begin{adjustbox}{width=\textwidth}
\begin{tabular}{l *{4}{cc}}
\toprule
\multirow{2}{*}{\textbf{Method}}
& \multicolumn{2}{c}{\textbf{SNR 0.1}}
& \multicolumn{2}{c}{\textbf{SNR 0.05}}
& \multicolumn{2}{c}{\textbf{SNR 0.03}}
& \multicolumn{2}{c}{\textbf{SNR 0.01}} \\
\cmidrule(lr){2-3}\cmidrule(lr){4-5}\cmidrule(lr){6-7}\cmidrule(lr){8-9}
& \textbf{Rotation} & \textbf{Translation}
& \textbf{Rotation} & \textbf{Translation}
& \textbf{Rotation} & \textbf{Translation}
& \textbf{Rotation} & \textbf{Translation} \\
\midrule
H-T align        & 1.72$\pm$0.99 & 6.65$\pm$4.55 & 2.08$\pm$0.88 & 7.47$\pm$4.46 & 2.16$\pm$0.81 & 8.42$\pm$4.47 & 2.38$\pm$0.58 & 11.22$\pm$5.03 \\
F-A align        & 1.73$\pm$1.01 & 6.69$\pm$4.71 & 1.97$\pm$0.94 & 7.26$\pm$4.67 & 2.24$\pm$0.79 & 8.59$\pm$4.69 & 2.39$\pm$0.56 & 11.33$\pm$4.88 \\
\midrule
Gum-Net MP       & 1.40$\pm$0.80 & 5.52$\pm$3.60 & 1.43$\pm$0.78 & 5.63$\pm$3.44 & 1.53$\pm$0.76 & 6.12$\pm$3.45 & 1.68$\pm$0.77 & 7.30$\pm$3.33 \\
Gum-Net AP       & 1.05$\pm$0.69 & 4.28$\pm$2.92 & 1.19$\pm$0.73 & 4.78$\pm$3.04 & 1.37$\pm$0.73 & 5.64$\pm$3.22 & 1.66$\pm$0.77 & 7.10$\pm$3.27 \\
Gum-Net SC       & 1.12$\pm$0.76 & 4.47$\pm$3.30 & 1.24$\pm$0.78 & 4.92$\pm$3.40 & 1.38$\pm$0.77 & 5.71$\pm$3.43 & 1.66$\pm$0.78 & 7.16$\pm$3.35 \\
Gum-Net          & 0.68$\pm$0.64 & 2.61$\pm$2.46 & 0.89$\pm$0.72 & 3.13$\pm$2.68 & 1.12$\pm$0.72 & 4.25$\pm$2.73 & 1.46$\pm$0.78 & 6.22$\pm$3.38 \\
Jim-Net          & 0.57$\pm$0.56 & 2.37$\pm$2.20 & 0.72$\pm$0.64 & 3.10$\pm$2.71 & 0.88$\pm$0.66 & 3.90$\pm$2.94 & 1.55$\pm$0.78 & 6.75$\pm$3.47 \\
\midrule
SE(3)-Transformer   & 2.45$\pm$0.45 & 5.18$\pm$0.84 & 2.44$\pm$0.80 & 6.77$\pm$0.52 & 1.98$\pm$0.33 & 6.18$\pm$0.74 & 2.08$\pm$0.80 & 5.22$\pm$0.45 \\
ConDor           & 8.05$\pm$1.03 & 7.88$\pm$1.46 & 7.32$\pm$1.35 & 7.13$\pm$1.85 & 7.13$\pm$1.87 & 7.05$\pm$1.34 & 5.81$\pm$1.20 & 5.78$\pm$1.63 \\
Equi-Pose        & 4.19$\pm$1.65 & 2.47$\pm$1.71 & 5.58$\pm$1.99 & 3.50$\pm$2.12 & 7.07$\pm$2.51 & 4.94$\pm$2.69 & 12.07$\pm$2.76 & 7.12$\pm$3.28 \\
BOE-ViT          & 0.33$\pm$0.15 & 2.24$\pm$0.82 & 0.33$\pm$0.15 & 2.24$\pm$0.80 & 0.33$\pm$0.15 & 2.20$\pm$0.80 & 0.33$\pm$0.16 & 2.21$\pm$0.77 \\
\midrule
\textbf{Ours}    & \textbf{0.25$\pm$0.08} & \textbf{2.00$\pm$0.80}
                 & \textbf{0.25$\pm$0.08} & \textbf{1.95$\pm$0.85}
                 & \textbf{0.25$\pm$0.08} & \textbf{2.00$\pm$0.80}
                 & \textbf{0.25$\pm$0.08} & \textbf{2.02$\pm$0.82} \\
\bottomrule
\end{tabular}
\end{adjustbox}

\label{tab:align-5mpa}
\end{table}

\clearpage
\section{Averaging}
\subsection{Details of Subtomogram Averaging}
Subtomogram averaging is a key step in cryo-ET analysis that combines many noisy 3D sub-volumes of the same macromolecular complex into a single high-resolution map. By reinforcing structural features shared across particles, averaging improves the signal-to-noise ratio and enables more accurate visualization of macromolecular architecture. 

In a reference-free, non-parametric setting, an initial consensus map is iteratively refined by aligning each subtomogram to the current consensus and recomputing the average \citep{briggs2013structural, wan2016cryo}. 
This strategy avoids bias from external templates and progressively improves structural resolution. In our framework, the pretrained model provides equivariant alignment features for robust registration across particle poses, while invariance properties support unbiased consensus construction. 

\subsection{Experiment Settings}

\paragraph{Real Dataset Descriptions.}
\label{sec:avg-data}
\begin{itemize}
    \item \textbf{\textit{S. cerevisiae} 80S Ribosome:} This dataset provides 3,120 subtomograms of the large, asymmetric 80S ribosome from purified \textit{S. cerevisiae} \citep{bharat2016resolving}. The subtomograms are rescaled to $32^3$ voxels with a 1.365 nm voxel size and a $30^\circ$ missing wedge.

    \item \textbf{Tobacco Mosaic Virus (TMV):} This dataset consists of 2,742 subtomograms of the helical Tobacco Mosaic Virus \citep{kunz2015m}. The subtomograms are binned to $32^3$ voxels with a 1.080 nm voxel size and a $30^\circ$ missing wedge.

    \item \textbf{Aldolase:} This dataset contains 400 subtomograms of purified rabbit muscle aldolase, a small tetrameric enzyme \citep{noble2018routine}. The subtomograms are rescaled to $32^3$ voxels with a 0.750 nm voxel size and a $30^\circ$ missing wedge.

    \item \textbf{Insulin Receptor:} This dataset includes 400 subtomograms of the purified, insulin-bound human insulin receptor, a flexible membrane protein \citep{noble2018reducing}. The subtomograms are rescaled to $32^3$ voxels with a 0.876 nm voxel size and a $45^\circ$ missing wedge.
\end{itemize}

\paragraph{Introduction of Baselines.} 

Unlike alignment, subtomogram averaging lacks standardized baselines. General-purpose vision models such as equivariant neural networks or vision transformers are difficult to apply directly, since they do not natively support reference-free consensus construction across many subtomograms. 
Therefore, we focus on five straightforward baselines for subtomogram averaging.

\begin{itemize}
    
    \item \textbf{H-T align} \citep{xu2012high}: 
    Originally proposed as a Fourier-based rotational alignment method for low SNR and high tilt conditions, 
    here it is adapted to averaging by iteratively aligning particles to a consensus using its predicted transformations. 
    
    \item \textbf{F\&A align} \citep{chen2013fast}: 
    An efficient spherical-harmonics–based alignment algorithm with Wiener-filtered corrections, 
    which can also be extended to averaging through iterative consensus refinement. 
    
    \item \textbf{Gum-Net} \citep{zeng2020gum}: 
    An unsupervised CNN-based alignment model that can be extended to averaging by aligning particles to a consensus with its predicted transformations. 
    It improves efficiency but is not specifically optimized for large-scale consensus reconstruction.
    
    \item \textbf{Jim-Net} \citep{JimNet}: 
    A multi-task CNN framework that couples clustering with alignment, which can in principle support averaging. 
    However, its main focus is handling heterogeneity rather than resolution-oriented consensus refinement.
    
    \item \textbf{BOE-ViT} \citep{jiang2025boe}: 
    A vision transformer with equivariant design, originally developed for subtomogram alignment. 
    When adapted to averaging, it benefits from robust orientation estimation but still inherits the alignment-centric objective.
\end{itemize}

\subsection{Visualization}
\label{sec:avg-fig}
As shown in Fig.~\ref{fig:2dreal}, representative 2D cross-sections highlight the noisy and heterogeneous appearance of subtomograms across the four benchmark datasets. 
Despite this low SNR, iterative alignment-based averaging is able to progressively recover structural features, as visualized in Fig.~\ref{fig:vis-avg}, 
where the consensus maps become increasingly refined and converge toward the true underlying structures.

\begin{figure}[!ht]
    \centering
    \includegraphics[width=0.9\linewidth]{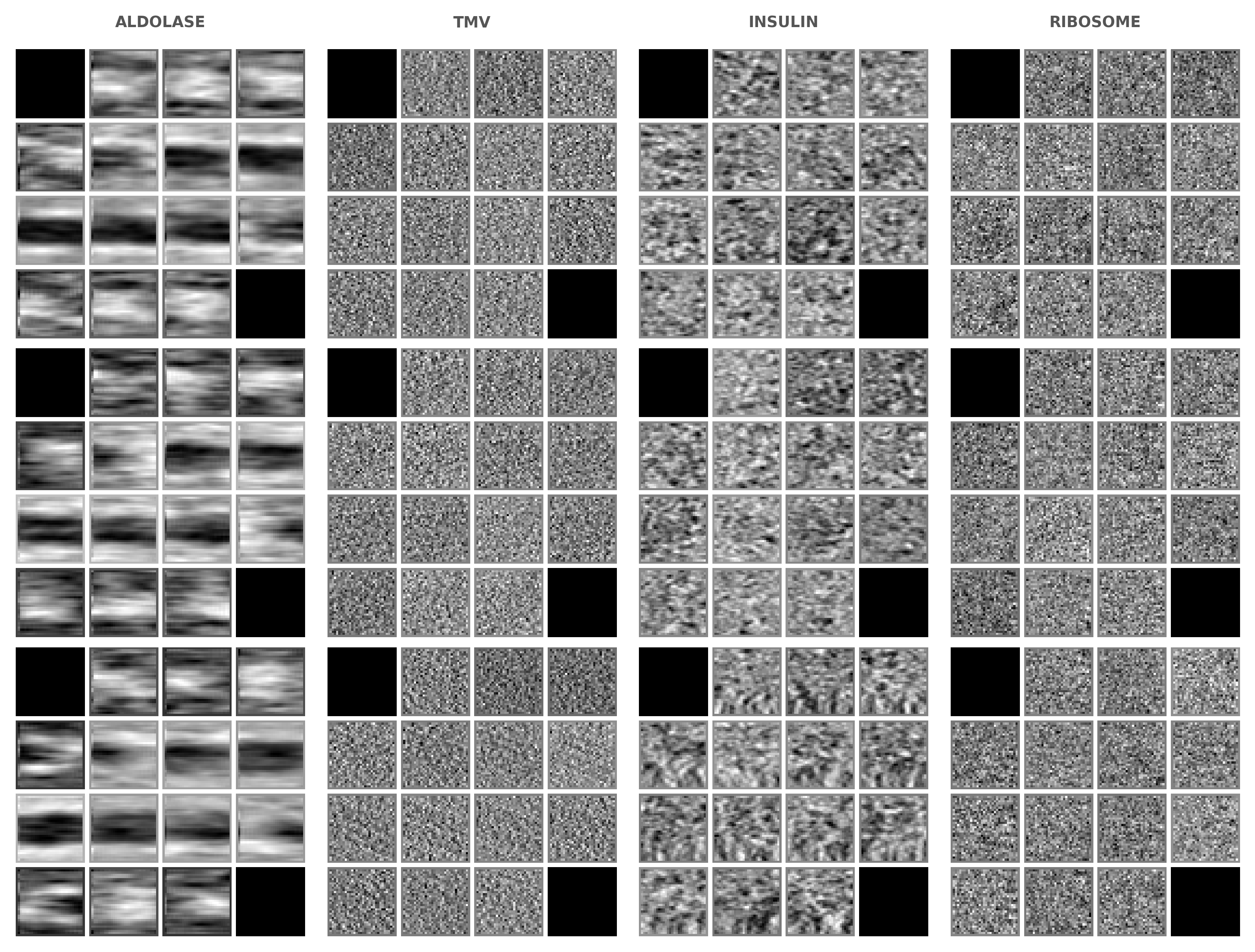}
\caption{Representative 2D cross-sections of subtomograms from the four benchmark datasets.}
\label{fig:2dreal}
\end{figure}

\begin{figure}[!ht]
    \centering
    \includegraphics[width=\linewidth]{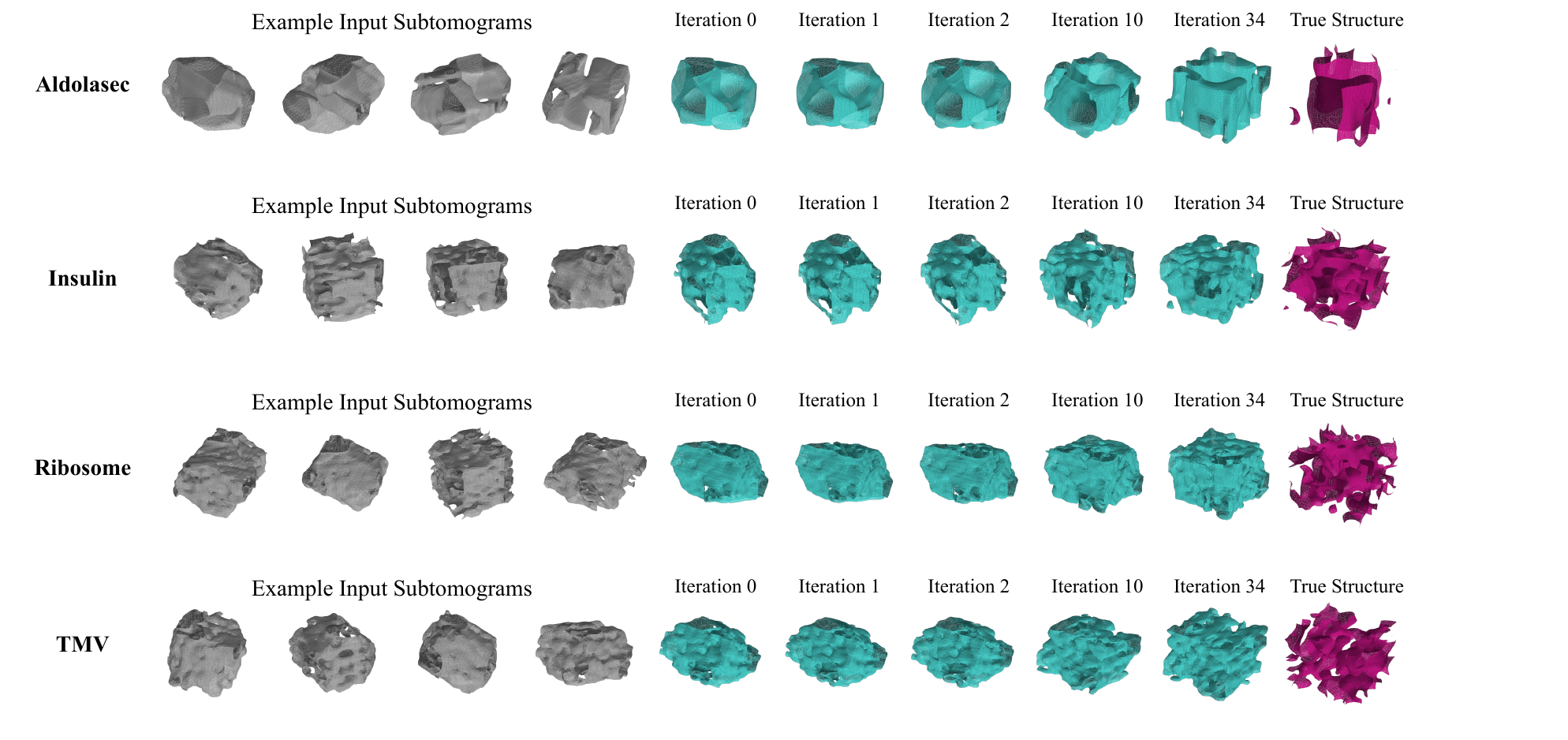}
    \caption{Visualization of iterative subtomogram averaging across four real datasets, showing progressive structural refinement from noisy inputs to averaged structure.}
    \label{fig:vis-avg}
\end{figure}

\clearpage
\section{Mechanism Analysis and Ablation Study}

\subsection{Evaluation of Representations in latent space.} 
\label{sec:rep_eval}
To assess the effectiveness of our NRCL strategy in producing semantically meaningful features, we evaluate the learned representations using three common downstream evaluation protocols: K-nearest neighbor (KNN) classification, linear probing, and fine-tuning with a supervised classification head. As shown in Table~\ref{tab:lp}, we can say that out model's better performance on classification task doesn't come from the classification head. Meanwhile, the performance drop observed in the KNN and linear probing settings under low SNR levels highlights the challenge of extracting discriminative features from noisy subtomograms.
\renewcommand{\arraystretch}{0.8}
\begin{table}[ht]
\centering
\setlength{\tabcolsep}{3pt}
\caption{Accuracy$\uparrow$ (\%) by KNN, linear probing, and classification head under varying SNR levels over all datasets. The
best result is highlighted in bold, the second best is underlined.}
\begin{tabular}{l|c|cccc}
\toprule
\textbf{Method} & \textbf{Evaluation Protocal} & \textbf{SNR 0.1} & \textbf{SNR 0.05} & \textbf{SNR 0.03} & \textbf{SNR 0.01} \\ 
\midrule
PVT        & \multirow{3}{*}{KNN} & 27.62 & 24.06 & 22.62 & 21.06 \\
SwinViT-S  &                       & 35.03 & 28.26 & 23.88 & 21.22 \\
Moco V3    &                       & 25.42 & 21.51 & 21.07 & 20.00 \\
\midrule
\multirow{3}{*}{\textbf{Ours}} 
& KNN                & 42.96 & 37.16 & \underline{32.50} & \underline{23.72} \\
& Linear Probing     & \underline{48.28} & \underline{36.90} & 30.08 & 22.45 \\
& Classification Head& \textbf{67.42} & \textbf{53.13} & \textbf{40.10} & \textbf{27.50} \\
\bottomrule
\end{tabular}
\label{tab:lp}
\end{table}

\subsection{Impact of Noise-resilient Contrastive Learning.}
\label{sec:cls_ab}
To prove the effectiveness of the proposed noise-robust representation learning strategy, we compare the performance of our model with alternative self-supervised training strategy and full-finetune based transfer learning approaches. The results in Table~\ref{tab:md_ablation} not only reveal the inherent limitations of feature extractors trained on natural images, particularly their inability to generalize to the structural complexity and noise characteristics of cryo-ET data, but also demonstrate that existing self-supervised methods fall short in adapting models to the high-noise conditions typical of cryo-ET datasets. This proofs the effectiveness of our proposed NRCL strategy, which enables the model to learn meaningful and discriminative representations in latent space even under severe noise conditions.

\begin{table}[ht]
\centering
\setlength{\tabcolsep}{4pt}
\caption{Impact of the noise-robust representation learning strategy. \textbf{APT+BYOL (CryoEngine)}: an APT-enhanced ViT model pre-trained with BYOL on synthetic subtomogram data; \textbf{MAE (full finetune)}: a ViT model pre-trained with MAE on ImageNet and fully fine-tuned on the test set; \textbf{SwinViT-B (full finetune)} SwinViT-B backbone following the same setup. \textbf{Ours}: an APT-enhanced ViT trained with proposed noise-robust representation learning.}
\begin{adjustbox}{width=\textwidth}
\begin{tabular}{lcccccc}
\toprule
\textbf{Method} & \textbf{Pretrain} & \textbf{Strategy}& \textbf{SNR 0.1} & \textbf{SNR 0.05} & \textbf{SNR 0.03} & \textbf{SNR 0.01} \\
\midrule
SwinViT-B (full finetune) & ImageNet & Supervised & 66.28&  39.46&  28.82& 22.08 \\
MAE (full finetune) & ImageNet & Contrastive & 67.10&  46.16&  30.02& 20.74 \\
APT+BYOL & CryoEngine (\textbf{Ours}) & Self-supervised & 44.65 & 34.45&  30.03&  22.33\\
\midrule
\textbf{Ours} & CryoEngine (\textbf{Ours})& Noise-robust (\textbf{Ours}) & \textbf{67.42}& \textbf{53.13}& \textbf{40.10}& \textbf{27.50} \\
\bottomrule
\end{tabular}
\end{adjustbox}
\label{tab:md_ablation}
\end{table}

\subsection{Impact of Rotation Representation.}
\label{sec:ablation_rotrep}
To better understand the impact of different geometric representations on alignment performance, we investigated three common representations within the SO(3) group: Euler angles, $\mathbb{R}^6$ with Gram-Schmidt orthonormalization ($\mathbb{R}^6+ GSO$) \citep{zhou2019continuity}, and $\mathbb{R}^9$ with singular value decomposition ($\mathbb{R}^9 + SVD$) \citep{levinson2020analysis} . As shown in Table~\ref{tab:rot_rep}, Euler angles consistently outperform the other two representations across all SNR levels, achieving the lowest rotation and translation errors. 

These results suggest that, despite its simplicity, the Euler angle representation provides more stable and accurate rotation alignment in the context of noisy subtomogram data. This advantage likely stems from the fact that Euler angles directly parameterize $SO(3)$ without requiring projection operations. In contrast, both the $\mathbb{R}^6 +GSO$ and $\mathbb{R}^9 + SVD$ approaches involve intermediate representations that must be mapped back to the rotation group via Gram-Schmidt or SVD, which can introduce numerical instability—especially under low SNR conditions. Such instability may degrade alignment accuracy when the input is corrupted by high levels of noise.

\begin{table}[ht]
\centering
\setlength{\tabcolsep}{3pt}
\caption{Impact of rotation representation. Each
cell reports the mean and standard deviation of the rotation error and translation error.}
\label{tab:rot_rep}
\begin{adjustbox}{width=\textwidth}
\begin{tabular}{l *{4}{cc}}
\toprule
\multirow{2}{*}{\textbf{Rot Representation}}
& \multicolumn{2}{c}{\textbf{SNR 0.1}}
& \multicolumn{2}{c}{\textbf{SNR 0.05}}
& \multicolumn{2}{c}{\textbf{SNR 0.03}}
& \multicolumn{2}{c}{\textbf{SNR 0.01}} \\
\cmidrule(lr){2-3}\cmidrule(lr){4-5}\cmidrule(lr){6-7}\cmidrule(lr){8-9}
& \textbf{Rotation} & \textbf{Translation}
& \textbf{Rotation} & \textbf{Translation}
& \textbf{Rotation} & \textbf{Translation}
& \textbf{Rotation} & \textbf{Translation} \\
\midrule
\textbf{Euler (\textbf{Ours})}
& \textbf{0.25$\pm$0.08} & \textbf{1.97$\pm$0.81}
& \textbf{0.25$\pm$0.08} & \textbf{1.95$\pm$0.83}
& \textbf{0.25$\pm$0.08} & \textbf{1.97$\pm$0.81}
& \textbf{0.25$\pm$0.08} & \textbf{2.03$\pm$0.82} \\
$\mathbb{R}^6$ + GSO
& 2.01$\pm$1.21 & 2.84$\pm$1.09
& 1.98$\pm$1.17 & 2.83$\pm$1.11
& 1.99$\pm$1.19 & 2.89$\pm$1.18
& 2.00$\pm$1.22 & 2.95$\pm$1.22 \\
$\mathbb{R}^9$ + SVD
& 1.92$\pm$1.18 & 2.75$\pm$1.04
& 1.90$\pm$1.14 & 2.73$\pm$1.06
& 1.90$\pm$1.13 & 2.79$\pm$1.14
& 1.94$\pm$1.13 & 2.85$\pm$1.16 \\
\bottomrule
\end{tabular}
\end{adjustbox}
\end{table}

\subsection{Parameter Exploration}

\paragraph{Impact of Attention Heads.}
As shown in Table~\ref{tab:align_numhead}, the 8-head attention configuration consistently achieves the lowest rotation and translation errors across all SNR levels, with a mean rotation error of 0.25–0.26° and translation error around 2.00 voxels. Both reducing the number of heads (e.g., to 2 or 4) and increasing it to 10 lead to notable performance degradation, particularly in translation accuracy. These results suggest that 8 heads provide a favorable balance between expressiveness and stability, effectively modeling the spatial dependencies in 3D structures without overfragmenting attention or introducing redundant complexity.

\paragraph{Impact of Hidden Dimension.}
Table~\ref{tab:align_hiddendim} reports performance under varying hidden dimensions. A moderate hidden size of 120 yields the best overall accuracy, while increasing it to 240 or 480 significantly deteriorates performance. Notably, the 480-dimension configuration incurs the largest rotation and translation errors, likely due to overfitting and increased optimization difficulty under noisy input conditions. These findings indicate that compact representations are not only more computationally efficient but also better suited for robust alignment in high-noise cryo-ET scenarios.

\label{sec:param_ab}
\begin{table}[ht]
\centering
\setlength{\tabcolsep}{3pt}
\caption{Impact of attention heads on subtomogram alignment accuracy at different SNR levels. Each table entry shows the mean and standard deviation of both rotation and translation errors.}
\begin{adjustbox}{max width=\textwidth}
\begin{tabular}{c *{4}{cc}}
\toprule
\multirow{2}{*}{\textbf{Head Number}}
& \multicolumn{2}{c}{\textbf{SNR 0.1}}
& \multicolumn{2}{c}{\textbf{SNR 0.05}}
& \multicolumn{2}{c}{\textbf{SNR 0.03}}
& \multicolumn{2}{c}{\textbf{SNR 0.01}} \\
\cmidrule(lr){2-3}\cmidrule(lr){4-5}\cmidrule(lr){6-7}\cmidrule(lr){8-9}
& \textbf{Rotation} & \textbf{Translation}
& \textbf{Rotation} & \textbf{Translation}
& \textbf{Rotation} & \textbf{Translation}
& \textbf{Rotation} & \textbf{Translation} \\
\midrule
2  & 0.32$\pm$0.14 & 4.29$\pm$1.93 & 0.32$\pm$0.15 & 4.45$\pm$2.06 & 0.32$\pm$0.15 & 4.55$\pm$2.06 & 0.33$\pm$0.16 & 4.75$\pm$2.13 \\
4  & 0.30$\pm$0.14 & 3.66$\pm$1.72 & 0.30$\pm$0.14 & 3.66$\pm$1.72 & 0.30$\pm$0.14 & 3.73$\pm$1.76 & 0.31$\pm$0.14 & 3.89$\pm$1.84 \\
8 (\textbf{ours})
   & \textbf{0.25$\pm$0.08} & \textbf{2.00$\pm$0.80}
   & \textbf{0.25$\pm$0.08} & \textbf{1.95$\pm$0.85}
   & \textbf{0.25$\pm$0.08} & \textbf{2.00$\pm$0.80}
   & \textbf{0.25$\pm$0.09} & \textbf{2.02$\pm$0.82} \\
10 & 0.32$\pm$0.14 & 4.42$\pm$2.02 & 0.31$\pm$0.15 & 4.45$\pm$2.04 & 0.31$\pm$0.14 & 4.53$\pm$2.10 & 0.32$\pm$0.15 & 4.74$\pm$2.16 \\
\bottomrule
\end{tabular}
\end{adjustbox}
\label{tab:align_numhead}
\end{table}

\begin{table}[ht]
\centering
\setlength{\tabcolsep}{3pt}
\caption{Impact of hidden dimension on subtomogram alignment accuracy at different SNR levels. Each table entry shows the mean and standard deviation of both rotation and translation errors.}
\begin{adjustbox}{width=\textwidth}
\begin{tabular}{c *{4}{cc}}
\toprule
\multirow{2}{*}{\textbf{Hidden dim}}
& \multicolumn{2}{c}{\textbf{SNR 0.1}}
& \multicolumn{2}{c}{\textbf{SNR 0.05}}
& \multicolumn{2}{c}{\textbf{SNR 0.03}}
& \multicolumn{2}{c}{\textbf{SNR 0.01}} \\
\cmidrule(lr){2-3}\cmidrule(lr){4-5}\cmidrule(lr){6-7}\cmidrule(lr){8-9}
& \textbf{Rotation} & \textbf{Translation}
& \textbf{Rotation} & \textbf{Translation}
& \textbf{Rotation} & \textbf{Translation}
& \textbf{Rotation} & \textbf{Translation} \\
\midrule
120 (\textbf{ours})
& \textbf{0.25$\pm$0.08} & \textbf{2.00$\pm$0.80}
& \textbf{0.25$\pm$0.08} & \textbf{1.95$\pm$0.85}
& \textbf{0.25$\pm$0.08} & \textbf{2.00$\pm$0.80}
& \textbf{0.25$\pm$0.09} & \textbf{2.02$\pm$0.82} \\
240
& 0.22$\pm$0.11 & 2.69$\pm$1.23
& 0.22$\pm$0.10 & 2.73$\pm$1.25
& 0.22$\pm$0.10 & 2.81$\pm$1.30
& 0.22$\pm$0.10 & 2.93$\pm$1.40 \\
480
& 0.31$\pm$0.15 & 4.03$\pm$1.85
& 0.32$\pm$0.15 & 4.08$\pm$1.87
& 0.32$\pm$0.15 & 4.17$\pm$1.93
& 0.33$\pm$0.15 & 4.35$\pm$1.97 \\
\bottomrule
\end{tabular}
\end{adjustbox}
\label{tab:align_hiddendim}
\end{table}

\clearpage
\section{Interpretability Analysis}
\label{sec:interpre}
To better understand the model’s decision-making process and enhance interpretability, we employ Grad-CAM \citep{selvaraju2017grad} to visualize spatial attention within the learned 3D representations. As shown in Fig.~\ref{fig:grad}, the highlighted regions correspond to areas where the model concentrates when localizing the target particle within the input subtomogram. The visualizations are generated from slices extracted around the central structure of each particle. These results show that the model consistently attends to meaningful structural regions, indicating its ability to leverage informative 3D features for accurate alignment and classification.
\begin{figure}[H]
    \centering
    \includegraphics[width=\linewidth]{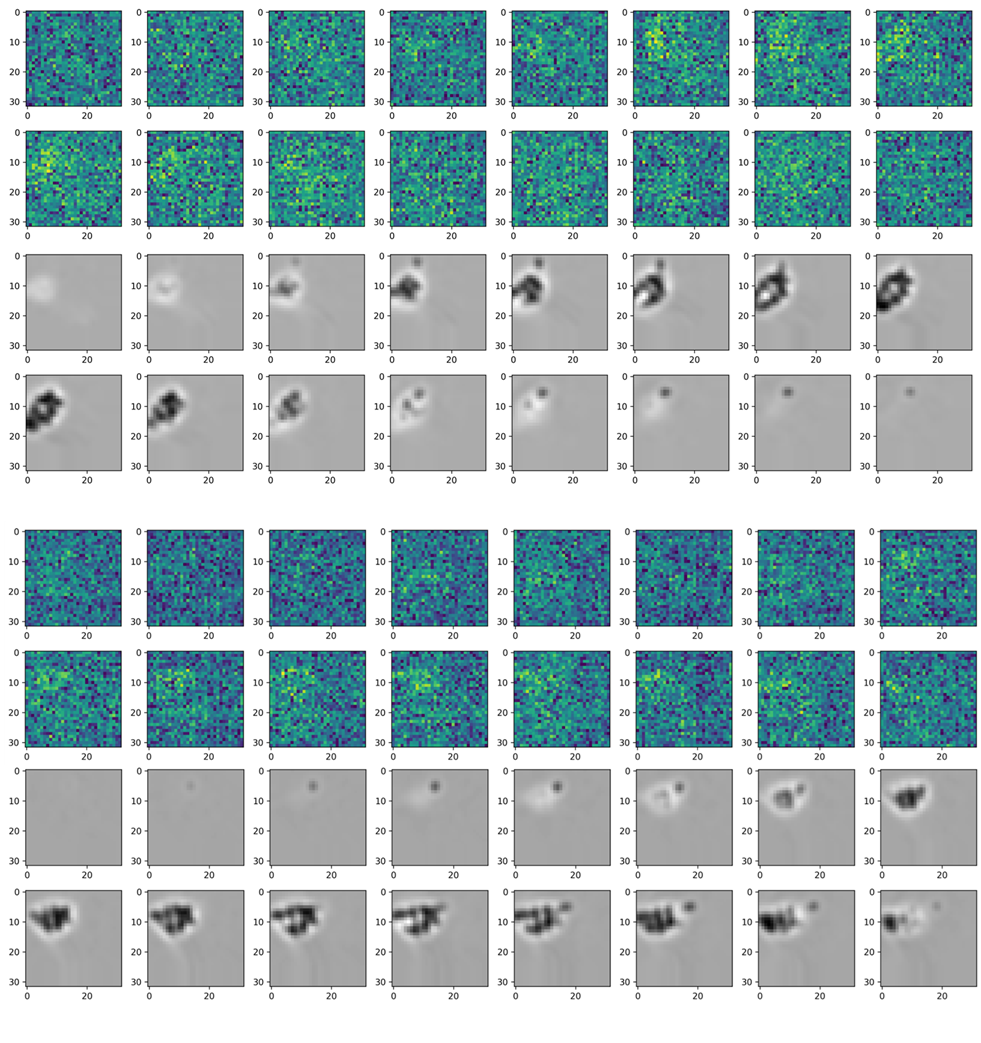}
    \caption{Visualization of gradient and its corresponding subtomogram slice. The highlighted region, which receives greater attention from the model, corresponds to the location of the particle.}
    \label{fig:grad}
\end{figure}

\clearpage
\section{Background of Cryo-Electron Tomography}

\subsection{Introduction of Cryo-ET}
Cryo-electron tomography (cryo-ET) is a sophisticated imaging method capable of producing detailed three-dimensional reconstructions of biological specimens at nanometer resolution \citep{Doerr2017, cryoET, kuhlbrandt2014resolution}. In cryo-ET, samples are first rapidly vitrified at temperatures below –150°C, which preserves their native cellular structures without the distortions introduced by chemical fixation or dehydration \citep{PFEFFER2018Unravelling}. During imaging, the vitrified specimen is incrementally tilted under an electron beam through a defined angular range (typically ±60° with 1–3° increments), creating a tilt series of two-dimensional projection images \citep{baker2010overview, CASTANODIEZ201968}. These projection images are computationally reconstructed into a three-dimensional tomogram, representing a comprehensive spatial view of the cellular environment \citep{Han2017, MASTRONARDE2017102}. Subsequently, macromolecular complexes are identified and localized as subtomograms, which are extracted and classified into structurally homogeneous groups through subtomogram averaging, significantly enhancing structural details and interpretability (Fig.~\ref{fig:cryoet}). This workflow makes cryo-ET exceptionally valuable for visualizing and understanding macromolecular architecture in its authentic cellular context, underpinning many advancements in in situ structural biology \citep{chen2019complete, Bohning2021}.

\begin{figure}[H]
    \centering
    \includegraphics[width=\linewidth]{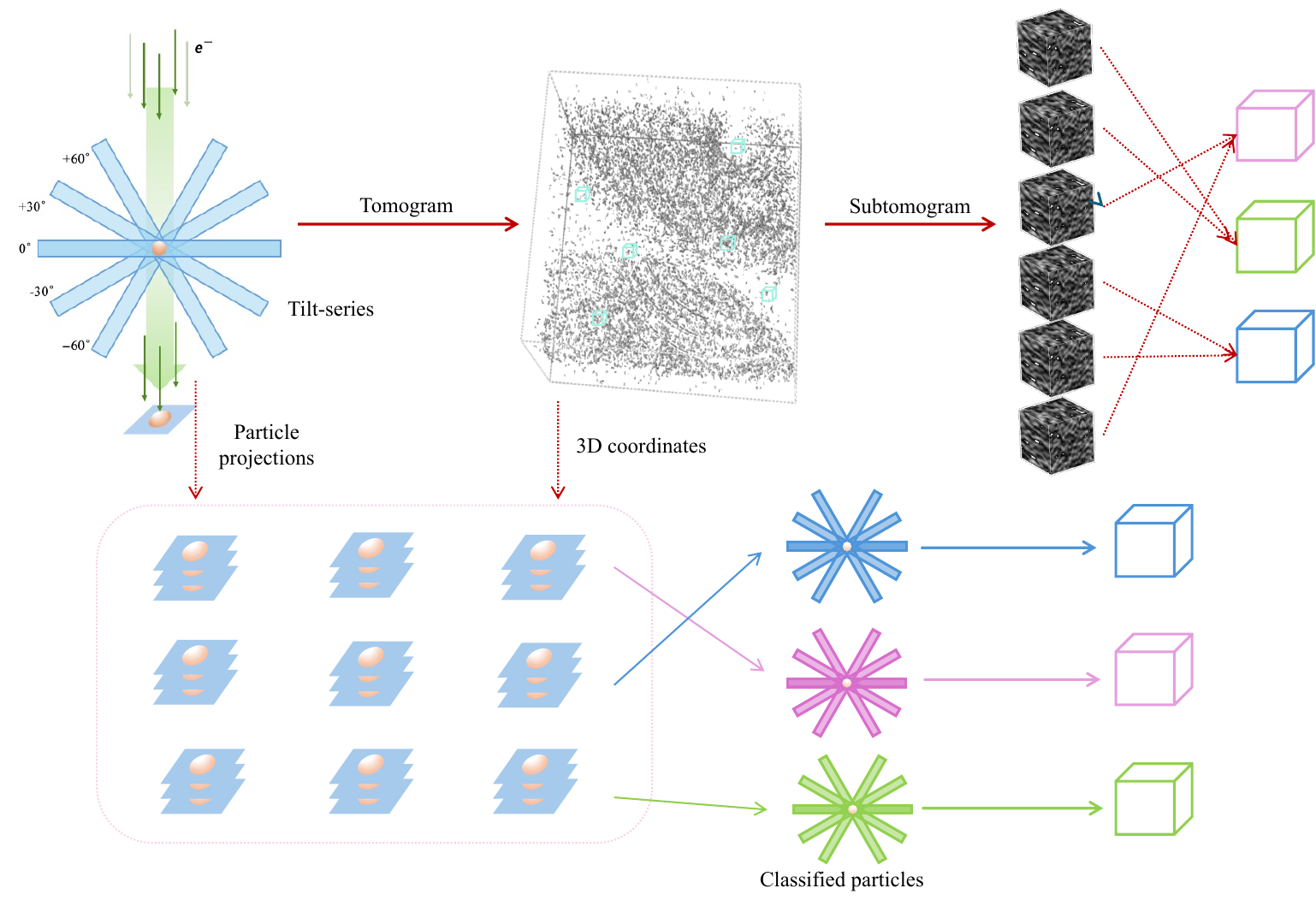}
    \caption{Overview of the cryo-ET workflow: tilt series acquisition, tomogram reconstruction, subtomogram extraction and classification.}
    \label{fig:cryoet}
\end{figure}

\subsection{Introduction of Cryo-ET Subtomogram Alignment and Averaging}
Subtomogram alignment is an essential computational step in cryo-electron tomography (cryo-ET) that enables detailed structural analysis of macromolecules within their native cellular environments \citep{CastanoDiez2019, PFEFFER2018Unravelling, baker2010overview}. In this process, numerous structurally similar particles, termed subtomograms, are extracted from three-dimensional tomograms and aligned to generate averaged structures with significantly enhanced resolution and signal-to-noise ratio \citep{kim2023computational}. However, subtomogram alignment presents substantial computational complexity arising from several unique factors inherent to cryo-ET data: firstly, the alignment necessitates precise resolution of both translational and rotational parameters in three-dimensional space \citep{kovacs2002fast}; secondly, cryo-ET datasets exhibit inherently lower signal-to-noise ratios compared to conventional single-particle cryo-EM, complicating accurate alignment \citep{danev2010zernike}; and finally, the analysis involves handling large volumetric datasets, demanding significant computational resources \citep{turk2020promise}. To address these challenges, alignment algorithms typically utilize advanced cross-correlation-based strategies, including exhaustive angular searches or optimized rotational matching algorithms \citep{zeng2020gum, xu2012high}. Alignment proceeds iteratively, refining particle orientations and translations with each cycle until an optimal averaged structure is obtained \citep{zeng2020gum, JimNet}. High-quality subtomogram alignment is therefore fundamental for extracting biologically meaningful structural insights and elucidating macromolecular assemblies directly within their physiological contexts \citep{chen2019complete}.

\begin{figure}[h!]
    \centering
    \includegraphics[width=\linewidth]{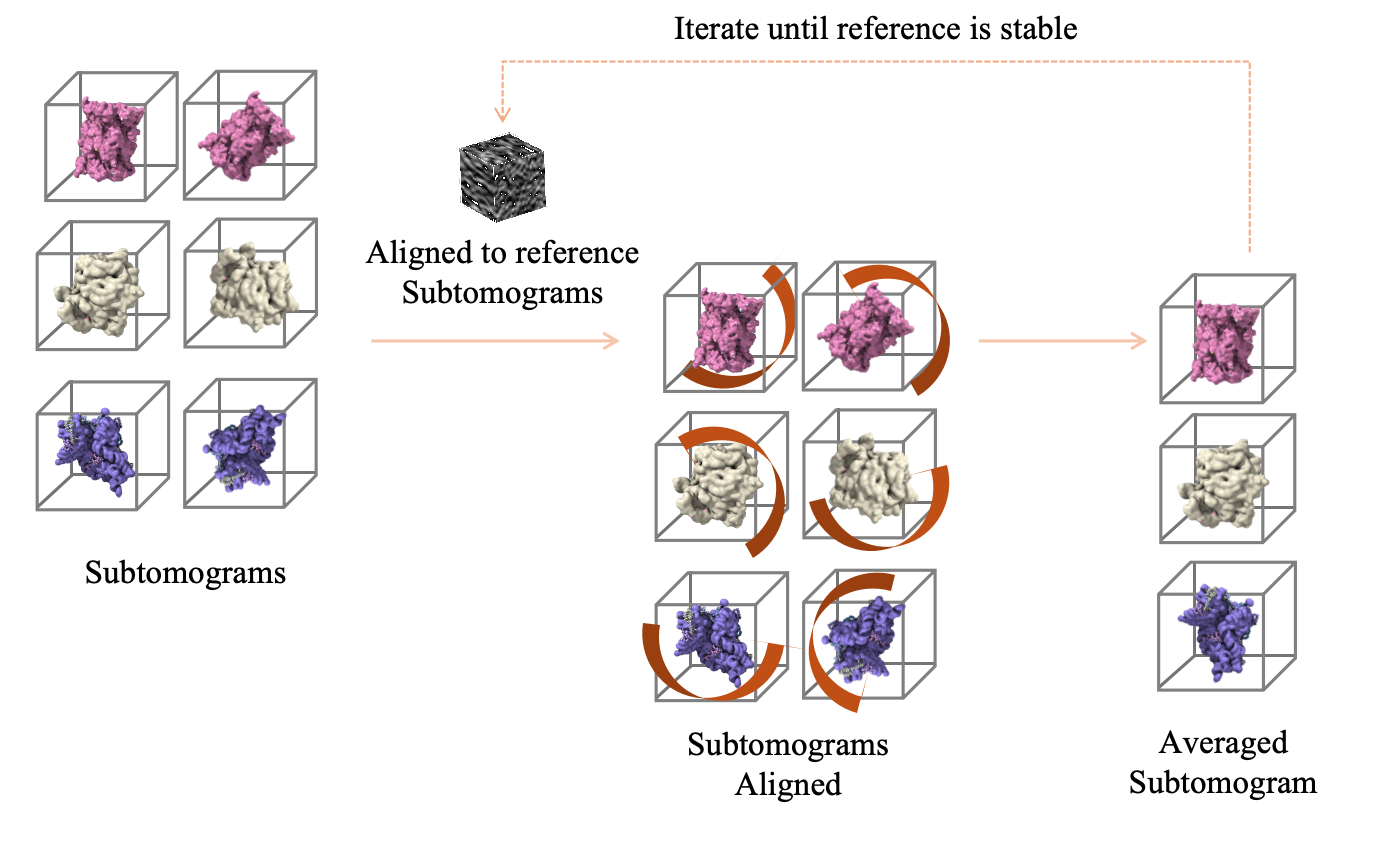} 
    \caption{Workflow of subtomogram alignment and averaging.}
    \label{fig:align_avg}
\end{figure}

\section{Use of Large Language Models (LLMs)}
During the preparation of this paper, we used LLMs to assist with grammar checking, language polishing, and improving readability. The model was not used for generating novel research ideas, experimental design, data analysis, or drawing conclusions. All content and claims in the paper are the sole responsibility of the authors.

\end{document}